\def\eqref#1{equation~\ref{#1}}
\def\1{\bm{1}}
\def\vtheta{{\bm{\theta}}}
\def\vh{{\bm{h}}}
\def\vs{{\bm{s}}}
\def\vx{{\bm{x}}}
\def\vz{{\bm{z}}}
\def\mA{{\bm{A}}}
\def\mB{{\bm{B}}}
\def\mC{{\bm{C}}}
\def\mM{{\bm{M}}}
\def\mN{{\bm{N}}}
\DeclareMathAlphabet{\mathsfit}{\encodingdefault}{\sfdefault}{m}{sl}
\SetMathAlphabet{\mathsfit}{bold}{\encodingdefault}{\sfdefault}{bx}{n}
\def\gG{{\mathcal{G}}}
\newcommand{\R}{\mathbb{R}}
\newtheorem{definition}{\bf Definition}
\newtheorem{assumption}{\bf Assumption}  
\newtheorem{thm}{\bf Theorem}        
\newtheorem{corollary}{\bf Corollary}
\newtheorem{prop}{\bf Proposition}
\title{Towards Generalizable Reinforcement Learning via Causality-Guided Self-Adaptive Representations}
\author{%
  Yupei Yang\textsuperscript{1},~
  Biwei Huang\textsuperscript{2}\thanks{corresponding author},~
  Fan Feng\textsuperscript{2,3},~
  Xinyue Wang\textsuperscript{2}, ~
  Shikui Tu\textsuperscript{1}\footnotemark[1], ~
  Lei Xu\textsuperscript{1} \\
  \textsuperscript{1}Shanghai Jiao Tong University, 
  \textsuperscript{2}University of California San Diego, \\
  \textsuperscript{3}Mohamed bin Zayed University of Artificial Intelligence \\
  \texttt{\{yupei\_yang, tushikui, leixu\}@sjtu.edu.cn},\\
  \texttt{\{bih007, xiw159\}@ucsd.edu},
  \texttt{ffeng1017@gmail.com}
}
\begin{document}

\maketitle

\begin{abstract}
General intelligence requires quick adaptation across tasks. While existing reinforcement learning (RL) methods have made progress in generalization, they typically assume only distribution changes between source and target domains. In this paper, we explore a wider range of scenarios where not only the distribution but also the environment spaces may change. For example, in the CoinRun environment, we train agents from easy levels and generalize them to difficulty levels where there could be new enemies that have never occurred before. To address this challenging setting, we introduce a \textit{causality-guided self-adaptive representation}-based approach, called CSR, that equips the agent to generalize effectively across tasks with evolving dynamics. Specifically, we employ causal representation learning to characterize the latent causal variables within the RL system. Such compact causal representations uncover the structural relationships among variables, enabling the agent to autonomously determine whether changes in the environment stem from distribution shifts or variations in space, and to precisely locate these changes. We then devise a three-step strategy to fine-tune the causal model under different scenarios accordingly. Empirical experiments show that CSR efficiently adapts to the target domains with only a few samples and outperforms state-of-the-art baselines on a wide range of scenarios, including our simulated environments, CartPole, CoinRun and Atari games.
\end{abstract}

\section{Introduction}
In recent years, deep reinforcement learning (DRL, \citep{Arulkumaran-2017-drl}) has made incredible progress in various domains \citep{silver-2016-mastering,mirowski-2016-learning}. Most of these works involve learning policies separately for fixed tasks. However, many practical scenarios often have a sequence of tasks with evolving dynamics. Instead of learning each task from scratch, humans possess the ability to discover the similarity between tasks and quickly generalize learned skills to new environments \citep{pearl-2018-book, legg2007universal}. Therefore, it is essential to build a system where agents can also perform reliable and interpretable generalizations to advance toward general artificial intelligence \citep{Kirk-2023-survey}.

A straightforward solution is policy adaptation, i.e., leveraging the strategies developed in source tasks and adapting them to the target task as effective as possible \citep{zhu-2023-transfer}. Approaches along this line include, but are not limited to, fine-tuning \citep{mesnil-2012-unsupervised}, reward shaping \citep{harutyunyan-2015-expressing}, importance reweighting \citep{tirinzoni-2019-transfer}, learning robust policies \citep{taylor-2007-transfer,zhang-2020-learning}, sim2real \citep{peng-2020-learning}, adaptive RL \citep{huang-2021-adarl}, and subspace building \citep{gaya-2022-building}. However, these algorithms often rely on an assumption that all the source and target domains have the same state and action space while ignoring the out-of-distribution scenarios which are more common in practice \citep{taylor-2009-transfer,zhou-2022-domain}.

\begin{figure}
    \centering
    \subfigure[]{
    \includegraphics[width=0.25\linewidth]{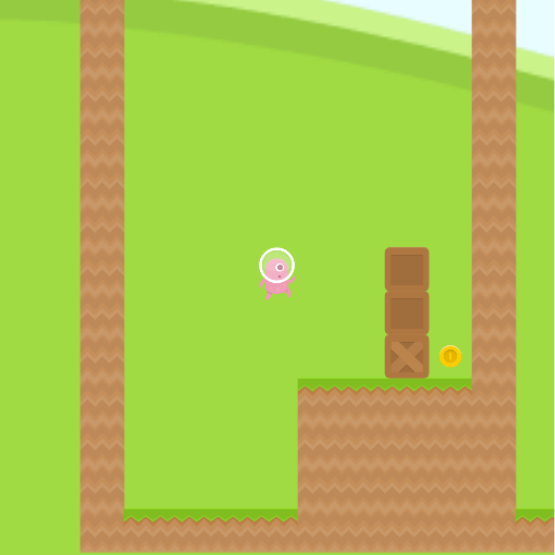}\label{fig:env_1}
}
    \subfigure[]{
    \includegraphics[width=0.25\linewidth]{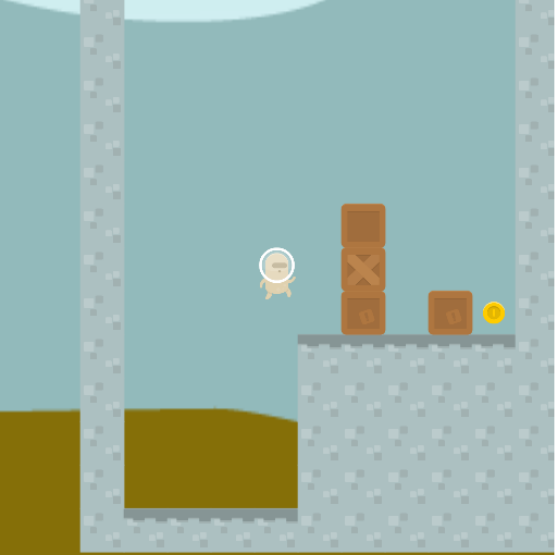}\label{fig:env_2}
}
    \subfigure[]{
    \includegraphics[width=0.448\linewidth]{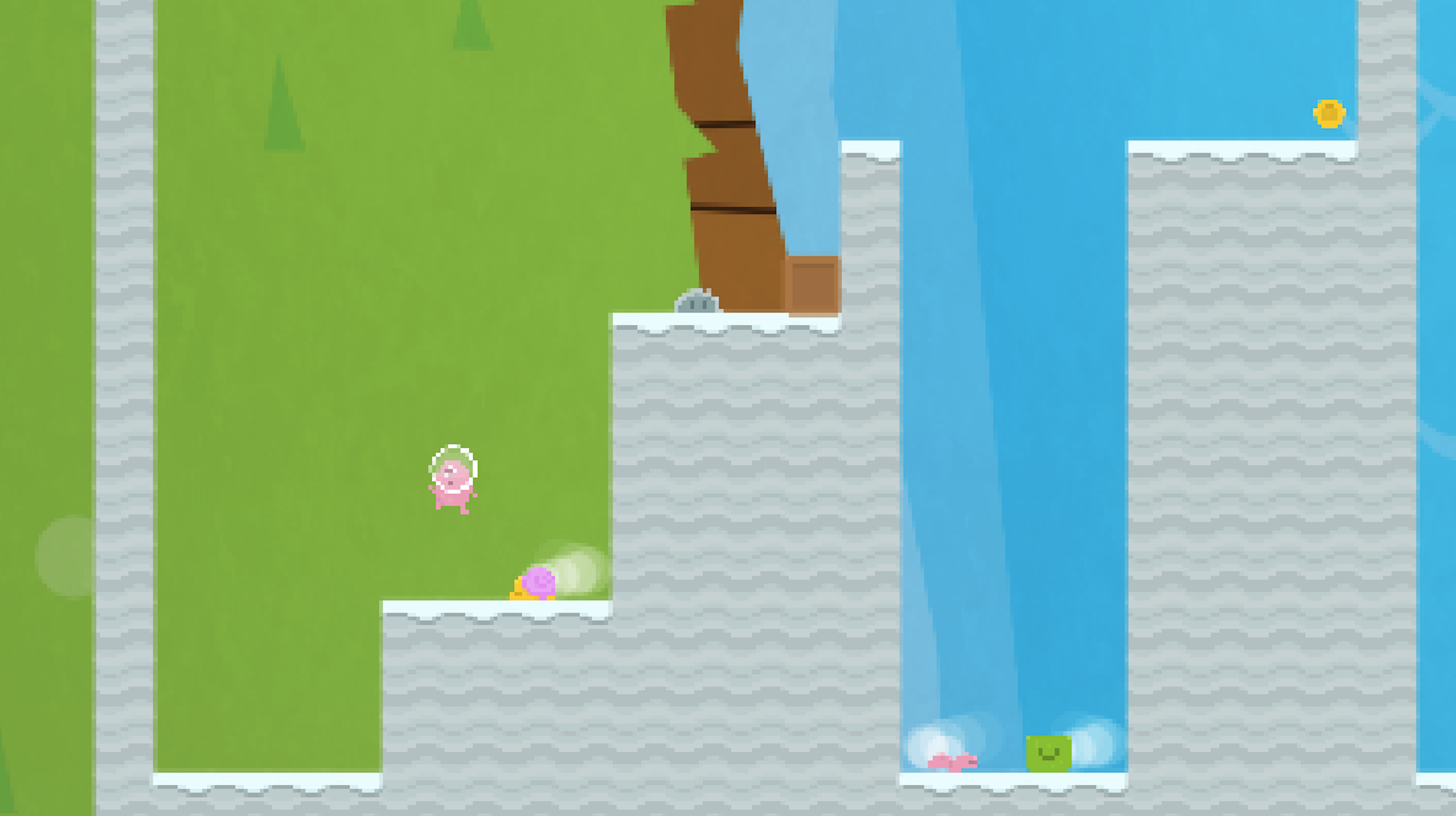}\label{fig:env_3}
}
    \caption{\small{\textbf{Environmental changes may or may not necessitate retraining RL agents,} as illustrated on different variations of CoinRun. Changes in the amount and shape of obstacles from (a) to (b) do not prevent the agent from completing the task, while deadly holes and enemies introduced in (c) necessitate retraining.}} 
    \label{fig:game_illu}
\end{figure}

In this paper, we expand the application of RL beyond its traditional confines by exploring its adaptability in broader contexts. Specifically, our investigation focuses on policy adaptation in two distinct scenarios:
\begin{enumerate}[leftmargin=12pt,topsep=0pt,itemsep=0pt]
    \item \textit{Distribution shifts:} the source and target data originate from the same environment space but exhibit differences in their distributions, e.g. changes in transition, observation or reward functions;
    \item \textit{State/Action space expansions:} the source and target data are collected from different environment spaces, e.g. they differ in the latent state or action spaces.
\end{enumerate}
These scenarios frequently occur in practical settings. To illustrate, we reference the popular CoinRun environment \citep{cobbe-2019-quantifying}. As shown in Fig. \ref{fig:game_illu}, the goal of CoinRun is to overcome various obstacles and collect the coin located at the end of the level. The game environment is highly dynamic, with variations in elements such as background colors and the number and shape of obstacles (see Fig. \ref{fig:env_1} and Fig. \ref{fig:env_2}) --- this exemplifies distribution shifts. Additionally, CoinRun offers multiple difficulty levels. In lower difficulty settings, only a few stationary obstacles are present, while at higher levels, a variety of enemies emerge and attack the agents (see Fig. \ref{fig:env_3}). To prevail, agents must learn to adapt to these new enemies --- this scenario illustrates state/action space expansions.

We propose a \textbf{C}ausality-guided \textbf{S}elf-adaptive \textbf{R}epresentation-based approach, termed CSR, to address this problem for partially observable Markov decision processes (POMDPs). Considering that the raw observations are often reflections of the underlying state variables, we employ causal representation learning \citep{scholkopf-2021-toward,huang-2022-action,wang-2022-causal} to identify the latent causal variables in the RL system, as well as the structural relationships among them. By leveraging such representations, we can automatically determine what and where the changes are. To be specific, we first augment the world models \citep{ha-2018-world,hafner-2020-mastering} by including a task-specific change factor $\vtheta$ to capture distribution shifts, e.g., $\vtheta$ can characterize the changes in observations due to varying background colors in CoinRun. If the introduction of $\vtheta$ can well explain the current observation, it is enough to keep previously learned causal variables and merely update a few parameters in the causal model. Otherwise, it implies that the current task differs from previously seen ones in the environment spaces, we then expand the causal graph by adding new causal variables and re-estimate the causal model. Finally, we remove some irrelevant causal variables that are redundant for policy learning according to the identified causal structures. This three-step strategy enables us to capture the changes in the environments for both scenarios in a self-adaptive manner and make the most of learned causal knowledge for low-cost policy transfer. Our key contributions are summarized below:
\begin{itemize}[leftmargin=12pt,topsep=0pt,itemsep=0pt]
  \item We investigate a broader scenario towards generalizable reinforcement learning, where changes occur not only in the distributions but also in the environment spaces of latent variables, and propose a causality-guided self-adaptive representation-based approach to tackle this challenge.
  \item To characterize both the causal representations and environmental changes, we construct a world model that explicitly uncovers the structural relationships among latent variables in the RL system.
  \item By leveraging the compact causal representations, we devise a three-step strategy that can identify where the changes of the environment take place and add new causal variables autonomously if necessary. With this self-adaptive strategy, we achieve low-cost policy transfer by updating only a few parameters in the causal model.
\end{itemize}

\section{World Model with Causality-Guided Self-adaptive Representations}
We consider generalizable RL that aims to effectively transfer knowledge across tasks, allowing the model to leverage patterns learned from a set of source tasks while adapting to the dynamics of a target task. Each task $\mathcal{M}_i$ is characterized by $\langle \mathcal{S}_{i}, \mathcal{A}_{i}, \mathcal{O}_{i}, R_{i}, T_{i}, \phi_{i}, \gamma_{i} \rangle$, where $\mathcal{S}_{i}$ represents the latent state space, $\mathcal{A}_{i}$ is the action space, $\mathcal{O}_{i}$ is the observation space, $R_{i} \colon \mathcal{S}_{i} \times \mathcal{A}_{i} \rightarrow \R$ is the reward function, $T_{i} \colon \mathcal{S}_{i} \times \mathcal{A}_{i} \rightarrow P(\mathcal{S}_{i})$ is the transition function, $\phi_{i} \colon \mathcal{S}_{i} \times \mathcal{A}_{i} \rightarrow P(\mathcal{O}_{i})$ is the observation function, and $\gamma_i$ is the discount factor. By leveraging experiences from previously encountered tasks $\{\mathcal{M}_j\}_{j=1}^{i-1}$, the objective is to adapt the optimal policy $\pi^\star$ that maximizes cumulative rewards to the target task $\mathcal{M}_i$. Here, we consider tasks arriving incrementally in the sequence $\langle \mathcal{M}_{1}, \ldots, \mathcal{M}_{N} \rangle$ over time periods $\langle \mathcal{T}_{1}, \ldots, \mathcal{T}_{N} \rangle$. In each period $\mathcal{T}_i$, only a replay buffer containing sequences $\{\langle o_t, a_t, r_t \rangle \}_{t=1}^{\mathcal{T}_i}$ from the current task $\mathcal{M}_i$ is available, representing an online setting. While tasks can also be presented offline with predefined source and target tasks, the online framework more closely mirrors human learning, making it a crucial step towards general intelligence.

In this section, we first construct a world model that explicitly embeds the structural relationships among variables in the RL system, and then we show how to encode the changes in the environment by introducing a domain-specific embedding into the model and leveraging it for policy adaptation.

\begin{figure}[t]
    \centering
    \includegraphics[width=\linewidth]{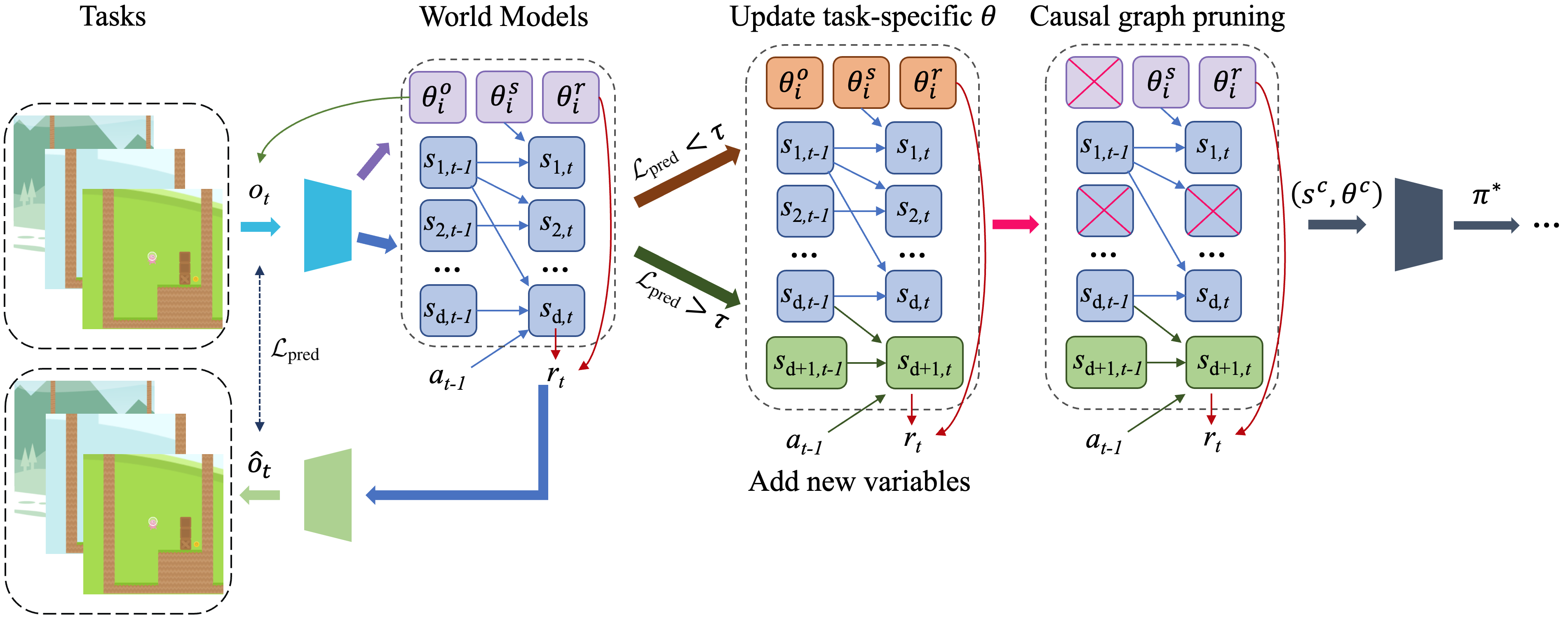}
    \caption{\small{\textbf{Efficient policy adaptation through the CSR framework.} For each target task, we first use the prediction error, $\mathcal{L}_{\text{pred}}$, to determine whether it involves distribution shifts or space shifts. We then adjust the model accordingly by updating the task-specific change factor $\vtheta_i$, or by adding new variables. Finally, we conduct causal graph pruning that removes variables unnecessary for the current task. Based on such compact causal representations, we can efficiently implement policy adaptation in a self-adaptive manner.}}
    \label{fig:framework}
\end{figure}

\subsection{Augmenting World Models with Structural Relationships}\label{sec:causal}
In POMDPs, extracting latent state representations from high-dimensional observations is crucial for enhancing the efficiency of the decision-making process. World models address this challenge by learning a generative model, which enables agents to predict future states through imagination. These methods typically consider all extracted representations of state variables equally important for policy learning, thereby utilizing all available information regardless of its relevance to the current task. However, real-world tasks often require a focus on specific information. For instance, in the Olympics, swimming speed is crucial in competitive swimming events, but it is less important in synchronized swimming, where grace and precision are prioritized. Hence, it is essential for agents to understand and focus on task-specific aspects to facilitate effective knowledge transfer by selectively using minimal sufficient information.

To this end, we adopt a causal state representation learning approach that not only enables us to extract state representations, but also to discover structural relationships over the variables. Suppose we observe sequences $\{\langle o_t, a_t, r_t \rangle \}_{t\in\mathcal{T}_i}$ for task $\mathcal{M}_i$, and denote the underlying causal latent states by $s_t \in \mathcal{S}_i$, we formulate the world model into:
\begin{equation}
\left\{
    \begin{array}{lll}
        {\text{observation model:}} &{p_{\phi}(o_t \mid D^{\vs\to o} \odot \vs_t)} \\
        {\text{reward model:}}  &{p_{\phi}(r_t \mid D^{\vs\to r} \odot \vs_t)}\\
        {\text{transition model:}}  &{p_{\beta}(s_{k, t} \mid D_k^{\vs\to \vs} \odot \vs_{t-1}, D_k^{a \to \vs} \odot a_{t-1}), \text{ for } k = 1, \ldots, d}\\
        {\text{representation model:}} &{q_{\alpha}(\vs_t \mid \vs_{t-1}, a_{t-1}, o_t),}
    \end{array}
\right.
\label{eq:old_model}
\end{equation}
where $\vs_t = (s_{1,t}, \cdots, s_{d,t})$, $\odot$ is the element-wise product, and $D^{\cdot \to \cdot}$ denote binary masks indicating structural relationships over variables. For instance, if the $j$-th element of $D^{\vs\to o} \in \{0, 1\}^{d\times1}$ in Eq. (\ref{eq:old_model}) is $1$, it indicates a causal edge from the state variable $\vs_{j,t}$ to the current observation signal $o_t$, i.e., $\vs_{j,t}$ is one of the parents of $o_t$. Consequently, we are supposed to retain $\vs_{j,t}$ for the observation model. Otherwise, if $D^{\vs\to o}_j = 0$, then $\vs_{j,t}$ should be removed from the causal model. Section \ref{sec:pruning} further discusses the estimation procedures for the structural matrices $D$, as well as the corresponding pruning process of the causal model. By learning such causal representations, we can explicitly characterize the decisive factors within each task. However, given that the underlying dynamics often vary across tasks, merely identifying which variables are useful is insufficient. We must also determine how these variables change with the environment for better generalization.

\subsection{Characterization of Environmental Changes in a Compact Way}
To address the above need, we now shift our focus to demonstrating how the world model can be modified to ensure robust generalization across the two challenging scenarios, respectively.

\textbf{Characterization of Distribution Shifts. }It is widely recognized that changes in the environmental distribution are often caused by modifications in a few specific factors within the data generation process \citep{ghassami-2018-multi, scholkopf-2021-toward}. In the CoinRun example, such shifts might be due to alterations in background colors ($\varrho$), while other elements remain constant. Therefore, to better characterize these shifts, we introduce a domain-specific change factor, $\vtheta_i$, that captures the variations across different domains. Concurrently, we leverage $\vs_t$ to identify the domain-shared latent variables of the environments. This leads us to reformulate Eq. (\ref{eq:old_model}) as follows:
\begin{equation}
\left\{
    \begin{array}{lll}
        {\text{observation model:}} &{p_{\phi}(o_t \mid D^{\vs\to o} \odot \vs_t, D^{\vtheta_i \to o} \odot \theta_i^o)} \\
        {\text{reward model:}}  &{p_{\phi}(r_t \mid D^{\vs\to r} \odot \vs_t, D^{\vtheta_i \to r} \odot \theta_i^r)}\\
        {\text{transition model:}}  &{p_{\beta}(s_{k, t} \mid D_k^{\vs\to \vs} \odot \vs_{t-1}, D_k^{\vtheta_i \to \vs} \odot \theta_i^{\vs}, D_k^{a \to \vs} \odot a_{t-1}), \text{ for } k = 1, \ldots, d}\\
        {\text{representation model:}} &{q_{\alpha}(\vs_t \mid \vs_{t-1}, \vtheta_i, a_{t-1}, o_t),} 
    \end{array}
\right.
\label{eq:model}
\end{equation}
where $\vtheta_i = \{\theta_i^o, \theta_i^r, \theta_i^{\vs}\}$ captures essential changes in the observation model, reward model, and transition model, respectively. In CoinRun, this enables us to make quick adaptations by re-estimating $\vtheta_i^o=\varrho$ in the target task. We assume that the value of $\vtheta_i$, as well as the structural matrices $D$, remains constant within the same task, but may differ across tasks.

\textbf{Characterization of State/Action Space Expansions.} In scenarios where the state or action space expands, we are supposed to add new variables to the existing causal model. The key challenge here is to determine whether the changes stem from distribution shifts or space variations. This dilemma can be addressed using $\vtheta_i$: If the introduction of $\vtheta_i$ can well capture the changes in the current observations, it implies that previous tasks $\{\mathcal{M}_j\}_{j=1}^{i-1}$ and $\mathcal{M}_i$ share the same causal variables but exhibit sparse changes in some certain parameters (i.e., distribution shifts). So we only need to store the specific part $\vtheta_i$ of the causal model for $\mathcal{M}_i$. If it is not the case, then the causal graph must be expanded by adding new causal variables to explain the features unique to $\mathcal{M}_i$. 

\textbf{Benefits of Explicit Causal Structure. }Upon detecting changes in the environment, we can further leverage the structural constraints $D$ to prune the causal graph. Essentially, we temporarily disregard variables that are irrelevant to the current task. However, for subsequent tasks, we reassess the structural relationships among the variables, enabling potential reuse. This approach allows us to not only preserve previously acquired information but also maintain the flexibility needed to customize the minimal sufficient state variables for each task. Details of this strategy are given in Section \ref{sec:method}.

\subsection{Identifiablity of Underlying World Models}
In this section, we provide the identifiability theory under different scenarios in this paper: (1) For source task $\mathcal{M}_1$, Theorem \ref{thm:task1} establishes the conditions under which the latent variable $\vs_t$ and the structural matrices $D$ can be identified; (2) For the target task with distribution shifts, Theorem \ref{thm:task2} outlines the identifiability of the domain-specific factor $\vtheta_i$ in linear cases; (3) For the target task with state space shifts, Theorem \ref{thm:task3} specifies the identifiability of the newly added state variables $\vs^{\text{add}}_t$; (4) For the target task that includes both distribution shifts and state space shifts, Corollary \ref{cly:task4} demonstrates the identifiability of both $\vtheta_i$ and $\vs^{\text{add}}_t$. The proofs are presented in Appendix \ref{ape:proofs}. We also discuss the possibility and challenges of establishing the identifiability of $\theta_i^{\vs}$ in nonlinear cases in Appendix \ref{sec:discussion}, followed by empirical results where the learned $\hat{\theta}_i^{\vs}$ demonstrates a monotonic correlation with the true values. Below we first introduce the definition of component-wise identifiability, related to \citet{yao-2021-learning}, and then we present the theoretical results.
\begin{definition}
    (Component-wise identifiability). Let $\hat{\vs}_t$ be the estimator of the latent variable $\vs_t$. Suppose there exists a mapping $h$ such that $\vs_t = h(\hat{\vs}_t)$. We say $\vs_t$ is component-wise identifiable if $h$ is an invertible, component-wise function.
\end{definition}
\begin{thm}\label{thm:task1}
    (Identifiablity of world model in Eq. (\ref{eq:old_model})). Assume the data generation process in Eq. (\ref{eq:main_data_gen}). If the following conditions are satisfied, then $\vs_t$ is component-wise identifiable: (1) for any $k_1, k_2 \in \{1, \ldots, d\}$ and $k_1 \neq k_2$, $\hat{\vs}_{k_1, t}$ and $\hat{\vs}_{k_2, t}$ are conditionally independent given $\hat{\vs}_{t-1}$; (2) for every possible value of $\vs_t$, the vector functions defined in Eq. (\ref{eq:vector}) are linearly independent. Furthermore, if the Markov condition and faithfulness assumption hold, then the structural matrices $D$ are also identifiable:
    \begin{equation}
        \left\{
        \begin{array}{cll}
            \left[o_t, r_{t+1}\right] &{=} &{g(\vs_t, \epsilon_t)} \\ 
            {\vs_{t}} &{=} &{g^{\vs}(\vs_{t-1}, a_{t-1}, \epsilon_{t}^{\vs}),} 
        \end{array}
        \right.
        \label{eq:main_data_gen}
    \end{equation}
    where
    \begin{equation}
        \left\{
        \begin{array}{lll}
            {o_t} &{=} &{g^o(\vs_t, \epsilon^o_t)} \\ 
            {r_{t+1}} &{=} &{g^r(\vs_t, \epsilon_{t+1}^r)}.
        \end{array}
        \right.
    \end{equation}
    The $\epsilon_t, \epsilon_{t}^{\vs}, \epsilon_t^o, \epsilon_{t+1}^r$ terms are corresponding independent and identically distributed (i.i.d.) random noises. Following \citet{kong-2023-identification}, here we only assume that the global mapping $g$ is invertible.
\end{thm}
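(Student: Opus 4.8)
The plan is to treat this as a nonlinear independent-component-analysis problem in the spirit of \citet{yao-2021-learning} and \citet{kong-2023-identification}, leveraging the assumption that the global mixing $g$ is invertible. Since the fitted model and the true data-generating process induce the same distribution over the observed pair $[o_t, r_{t+1}]$, and both $g$ and its estimate $\hat g$ are invertible, I would first define the composite map $h = g^{-1}\circ\hat g$, which is a diffeomorphism satisfying $\vs_t = h(\hat{\vs}_t)$ exactly as in the definition of component-wise identifiability. The whole task then reduces to proving that $h$ is component-wise, i.e. that its Jacobian is a generalized permutation matrix with exactly one nonzero entry per row and column.

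First I would record the conditional factorizations of the two latent processes. Because the transition noise $\epsilon^{\vs}_t$ has mutually independent coordinates, each $s_{k,t}$ is driven only by its own noise, so the true latents satisfy $\log p(\vs_t\mid\vs_{t-1},a_{t-1})=\sum_k\log p(s_{k,t}\mid\vs_{t-1},a_{t-1})$; Condition (1) imposes the identical factorization on $\hat{\vs}_t$. I would then push one factorization through $h$ via the change-of-variables formula, so that the two sums of log-densities differ only by the log-determinant of the Jacobian of $h$, and differentiate the resulting identity with respect to two distinct estimated coordinates $\hat s_{k_1,t}$ and $\hat s_{k_2,t}$. Conditional independence forces every such mixed second-order derivative to vanish, which yields, for each pair $(k_1,k_2)$, a linear relation whose coefficients are products of Jacobian entries and whose building blocks are the first and second derivatives of the component log-densities.

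The heart of the argument, and the step I expect to be hardest, is converting these vanishing mixed derivatives into the conclusion that the off-diagonal Jacobian products are identically zero. This is precisely where Condition (2) enters: the derivative-based vector functions assembled in Eq. (\ref{eq:vector}) are assumed linearly independent at every value of $\vs_t$, so the only way the weighted relations can hold for all arguments is for the interfering off-diagonal terms to cancel, leaving at most one nonzero entry per Jacobian row; invertibility of $h$ then upgrades this to a genuine permutation-plus-componentwise structure by a dimension-counting argument. Verifying that the constructed family of functions is truly linearly independent, rather than accidentally degenerate, is the delicate part. Once $\vs_t = h(\hat{\vs}_t)$ with $h$ component-wise is secured, the recovered coordinates inherit exactly the conditional (in)dependence relations of the true latents; then, under the Markov condition and faithfulness, each such relation corresponds to a d-separation statement, and because the temporal ordering fixes all edge orientations I would read off every entry of $D^{\vs\to o}$, $D^{\vs\to r}$, and the transition masks $D_k^{a\to\vs}$, $D_k^{\vs\to\vs}$ directly from the observed dependence pattern, giving identifiability of $D$.
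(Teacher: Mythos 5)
Your treatment of the noiseless core --- conditional independence of the components of $\hat{\vs}_t$ given $\hat{\vs}_{t-1}$, vanishing mixed second derivatives of the transition log-density, and then invoking the linear independence of the vector functions in Eq. (\ref{eq:vector}) to force the off-diagonal Jacobian products of $h$ to zero --- is essentially the same route the paper takes (its Step 2, following \citet{yao-2022-temporally}), and your d-separation sketch for recovering $D$ matches what the paper delegates to \citet{huang-2021-adarl}. The genuine gap is at your very first move: you define $h = g^{-1}\circ \hat g$ and assert $\vs_t = h(\hat{\vs}_t)$ directly from equality of the observed distributions. But in Eq. (\ref{eq:main_data_gen}) the observed pair is a \emph{noisy} function of the latents, $[o_t, r_{t+1}] = g(\vs_t, \epsilon_t)$. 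Equality of the distributions of $[o_t, r_{t+1}]$ under the true and estimated models is an equality of distributions of noise-corrupted mixtures; it does not by itself yield any deterministic, invertible map between $\hat{\vs}_t$ and $\vs_t$, so the object $h$ on which your entire derivative argument operates is not yet defined.

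This is precisely what the paper's Step 1 supplies and your proposal omits. Writing $y_t = [o_t, r_{t+1}]$ and $p_g(y_t \mid \vs_t) = p_{\epsilon_t}(y_t - g(\vs_t))$, the paper expresses both observed marginals as convolutions of the noise density with the pushforward (``noise-free'') latent-mixture distributions, takes Fourier transforms, and cancels the noise characteristic function --- which requires the additional assumption that its zero set has measure zero --- to conclude that the two noise-free distributions coincide. Only after this deconvolution step (following \citet{khemakhem-2020-variational}) is the problem reduced to the noiseless setting in which $h = g^{-1}\circ\hat g$ is a well-defined diffeomorphism on latents and your change-of-variables computation can begin; as written, your proof covers only the case $\epsilon_t \equiv 0$. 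A secondary imprecision: the relations you extract from the vanishing mixed second derivatives only involve the vectors of Eq. (\ref{eq:vector}) after one further differentiation with respect to each component $s_{l,t-1}$ of the previous state; it is these second- and third-order cross-derivatives of $\zeta_{k,t}$ (not ``first and second derivatives of the component log-densities'') whose linear independence is assumed in condition (2), so that extra differentiation step should be made explicit.
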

\begin{thm}\label{thm:task2}
    (Identifiability of $\vtheta_i$ in Eq. (\ref{eq:model})). Assume the data generation process in Eq. (\ref{eq:main_data_dist}), where the state transitions are linear and additive. If the process encounters distribution shifts and $\vs_t$ has been identified according to Theorem \ref{thm:task1}, then $\vtheta_i$ are component-wise identifiable:
    \begin{equation}
        \left\{
        \begin{array}{cll}
            \left[o_t, r_{t+1}\right] &{=} &{g(\vs_t, \theta_i^o, \theta_i^r, \epsilon_t)} \\ 
            {\vs_{t}} &{=} &{\mA \vs_{t-1} + \mB a_{t-1} + \mC \theta_i^{\vs} + \epsilon_t^{\vs},} 
        \end{array}
        \right.
        \label{eq:main_data_dist}
    \end{equation}
    where
    \begin{equation}
        \left\{
        \begin{array}{lll}
            {o_t} &{=} &{g^o(\vs_t, \theta_i^o, \epsilon^o_t)} \\ 
            {r_{t+1}} &{=} &{g^r(\vs_t, \theta_i^r, \epsilon_{t+1}^r)}.
        \end{array}
        \right.
    \end{equation}
    Following \citet{yao-2021-learning}, here we assume that $\mA$ is full rank, and $\mC$ is full column rank. We further assume that $\vs_0 = \hat{\vs}_0$. Moreover, if the Markov condition and faithfulness assumption hold, the structural matrices $D^{\vtheta_i \to \cdot}$ are also identifiable.
\end{thm}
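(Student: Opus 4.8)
The plan is to decompose $\vtheta_i = \{\theta_i^o, \theta_i^r, \theta_i^{\vs}\}$ and identify each block using a different part of the generative model in Eq.~(\ref{eq:main_data_dist}). The starting point is Theorem~\ref{thm:task1}, which guarantees that $\vs_t$ is recovered up to an invertible component-wise map $h$, i.e. $\vs_t = h(\hat{\vs}_t)$. The first step is to sharpen this in the present setting: because the transition in Eq.~(\ref{eq:main_data_dist}) is linear and additive, $h$ cannot be an arbitrary coordinate-wise diffeomorphism. Pushing a linear-additive-noise process forward through a component-wise $h$ preserves the linear conditional structure only if $h$ is affine, so I would first prove a lemma that $h$ must act as $\vs_t = \mathbf{T}\hat{\vs}_t + \vc$ with $\mathbf{T}$ a generalized permutation (a permutation composed with nonzero scalings) and $\vc$ a constant offset. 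The assumption $\vs_0 = \hat{\vs}_0$ then anchors this affine map at the initial condition, and propagating this equality through the shared full-rank dynamics $\mA$ removes the remaining offset/scaling freedom, so that the recovered states align with the true ones.

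With the states aligned, identifying $\theta_i^{\vs}$ is a linear-algebra step. Conditioning on $(\vs_{t-1}, a_{t-1})$ and taking expectations in the transition equation kills the zero-mean noise $\epsilon_t^{\vs}$ and leaves $\E[\vs_t \mid \vs_{t-1}, a_{t-1}] - \mA\vs_{t-1} - \mB a_{t-1} = \mC\theta_i^{\vs}$. Since $\mA$ full rank lets me recover the autoregressive and action contributions consistently across tasks, and $\mC$ is full column rank, left-multiplying by a left inverse $\mC^{+}$ yields $\theta_i^{\vs}$ uniquely; the full-column-rank condition is exactly what guarantees that no two distinct values of $\theta_i^{\vs}$ produce the same shift $\mC\theta_i^{\vs}$. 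Tracking the generalized permutation $\mathbf{T}$ through this inversion then shows the residual ambiguity in $\theta_i^{\vs}$ is itself a permutation-plus-scaling, giving component-wise identifiability rather than mere linear identifiability.

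For $\theta_i^o$ and $\theta_i^r$ I would instead exploit the invertibility of the global mixing $g$ (and hence of $g^o, g^r$), since these maps are not assumed linear. Holding $\vs_t$ fixed at its identified value, the only remaining cross-domain source of variation in $o_t$ (resp. $r_{t+1}$) is $\theta_i^o$ (resp. $\theta_i^r$); applying the component-wise identifiability argument of Theorem~\ref{thm:task1} to the augmented variable $(\vs_t, \theta_i^o)$, whose conditional-independence and linear-independence hypotheses are inherited from the source analysis, yields that $\theta_i^o, \theta_i^r$ are recovered up to component-wise transformations. Finally, once all of $\vtheta_i$ and the latent process are identified, the masks $D^{\vtheta_i \to \cdot}$ are read off from the induced conditional (in)dependences: under the Markov condition and faithfulness a component of $\vtheta_i$ carries an edge to a target exactly when the corresponding conditional dependence is present, which recovers $D^{\vtheta_i \to o}, D^{\vtheta_i \to r}, D^{\vtheta_i \to \vs}$ by the same causal-discovery reasoning used for $D$ in Theorem~\ref{thm:task1}.

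The step I expect to be the main obstacle is upgrading the identification of $\theta_i^{\vs}$ from linear to component-wise. Inverting a full-column-rank $\mC$ only pins $\theta_i^{\vs}$ down up to an invertible linear map, and collapsing that map to a generalized permutation is precisely where the full rank of $\mA$, the full column rank of $\mC$, and the anchoring $\vs_0 = \hat{\vs}_0$ must be used together; keeping the offset term $(\mA - \mI)\vc$ from the affine map from contaminating $\mC\theta_i^{\vs}$ across tasks is the delicate bookkeeping that makes or breaks the argument.
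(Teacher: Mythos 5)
Your route diverges from the paper's at the crucial step, and the divergence is where the gap lies. The paper never attempts to show that the recovered states \emph{align} with the true ones, and your lemma asking for this cannot be established from the stated assumptions: the estimated model has its own parameters $(\hat{\mA},\hat{\mB},\hat{\mC},\hat{\theta}_i^{\vs})$, constrained only to reproduce the observed distribution, so there is no ``shared full-rank dynamics $\mA$'' through which to propagate the anchor; and the condition $\vs_0=\hat{\vs}_0$ is a constraint at a single point, which cannot collapse a generalized permutation-plus-scaling to the identity. Without full alignment, your identification step for $\theta_i^{\vs}$ becomes circular: evaluating $\E[\vs_t\mid\vs_{t-1},a_{t-1}]-\mA\vs_{t-1}-\mB a_{t-1}$ requires knowing $\mA$ and $\mB$, which are exactly as unidentified as $\theta_i^{\vs}$ at that stage. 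You concede this yourself --- the residual offset $(\mA-\mI)\vc$ and the upgrade from linear to component-wise identifiability are ``the delicate bookkeeping that makes or breaks the argument'' --- which means the step you call the main obstacle is precisely the step your proposal does not close.

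The paper closes it by a different mechanism: keep $h$ general with full-rank Jacobian $\mathbf{J}^h_t$ (inherited from Theorem \ref{thm:task1}), unroll the linear recursion down to $t=0$ in \emph{both} parameterizations, use $\vs_0=\hat{\vs}_0$ only to cancel the initial-condition terms, and differentiate the resulting identity with respect to $\hat{\theta}_i^{\vs}$. Since $\partial\hat{\vs}_t/\partial\hat{\theta}_i^{\vs}=\hat{\mM}$ with $\hat{\mM}=\bigl(\sum_{k=0}^{t-1}\hat{\mA}^k\bigr)\hat{\mC}$, the cross term involving $\mA^t\hat{\mA}^{-t}$ cancels, leaving $\mM\,\partial\theta_i^{\vs}/\partial\hat{\theta}_i^{\vs}=\mathbf{J}^h_t\,\hat{\mM}$ with $\mM=\bigl(\sum_{k=0}^{t-1}\mA^k\bigr)\mC$; Proposition \ref{prop:matrix} guarantees $\mM$ and $\hat{\mM}$ have full column rank, and a rank sandwich then forces $\partial\theta_i^{\vs}/\partial\hat{\theta}_i^{\vs}$ to be full rank, with no need to ever align states or recover $\mA,\mB$ themselves. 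A secondary problem is your treatment of $\theta_i^o,\theta_i^r$: Theorem \ref{thm:task1}'s hypotheses are not ``inherited'' by the augmented variable $(\vs_t,\theta_i^o)$, because $\theta_i^o$ is constant within a task and so has no within-task variability with which to satisfy the conditional-independence and linear-independence conditions. The paper instead argues directly from invertibility of the mixing: writing $h'=g^{-1}\circ\hat{g}$, the off-diagonal Jacobian blocks $\partial\vs_t/\partial\hat{\theta}_i^o$ and $\partial\theta_i^o/\partial\hat{\vs}_t$ vanish, and invertibility of $h'$ together with full rank of $\partial\vs_t/\partial\hat{\vs}_t$ forces $\partial\theta_i^o/\partial\hat{\theta}_i^o$ to be full rank.
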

\begin{thm}\label{thm:task3}
    (Identifiability of Expanded State Space). Assume the data generation process in Eq. (\ref{eq:main_data_gen}). Consider the expansion of the state space $\mathcal{S}$ by incorporating additional dimensions. Suppose $\vs_t$ has already been identified according to Theorem \ref{thm:task1}, then the component-wise identifiability of the newly added variables $\vs^{\text{add}}_t$ and the additional structural matrices, i.e., $D^{\vs^{\text{add}} \to \cdot}$ and $D^{\cdot \to \vs^{\text{add}}}$, can be established if $\vs^{\text{add}}_t$ (1) represents a differentiable function of $[o_t, r_{t+1}]$, i.e., $\vs^{\text{add}}_t = f(o_t, r_{t+1})$, and (2) fulfills conditions (1) and (2) specified in Theorem \ref{thm:task1}.
\end{thm}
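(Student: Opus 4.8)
The plan is to reduce Theorem~\ref{thm:task3} to a single application of Theorem~\ref{thm:task1} on an \emph{augmented} latent state. I would stack the already-identified variables and the new ones into $\tilde{\vs}_t = [\vs_t, \vs^{\text{add}}_t]$, of dimension $d + d_{\text{add}}$, where $d_{\text{add}}$ is the number of added coordinates. Because $\vs_t$ is identified under Theorem~\ref{thm:task1} and, by condition~(1) of Theorem~\ref{thm:task3}, $\vs^{\text{add}}_t = f(o_t, r_{t+1})$ is a differentiable function of the observation--reward pair, the guiding idea is that the joint system $[o_t, r_{t+1}] = \tilde{g}(\tilde{\vs}_t, \epsilon_t)$ again fits the template of Eq.~(\ref{eq:main_data_gen}); the component-wise identifiability machinery of Theorem~\ref{thm:task1} then transfers verbatim to $\tilde{\vs}_t$, and in particular to the block $\vs^{\text{add}}_t$.

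First I would construct $\tilde{g}$ and establish its invertibility. The restriction of $g$ to the original block is invertible by assumption, while condition~(1) of Theorem~\ref{thm:task3} supplies a differentiable map $f$ recovering $\vs^{\text{add}}_t$ from $[o_t, r_{t+1}]$. Writing the Jacobian of $\tilde{g}$ in block form, I would argue it is nonsingular on the support, so that $\tilde{g}$ is a local diffeomorphism; this is the step that fixes the dimension-matching between the enlarged latent space and the correspondingly enriched observation space, and it is precisely where the differentiability granted in condition~(1) is used.

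Next I would verify that both structural hypotheses of Theorem~\ref{thm:task1} hold for the full vector $\tilde{\vs}_t$. Condition~(1) of Theorem~\ref{thm:task1} now requires conditional independence of \emph{every} coordinate pair given $\tilde{\vs}_{t-1}$, including the cross pairs between the old and new blocks; the within-block relations are granted by condition~(2) of Theorem~\ref{thm:task3} for $\vs^{\text{add}}_t$ and by the original hypotheses for $\vs_t$, and I would obtain the cross-block relations from the mutual independence of the noise driving the added coordinates and that driving the original dynamics. The linear-independence requirement (condition~(2) of Theorem~\ref{thm:task1}) is inherited analogously. With $\tilde{g}$ invertible and both conditions in force, Theorem~\ref{thm:task1} applies to $\tilde{\vs}_t$ and yields an invertible, component-wise map recovering $\tilde{\vs}_t$ from its estimator; restricting this map to the last $d_{\text{add}}$ coordinates gives the component-wise identifiability of $\vs^{\text{add}}_t$ up to permutation and coordinate-wise reparameterization. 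Finally, under the Markov condition and faithfulness, each edge into or out of $\vs^{\text{add}}_t$ is recovered from conditional (in)dependence relations exactly as in the structural-matrix part of Theorem~\ref{thm:task1}, identifying $D^{\vs^{\text{add}} \to \cdot}$ and $D^{\cdot \to \vs^{\text{add}}}$.

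I expect the invertibility of $\tilde{g}$ in the first step to be the main obstacle. Theorem~\ref{thm:task1} presupposes one invertible global map from the \emph{entire} latent state to $[o_t, r_{t+1}]$, whereas here the identification of $\vs_t$ and the recovery of $\vs^{\text{add}}_t$ originate from two separate sources that must be shown mutually consistent. Concretely, one has to rule out that the new coordinates are redundant, i.e., already expressible through the identified $\vs_t$, since such redundancy would collapse the rank of the augmented Jacobian and break invertibility. Establishing that $f$ and the inverse of $g$ on the original block assemble into a genuine bijection---so that the added coordinates contribute new observational content---is the technical crux; once secured, the rest is a direct invocation of Theorem~\ref{thm:task1}.
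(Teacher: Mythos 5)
Your proposal is correct and takes essentially the same route as the paper's own proof, which likewise reduces the claim to the machinery of Theorem~\ref{thm:task1}: because $\vs^{\text{add}}_t$ satisfies the conditional-independence and linear-independence conditions, the derivations of Eq. (\ref{eq:state_independence}) and Eq. (\ref{eq:jacobian}) carry over unchanged to yield component-wise identifiability, and the structural matrices $D^{\vs^{\text{add}} \to \cdot}$ and $D^{\cdot \to \vs^{\text{add}}}$ follow from the Markov and faithfulness assumptions already in force. If anything, your treatment is more explicit than the paper's about the invertibility of the augmented mixing map and the non-redundancy of the added coordinates; the paper absorbs these concerns into condition (1) and the standing assumption, following \citet{kong-2023-identification}, that the global mapping of the expanded system is invertible.
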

\begin{corollary}\label{cly:task4}
    (Identifiability under Multiple Shifts). Assume the data generation process in Eq. (\ref{eq:main_data_dist}) involves both distribution shifts and state space shifts that comply with Theorem \ref{thm:task2} and Theorem \ref{thm:task3}, respectively. In this case, both the domain-specific factor $\vtheta_i$ and the newly added state variable $\vs^{\text{add}}_t$ are component-wise identifiable.
\end{corollary}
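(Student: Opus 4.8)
The plan is to derive Corollary \ref{cly:task4} by chaining the two preceding results in a two-stage argument, exploiting the fact that the distribution-shift factor $\vtheta_i$ and the newly added latent variables $\vs^{\text{add}}_t$ play structurally different roles: by assumption $\vtheta_i$ is constant within the target task $\mathcal{M}_i$, whereas $\vs^{\text{add}}_t$ is a time-varying state that, per Theorem \ref{thm:task3}, is a differentiable function of $[o_t, r_{t+1}]$. The first stage identifies the augmented latent state $\tilde{\vs}_t = [\vs_t, \vs^{\text{add}}_t]$ component-wise; the second stage then recovers $\vtheta_i$ on top of the already-identified state.

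For the first stage, I would observe that since $\vtheta_i$ is fixed throughout $\mathcal{M}_i$, it can be absorbed into the task-specific generating functions: within this single task the model in Eq. (\ref{eq:main_data_dist}) reduces to the form of Eq. (\ref{eq:main_data_gen}), with the constants $\theta_i^o, \theta_i^r$ folded into a task-specific invertible map $\tilde{g}(\vs_t, \epsilon_t)$ and the term $\mC\theta_i^{\vs}$ acting as a fixed offset inside the transition. Because $\vs_t$ is already identified by Theorem \ref{thm:task1}, and the added coordinates satisfy conditions (1)--(2) of that theorem together with $\vs^{\text{add}}_t = f(o_t, r_{t+1})$, the hypotheses of Theorem \ref{thm:task3} hold verbatim for the augmented state. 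Applying Theorem \ref{thm:task3} then yields component-wise identifiability of $\vs^{\text{add}}_t$ and of the new structural masks $D^{\vs^{\text{add}}\to\cdot}$ and $D^{\cdot\to\vs^{\text{add}}}$, so that the full state admits an invertible component-wise reconstruction $\tilde{\vs}_t = h(\hat{\tilde{\vs}}_t)$.

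For the second stage, with the full augmented state identified, the linear additive transition of Eq. (\ref{eq:main_data_dist}) extends to a block form in which $\mA$, $\mB$, and $\mC$ act on $\tilde{\vs}_{t-1}$, $a_{t-1}$, and $\theta_i^{\vs}$ respectively. Under the same regularity conditions invoked in Theorem \ref{thm:task2}---$\mA$ full rank as the block transition matrix of the augmented state, $\mC$ full column rank, and the initialization $\vs_0 = \hat{\vs}_0$ extended to $\tilde{\vs}_0 = \hat{\tilde{\vs}}_0$---the argument of Theorem \ref{thm:task2} carries over and delivers component-wise identifiability of $\vtheta_i = \{\theta_i^o, \theta_i^r, \theta_i^{\vs}\}$; the associated masks $D^{\vtheta_i\to\cdot}$ then follow from the Markov and faithfulness assumptions exactly as before.

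The main obstacle I anticipate is not the mechanical chaining but justifying the \emph{separation} between $\vtheta_i$ and $\vs^{\text{add}}_t$: both are absent from the source model and newly active in the target, so a priori a shift explained by $\vtheta_i$ could be re-attributed to an additional added coordinate, or vice versa, breaking identifiability. The resolution I would make precise is that $\vtheta_i$ enters as a within-task constant, contributing no temporal variation, whereas $\vs^{\text{add}}_t$ carries genuine time-varying dynamics pinned down through the map $f(o_t, r_{t+1})$ and the conditional-independence structure of Theorem \ref{thm:task1}. Consequently the two cannot absorb each other's effects: any component varying over time within the task is classified as state and handled in stage one, while only the residual time-invariant variation across domains is attributed to $\vtheta_i$ in stage two. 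Making this dichotomy rigorous---ideally by showing that the two-stage decomposition is well-posed, i.e., that the classification of each mode of variation into ``state'' versus ``shift'' is unambiguous given the assumptions---is where I expect to spend most of the effort.
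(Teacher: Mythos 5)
Your proposal is correct and takes essentially the same route as the paper: the paper's entire proof of this corollary is the single sentence that it ``can be directly derived by leveraging the conclusions from Theorems \ref{thm:task1}--\ref{thm:task3},'' which is precisely the chaining you carry out. Your two-stage ordering, the absorption of the within-task-constant $\vtheta_i$ into the task-specific mixing function before invoking Theorem \ref{thm:task3}, and your discussion of why $\vtheta_i$ and $\vs^{\text{add}}_t$ cannot absorb each other's effects all fill in details the paper leaves implicit, but they elaborate rather than depart from its argument.
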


\section{A Three-Step Self-Adaptive Strategy for Model Adaptation}\label{sec:method}
In this section, we provide a detailed description of CSR, a strategy aimed at addressing the environmental changes between source and target tasks, thereby ensuring that models can effectively respond to evolving dynamics. Specifically, we proceed with a three-step strategy: (1) Distribution Shifts Detection and Characterization, (2) State/Action Space Expansions, and (3) Causal Graph Pruning. The overall process of our three-step strategy are described in Fig. \ref{fig:framework} and Algorithm \ref{alg:framework}.

Before this, we first give the estimation procedures for the world models defined in Eq. (\ref{eq:model}), which follows the state-of-the-art work Dreamer \citep{hafner-2020-mastering,hafner-2023-mastering}. Given the observations in period $\mathcal{T}_i$, we maximize the objective function $\mathcal{J}$\footnote{Formally written as $\mathcal{J}(\phi, \beta, \alpha, \vtheta_i, D)$; arguments are omitted for brevity.}, defined as $\mathcal{J} = \mathcal{J}_{\text{rec}} - \mathcal{J}_{\text{KL}} + \mathcal{J}_{\text{reg}}$, for model optimization. The reconstruction part $\mathcal{J}_{\text{rec}}$ is commonly used to minimize the reconstruction error for the perceived observation $o_t$ and the reward $r_t$, which is defined as
\begin{equation*}
\resizebox{\linewidth}{!}{$
    \mathcal{J}_{\text{rec}} = \mathbb{E}_{q_{\alpha}} \left(\sum\limits_{t\in \mathcal{T}_i}\{ \log p_{\phi}(o_t \mid D^{\vs\to o} \odot \vs_t, D^{\vtheta_i \to o} \odot \theta_i^o) + \log p_{\phi}(r_t \mid D^{\vs\to r} \odot \vs_t, D^{\vtheta_i \to r} \odot \theta_i^r)\} \right).
$}
\end{equation*}
We also consider the KL-divergence constraints $\mathcal{J}_{\text{KL}}$ that helps to ensure that the latent representations attain optimal compression of the high-dimensional observations, which is formulated as:
\begin{equation*}
\resizebox{\linewidth}{!}{$
    \mathcal{J}_{\text{KL}} = \mathbb{E}_{q_{\alpha}} \left( \sum\limits_{t\in \mathcal{T}_i}\{\lambda_{\text{KL}} \cdot \text{KL}(q_{\alpha}(s_{k, t} \mid \vs_{t-1}, \vtheta_i, a_{t-1}, o_t) \| p_{\beta}(s_{k, t} \mid D_k^{\vs\to \vs} \odot \vs_{t-1}, D_k^{\vtheta_i \to \vs} \odot \theta_i^{\vs}, D_k^{a \to \vs} \odot a_{t-1})\} \right),
$}
\end{equation*}
where $\lambda_{\text{KL}}$ is the regularization term. Moreover, as explained below in the Section \ref{sec:pruning}, we further use $\mathcal{J}_{\text{reg}}$ as sparsity constraints that help to identify the binary masks $D$ better. Upon implementation, these three components are jointly optimized for model estimation. During the first task $\mathcal{M}_1$, we focus on developing the world models from scratch to capture the compact causal representations effectively. Then, for any subsequent target task $\mathcal{M}_i$ (where $i \geq 2$), our objective shifts to continuously refining the world model to accommodate new tasks according to the following steps.

\subsection{Distribution Shifts Detection and Characterization}
For each task $\mathcal{M}_i$, our first goal is to determine if it exhibits any distribution shifts. Therefore, in this step, we exclusively updates the domain-specific part $\vtheta_i$, while keeping all other parameters unchanged from the previous task $\mathcal{M}_{i-1}$, to detect whether the distributions have changed. Recall that the effect of each edge in the structural matrices $D$ can differ from one task to another. By adjusting the values of $\vtheta_i$, we can also easily characterize the task-specific influence of these connections. Particularly, when $\vtheta_i$ is set to zero, we temporarily switch the related edges off in task $\mathcal{M}_i$. 

Here we adopt forward prediction error \citep{guo-2020-bootstrap} as the criteria to determine whether the re-estimated model well explains the observations in current task, defined as $\mathcal{L}_{\text{pred}} = \mathbb{E}_{\hat{o}_{t+1} \sim p_{\phi}} \|\hat{o}_{t+1} - o_{t+1}\|_2^2.$ A corresponding threshold, $\tau^\star$, is established. Upon implementation, we use the final prediction loss of the model on the source task $\mathcal{M}_1$ as the threshold value, thereby avoiding the need for manual setting. If the model's performance $\tau$ is below this expected threshold, it means that the current task $\mathcal{M}_i$ shares the same causal variables with previous tasks, requiring only sparse changes of some parameters in the world model, and then we only need to re-estimate the specific part $\vtheta_i$ to effectively manages these distribution changes. Otherwise, we proceed to the next step. 

\subsection{State/Action Space Expansions}\label{sec:state-space-expansion}
When the involved domain-specific features $\vtheta_i$ fail to accurately represent the target task $\mathcal{M}_i$, it becomes essential to incorporate additional causal variables into the existing causal graph to account for the features encountered in the new task. Given that the action variables are observable, we can directly obtain the relevant information when the action space expands. Thus, in this step, we focus on developing strategies to effectively manage state expansions. Let $d'$ denote the number of causal variables to be added. We first determine the value of $d'$ and introduce new causal features. Following this decision stage, we extend the causal representations from $\vs_t$ to $\vs'_t = (\vs_t, \vs^{\text{add}}_t)$, where $\vs^{\text{add}}_t = (s_{d+1,t}, \ldots, s_{d+d',t})$, by incorporating the additional $d'$ causal variables. This is implemented by increasing the dimensions of input/output layers of the world models. For instance, the state input of the transition model will increase from $d$ to $d+d'$. Accordingly, we focus on learning the newly incorporated components with only a few samples, as the previous model has already captured the existing relationships between variables. This approach allows us to leverage prior knowledge effectively and achieve low-cost knowledge transfer. Specifically, we propose the following three implementations for state space expansion: 
\begin{enumerate}[leftmargin=12pt,topsep=0pt,itemsep=0pt]
    \item \textit{Random (Rnd):} $d'$ is randomly sampled from a uniform distribution.
    \item \textit{Deterministic (Det):} It sets a constant value for $d'$. However, this approach may overlook task-specific differences, potentially leading to either insufficient or redundant expansions. To address this, we employ group sparsity regularization \citep{yoon-2017-lifelong} on the added parameters after deterministic expansion, which allows for expansion while retaining the capability of shrinking.
    \item \textit{Self-Adaptive (SA):} It searches for the value of $d'$ that best fits the current task. To achieve this, we transform expansion into a decision-making process by considering the number of causal variables added to the graph as actions. Inspired by \citet{xu-2018-reinforced}, we define the state variable to reflect the current causal graph and derive the reward based on changes in predictive accuracy, which is calculated as the differences of the model’s prediction errors before and after expansion. Details are given in Appendix \ref{ape:expand}.
\end{enumerate}
It is noteworthy that our method allows for flexibility in the choice of expansion strategies. Intuitively, in our case, the Self-Adaptive approach is most likely to outperform others, and the experimental results in Section \ref{sec:exp} further verify this point.

\subsection{Causal Graph Pruning}\label{sec:pruning}
As discussed in Section \ref{sec:causal}, not all variables contribute significantly to policy learning, and the necessary subset also differs between tasks. Therefore, during the generalization process, it is essential to identify the minimal sufficient state variables for each task. Fortunately, with the estimated causal model that explicitly encodes the structural relationships between variables, we can categorize these state variables $\vs_t$ into the following two classes:
\begin{enumerate}[leftmargin=12pt,topsep=0pt,itemsep=0pt]
\item \textit{Compact state representation $\vs_{k,t}^c$}: A variable that either affects the observation $o_t$, or the reward $r_{t+1}$, or influences other state variables $\vs_{j,t+1}$ ($k \neq j$) at the next time step (i.e., $D^{\vs\to o}=1$, or $D^{\vs\to r}=1$, or $D_{j,k}^{\vs\to\vs}=1$, e.g., $s_{1,t}$ in Fig. \ref{fig:framework}).
\item \textit{Non-compact state representation $\vs_{k,t}^{\Bar{c}}$}: A variable that does not meet the criteria for a compact state representation (e.g., $s_{2,t}$ in Fig. \ref{fig:framework}).
\end{enumerate}

Similarly, the change factors $\vtheta_i$ can be classified in the same manner. These definitions allow us to selectively remove non-compact ones, thereby pruning the causal graph. That is, a variable is retained only when its corresponding structural constraints $D$ are non-zero. Hence, to better characterize the binary masks $D$ and the sparsity of $\vtheta_i$, we define a regularization term $\mathcal{J}_{\text{reg}}$ by leveraging the edge-minimality property \citep{zhang-2011-intervention}, formulated as
\begin{equation*}
\resizebox{\linewidth}{!}{$
    \mathcal{J}_{\text{reg}} = - \lambda_{\text{reg}} \left[ \|D^{\vs\to o}\|_1 + \|D^{\vs\to r}\|_1 + \|D^{\vs\to\vs}\|_1 + \|D^{a \to \vs}\|_1 + \|D^{\vtheta_i \to \vs}\|_1 + \|\vtheta_i\|_1 \right],
$}
\end{equation*}
where $\lambda_{\text{reg}}$ represents the regularization term. Incorporating the regularization term $\mathcal{J}_{\text{reg}}$ directly into the objective function $\mathcal{J}$ confers a notable advantage: it enables concurrent pruning and model training. Specifically, the presence of $\mathcal{J}_{\text{reg}}$ induces certain entries of $D$ to transition from $1$ to $0$ during model estimation, thereby promoting sparsity naturally without additional training phases.

\subsection{Low-Cost Policy Generalization under Different Scenarios}
After identifying what and where the changes occur, we are now prepared to perform policy generalization to the target task $\mathcal{M}_i$. Given that the number of state variables varies between the distribution shifts and state space expansion scenarios, the strategy for policy transfer also differs.

According to above definitions, in all these tasks, we incorporate both the domain-shared state representation $\vs^c_t$ and the domain-specific change factor $\vtheta_i^c$ as inputs to the policy $\pi^\star$, represented as: $a_t = \pi^\star(\vs_t^c, \vtheta_i^c)$. This approach enables the agent to accommodate potentially variable aspects of the environment during policy learning. Consequently, given the re-estimated value of the compact state representation $\vs_t^c$ and the compact change factor $\vtheta_i^c$ for task $\mathcal{M}_i$, if the model's prediction error in the Distribution Shifts Detection and Characterization step meets expectations, we can directly transfer the learned policy $\mathcal{M}_i$ by applying $a_t = \pi^\star(\vs_t^c, \vtheta_i^c)$.

Otherwise, if the state variables expand from $\vs_t^c$ to ${\vs_t^{c}}'$, along with the updated change factor ${\vtheta_i^{c}}'$, we then relearn the policy ${\pi^\star}'$ basing on $\pi^\star$. Similarly, we train the newly added structures in the policy network while finetuning the original parameters, thereby updating the policy to $a_t = {\pi^\star}'({\vs_t^{c}}', {\vtheta_i^{c}}')$.

\section{Experiments}\label{sec:exp}
We evaluate the generalization capability of CSR on a number of simulated and well-established datasets\footnote{Code is available at \href{https://github.com/CMACH508/CSR}{https://github.com/CMACH508/CSR}.}, including the CartPole, CoinRun and Atari environments, with detailed descriptions provided in Appendix \ref{sec:ape_env_des}. For all these benchmarks, we evaluate the POMDP case, where the inputs are high-dimensional observations. Specifically, the evaluation focuses on answering the following key questions:
\begin{itemize}[leftmargin=12pt,topsep=0pt,itemsep=0pt]
   \item \textbf{Q1:} Can CSR effectively detect and adapt to the two types of environmental changes?
   \item \textbf{Q2:} Does the incorporation of causal knowledge enhance the generalization performance?
   \item \textbf{Q3:} Is searching for the optimal expansion structure necessary?
\end{itemize}
We compare our approach against several baselines: Dreamer \citep{hafner-2023-mastering}, which handles fixed tasks without integrating causal knowledge; AdaRL \citep{huang-2021-adarl}, which employs simple scenario-based policy adaptation without space expansion considerations; and the traditional model-free DQN \citep{mnih-2015-human} and SPR \citep{schwarzer-2020-data}. Additionally, for the Atari games, we benchmark against the state-of-the-art method, EfficientZero \citep{ye-2021-mastering}. All results are averaged over $5$ runs, more implementation details can be found in Appendix \ref{ape:exp}.

\begin{figure}[t]
    \centering
    \subfigure[]{
    \includegraphics[width=\linewidth]{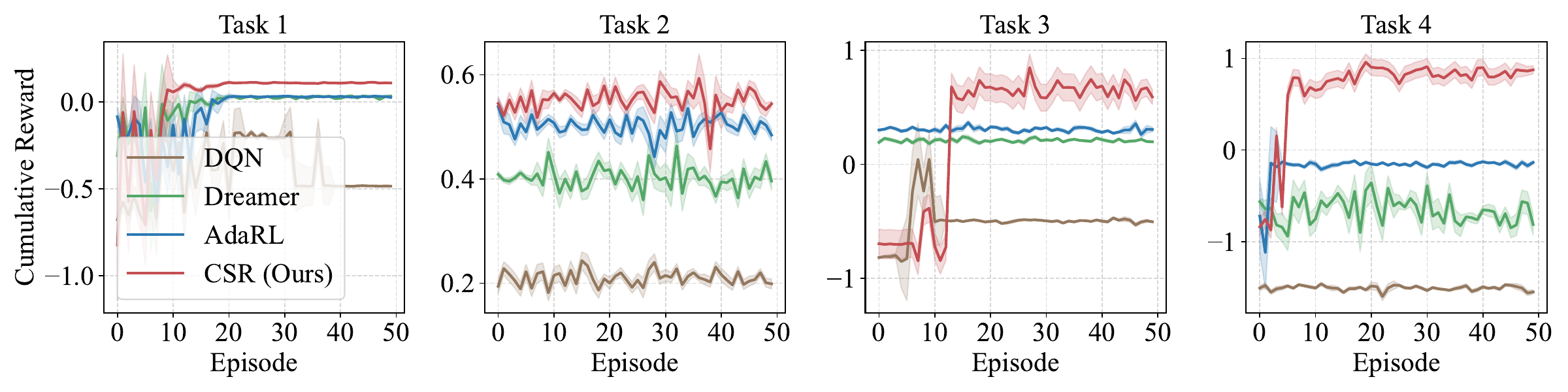}\label{fig:exp_sim}
}
    \subfigure[]{
    \includegraphics[width=.362\linewidth]{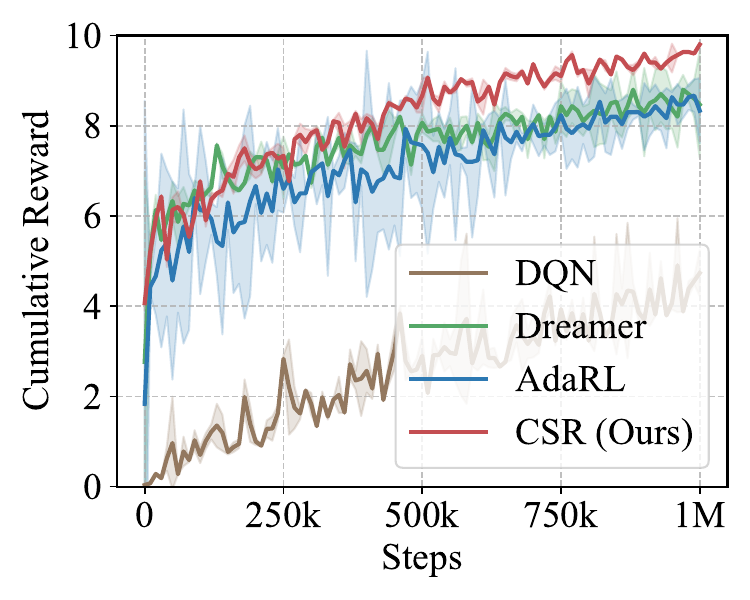}\label{fig:exp_coinrun}
}
    \subfigure[]{
    \includegraphics[width=.355\linewidth]{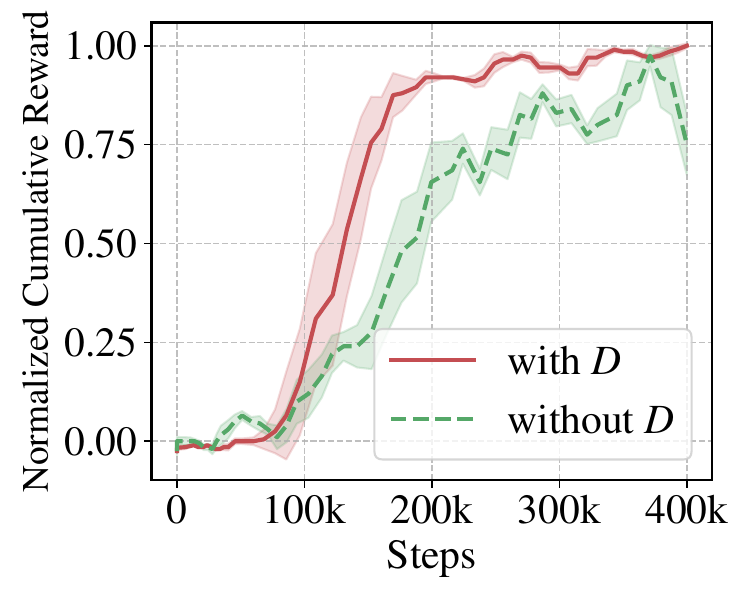}\label{fig:exp_structure}
}
    \subfigure[]{
    \includegraphics[width=.22\linewidth]{./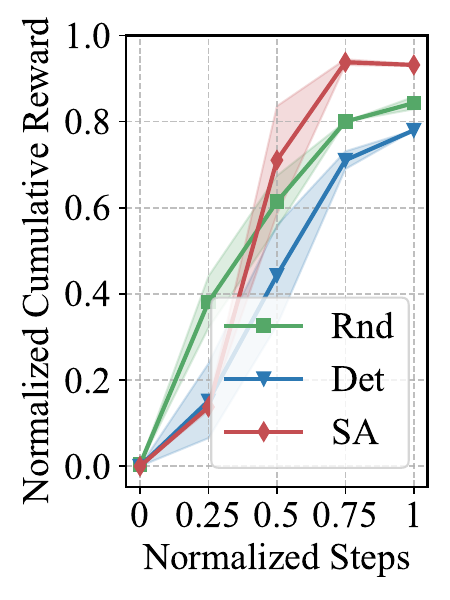}\label{fig:exp_expand}
}
    \caption{\small{\textbf{Experimental results that answer the key questions in Section \ref{sec:exp}: (Q1)} CSR demonstrates the best generalization capability compared to baseline methods in (a) Simulation and (b) CoinRun; \textbf{(Q2)} CSR with structural embeddings $D$ significantly outperforms CSR without $D$ in Atari games; \textbf{(Q3)} The SA expansion strategy yields the highest normalized average training episodic return in our experiments.}}
\end{figure}

\textbf{CSR consistently exhibits the best adaptation capability across all these environments (Q1).} \textbf{Simulated Experiments. }In our simulated experiments, we conducted a sequence of four tasks following the procedures outlined in Appendix \ref{ape:sim}. To evaluate the performance of different methods across varying scenarios, Task 2 focuses exclusively on distribution shifts, Task 3 addresses changes solely within the environment space, and Task 4 combines both distribution and space changes. As shown in Fig. \ref{fig:exp_sim}, CSR consistently outperforms the baselines in adapting to new tasks, particularly excelling in scenarios with space variations. This demonstrates CSR's ability to accurately detect and adjust to the changes in the environment. Moreover, CSR tends to converge faster toward higher rewards, underscoring its efficiency in data utilization during generalization.

\begin{table}[t]
    \centering
    \resizebox{\linewidth}{!}{
    \begin{tabular}{ccccccccc}
        \toprule
        \multirow{2}{*}{Model} & \multicolumn{4}{c}{Scores} & \multicolumn{4}{c}{Minimum Adaptation Steps} \\
        \cmidrule(lr){2-5} \cmidrule(lr){6-9}
         & Task 1 & Task 2 & Task 3 & Task 4 & Task 1 & Task 2 & Task 3 & Task 4\\
        \midrule
        DQN & 102.8 ($\pm$ 7.8) & 65.6 ($\pm$ 15.6) & 107.4 ($\pm$ 17.0) & 104.0 ($\pm$ 12.0) & \XSolidBrush & \XSolidBrush & \XSolidBrush & \XSolidBrush\\
        Dreamer & 500.0 ($\pm$ 0.0) & 397.6 ($\pm$ 7.2) & 311.6 ($\pm$ 23.3) & 356.3 ($\pm$ 59.9) & 50k & \XSolidBrush & \XSolidBrush & \XSolidBrush \\
        AdaRL & 500.0 ($\pm$ 0.0) & 468.0 ($\pm$ 2.3) & 410.0 ($\pm$ 3.4) & 407.5 ($\pm$ 34.5) & 50k & 4k & \XSolidBrush & \XSolidBrush\\
        CSR (ours) & \textbf{500.0} ($\pm$ 0.0) & \textbf{500.0} ($\pm$ 0.0) & \textbf{500.0} ($\pm$ 0.0) & \textbf{500.0} ($\pm$ 0.0) & \textbf{50k} & \textbf{2k} & \textbf{4k} & \textbf{10k}\\
        \bottomrule
    \end{tabular}}
    \caption{\small{\textbf{Only CSR consistently adapts to successive environmental changes in CartPole.} Evaluation results of various approaches are presented, with a maximum episode length of 500. 'Minimum Adaptation Steps' refers to the minimal amount of data required for model generalization, as illustrated in Fig. \ref{fig:exp_cartpole}. \XSolidBrush denotes that a method fails to adapt under limited training steps due to non-convergence or suboptimal performance.}}
    \label{tab:exp_cartpole_evaluation}
\end{table}

\textbf{CartPole Experiments. }CartPole is a classic control task where players move a cart left or right to balance a pendulum attached on it. In our experiments, we consider four consecutive tasks. Task 2 focuses exclusively on distribution shifts by randomly selecting the cart mass and the gravity from $\{0.5, 1, 2.5, 3.5, 4.5\}$ and $\{5, 9.8, 20, 30, 40\}$, respectively. In Tasks 1 and 2, we disregard the influence of the friction force between the cart and the track. In Task 3, however, we introduce this friction into the environment, and vary it over time, which simulates a game scenario where the cart moves on different surfaces, such as ice or grass (see Fig. \ref{fig:cartpole_illu} and Fig. \ref{fig:cartpole_fric_illu}). We reflect these changes in the observations by visualizing the track with different colored segments. To explore the generalization capabilities of the proposed method in scenarios with action expansion, we further designed Task 4, in which we expand the action space by incorporating additional possible force values that can be applied to the cart. The evaluation outcomes for these models are summarized in Table \ref{tab:exp_cartpole_evaluation}. We find that CSR consistently achieves the highest scores across all tasks, demonstrating its capability to promptly detect and adapt to environmental changes. In contrast, other baseline methods struggle to adjust to the introduction of the new friction variable and actions.

\textbf{CoinRun Experiments. }The learning curves in Fig. \ref{fig:exp_coinrun} depict our method's consistent superiority over the baselines during knowledge transferring from low to high difficulty levels in CoinRun. We also observe that model-based methods tend to generalize more quickly than model-free ones in our experiments. This finding suggests that the forward-planning capabilities of world models confer significant advantages in adaptation. Visualizations of the reconstructed observations from various methods are presented in Appendix \ref{sec:ape_coinrun}.

\begin{table}[t]
    \centering
    \resizebox{\linewidth}{!}{
    \begin{tabular}{lcccccc}
    \toprule
     Task & Random & SPR & Dreamer & AdaRL & EfficientZero & CSR (ours) \\
    \midrule
     \multirow{2}{*}{Alien} & 291.9 & 970.3 & 1010.1 & 1147.5 & 557.4 & \textbf{1586.9} \\
                            & ($\pm$ 83.5) & ($\pm$ 311.1) & ($\pm$ 339.0) & ($\pm$ 125.7) & ($\pm$ 185.9) & ($\pm$ 127.0) \\
     \midrule
     \multirow{2}{*}{Bank Heist} & 18.4 & 110.1 & 1313.7 & 1285.4 & 181.0 & \textbf{1454.1} \\
                                 & ($\pm$ 2.5) & ($\pm$ 128.2) & ($\pm$ 341.5) & ($\pm$ 131.8) & ($\pm$ 90.6) & ($\pm$ 178.8) \\
     \midrule
     \multirow{2}{*}{Crazy Climber} & 9668.0 & 32723.5 & 68026.5 & 62565.3 & 56408.3 & \textbf{88306.5} \\
                                    & ($\pm$ 2286.0) & ($\pm$ 12125.2) & ($\pm$ 15628.6) & ($\pm$ 15162.2) & ($\pm$ 13388.0) & ($\pm$ 18029.6) \\
     \midrule
     \multirow{2}{*}{Gopher} & 235.6 & 294.0 & 5607.3 & 5359.6 & 1083.2 & \textbf{6718.6} \\
                             & ($\pm$ 42.5) & ($\pm$ 312.1) & ($\pm$ 1982.9) & ($\pm$ 1736.2) & ($\pm$ 784.8) & ($\pm$ 1703.1) \\
     \midrule
     \multirow{2}{*}{Pong} & -20.2 & -6.8 & 18.0 & 17.6 & 6.8 & \textbf{19.6} \\
                           & ($\pm$ 0.1) & ($\pm$ 14.3) & ($\pm$ 3.1) & ($\pm$ 2.7) & ($\pm$ 7.3) & ($\pm$ 1.1) \\
    \bottomrule
    \end{tabular}}
    \caption{\small{\textbf{CSR outperforms every baseline method on the selected set of Atari games.}}}
    \label{tab:atari}
\end{table}

\textbf{Atari Experiments. }We also conduct a series of interesting experiments on the Atari 100K games, which includes 26 games with a budget of 400K environment steps \citep{kaiser-2019-model}. Specifically, we select five representative games for evaluation: Alien, Bank Heist, Carzy Climber, Gopher, and Pong. The modes and difficulties available in each game are summarized in Table \ref{tab:mode_in_atari}. For each of these games, we perform experiments among a sequence of four tasks, where each task randomly assigns a (mode, difficulty) pair. We then train these models on the source task and generalize them to downstream target tasks. Table \ref{tab:atari} summarizes the average final scores across these tasks. We see that CSR achieves the highest mean scores in all the five games. Moreover, Fig. \ref{fig:ape_exp_atari} illustrates the average generalization performance of various methods on downstream target tasks, while Fig. \ref{fig:ape_exp_atari_alien} to Fig. \ref{fig:ape_exp_atari_pong} present the training curves for each game, respectively. The reconstructions, as well as the estimated structural matrices, are provided in Appendix \ref{sec:ape_atari}.

\textbf{Integrating causal knowledge by explicitly embedding structural matrices $D$ into the world model improves the generalization ability (Q2).} Figure \ref{fig:exp_structure} illustrates the average performance of CSR with and without $D$ in Atari games. We observe a significantly faster and higher increase in the cumulative reward when taking structural relationships into consideration. This demonstrates the efficiency enhancement in policy learning through the removal of redundant causal variables, which accelerates the extraction and utilization of knowledge during the generalization process. 

\textbf{Searching for the optimal expansion structure brings notable performance gains but involves a trade-off (Q3).} We conduct comparative experiments using the three methods described in Section \ref{sec:state-space-expansion} for all the environments and average them into Fig. \ref{fig:exp_expand}. The results demonstrate that seeking for the optimal structure significantly improves expansion performance, leading us to apply the \textit{Self-Adaptive} approach. However, we also observe that each search step requires extensive training time for models with different expansion scales, making the search process highly time-consuming. Therefore, it is crucial to consider this trade-off in practical applications. 

\section{Related Work}
Recently, extensive research efforts have been invested in learning abstract representations in RL, employing methodologies such as image reconstruction \citep{watter-2015-embed}, contrastive learning \citep{sermanet-2018-time,mazoure-2020-deep}, and the development of world models \citep{sekar-2020-planning}. A prominent research avenue within this domain is causal representation learning, which aims to identify high-level causal variables from low-level observations, thereby enhancing the accuracy of information available for decision-making processes \citep{scholkopf-2021-toward}. Approaches such as ASRs \citep{huang-2022-action} and IFactor \citep{liu-2023-learning} leverage causal factorization and structural constraints within causal variables to develop more accurate world models. Moreover, CDL \citep{wang-2022-causal}, GRADER \citep{ding-2022-generalizing} and Causal Exploration \citep{yang-2024-boosting} seek to boost exploration efficiency by learning causal models. Despite these advancements, many of these studies are tailored to specific tasks and struggle to achieve the level of generalization across tasks where human performance is notably superior \citep{taylor-2009-transfer,zhou-2022-domain}.

To overcome these limitations, \citet{harutyunyan-2015-expressing} develop a reward-shaping function that captures the target task's information to guide policy learning. \citet{taylor-2007-transfer} and \citet{zhang-2020-learning} aim to map tasks to invariant state variables, thereby learning policies robust to environmental changes. AdaRL \citep{huang-2021-adarl} is dedicated to learning domain-shared and domain-specific representations to facilitate policy transfer. Distinct from these works, CSP \citep{gaya-2022-building} approaches from the perspective of policy learning directly, by incrementally constructing a subspace of policies to train agents. However, most of these works assume a constant environment space, which is often not the case in practical applications. Therefore, in this paper, we investigate the feasibility of knowledge transfer when the state space can also change. Furthermore, the approach we propose is also related to the area of dynamic neural networks, where various methods have been developed to address sequences of tasks that require dynamical modifications to the network architecture, such as DEN \citep{yoon-2017-lifelong}, PackNet \citep{mallya-2018-packnet}, APD \citep{yoon-2019-scalable}, CPG \citep{hung-2019-compacting}, and Learn-to-Grow \citep{li-2019-learn}.

\section{Conclusions and Future Work}
In this paper, we explore a broader range of scenarios for generalizable reinforcement learning, where changes across domains arise not only from distribution shifts but also space expansions. To investigate the adaptability of RL methods in these challenging scenarios, we introduce CSR, an approach that uses a three-step strategy to enable agents to detect environmental changes and autonomously adjust as needed. Empirical results from various complex environments, such as CartPole, CoinRun and Atari games, demonstrate the effectiveness of CSR in generalizing across evolving tasks. The primary limitation of this work is that it only considers generalization across domains and does not account for nonstationary changes over time. Therefore, a future research direction is to develop methods to automatically detect and characterize nonstationary changes both over time and across tasks.

\section*{Acknowledgments}
Yupei Yang, Shikui Tu and Lei Xu would like to acknowledge the support by the Shanghai Municipal Science and Technology Major Project, China (Grant No. 2021SHZDZX0102), and by the National Natural Science Foundation of China (62172273).

\bibliography{iclr2025_conference}
\bibliographystyle{iclr2025_conference}

\appendix
\setcounter{thm}{0}
\setcounter{corollary}{0}
\newpage
\section{Proofs of Identifiablity Theory}\label{ape:proofs}
Before presenting the proofs, we first introduce the relevant notations and assumptions.
\paragraph{Notations.}
We denote the underlying state variable by $\vs_t = \{s_{1,t}, \ldots, s_{d,t}\}$ and denote $o_t$ as the observation. Also, we denote the mapping from state estimator $\hat{\vs}_t$ to state $\vs_t$ by $\vs_t = h(\hat{\vs}_t)$, and denote the Jacobian matrix of $h$ as $\mathbf{J}^h_t$. Let $\zeta_{k, t} \triangleq \log p(s_{k,t} \mid \vs_{t-1})$, we further denote 
\begin{equation}
\resizebox{\linewidth}{!}{$
        \boldsymbol{\omega}_{k, t} \triangleq\left(\frac{\partial^2 \zeta_{k, t}}{\partial s_{k, t} \partial s_{1, t-1}}, \frac{\partial^2 \zeta_{k, t}}{\partial s_{k, t} \partial s_{2, t-1}}, \ldots, \frac{\partial^2 \zeta_{k, t}}{\partial s_{k, t} \partial s_{d, t-1}}\right)^{\top}, \stackrel{\circ}{\boldsymbol{\omega}}_{k, t} \triangleq\left(\frac{\partial^3 \zeta_{k, t}}{\partial s_{k, t}^2 \partial s_{1, t-1}}, \frac{\partial^3 \zeta_{k, t}}{\partial s_{k, t}^2 \partial s_{2, t-1}}, \ldots, \frac{\partial^3 \zeta_{k, t}}{\partial s_{k, t}^2 \partial s_{d, t-1}}\right)^{\top}.
$}
\label{eq:vector}
\end{equation}

\begin{assumption}
    $\zeta_{k, t}$ is twice differentiable with respect to $s_{k,t}$ and differentiable with respect to $s_{l, t-1}$, for all $l \in \{1, \ldots, d\}$.
\end{assumption}

\begin{assumption}[Faithfulness assumption]
    For a causal graph $\gG$ and the associated probability distribution $P$, every true conditional independence relation in $P$ is entailed by the Causal Markov Condition applied to $\gG$ \citep{spirtes-2001-causation}.
\end{assumption}

\subsection{Proof of Theorem \ref{thm:task1}}
Based on aforementioned assumptions and definitions, Theorem \ref{thm:task1} establishes the conditions for the component-wise identifiablity of the state variable $\vs_t$ and the structural matrices $D$ in Eq. (\ref{eq:old_model}).

\begin{thm}
    (Identifiablity of world model in Eq. (\ref{eq:old_model})). Assume the data generation process in Eq. (\ref{eq:data_gen}). If the following conditions are satisfied, then $\vs_t$ is component-wise identifiable: (1) for any $k_1, k_2 \in \{1, \ldots, d\}$ and $k_1 \neq k_2$, $\hat{\vs}_{k_1, t}$ and $\hat{\vs}_{k_2, t}$ are conditionally independent given $\hat{\vs}_{t-1}$; (2) for every possible value of $\vs_t$, the vector functions defined in Eq. (\ref{eq:vector}) are linearly independent. Furthermore, if the Markov condition and faithfulness assumption hold, then the structural matrices $D$ are also identifiable:
    \begin{equation}
        \left\{
        \begin{array}{cll}
            \left[o_t, r_{t+1}\right] &{=} &{g(\vs_t, \epsilon_t)} \\ 
            {\vs_{t}} &{=} &{g^{\vs}(\vs_{t-1}, a_{t-1}, \epsilon_{t}^{\vs}),} 
        \end{array}
        \right.
        \label{eq:data_gen}
    \end{equation}
    where
    \begin{equation}
        \left\{
        \begin{array}{lll}
            {o_t} &{=} &{g^o(\vs_t, \epsilon^o_t)} \\ 
            {r_{t+1}} &{=} &{g^r(\vs_t, \epsilon_{t+1}^r)}.
        \end{array}
        \right.
    \end{equation}
    The $\epsilon_t, \epsilon_{t}^{\vs}, \epsilon_t^o, \epsilon_{t+1}^r$ terms are corresponding independent and identically distributed (i.i.d.) random noises. Following \citet{kong-2023-identification}, here we only assume that the global mapping $g$ is invertible.
\end{thm}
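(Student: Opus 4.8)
The plan is to adapt the component-wise identifiability strategy of \citet{yao-2021-learning} and \citet{kong-2023-identification}, proceeding in two stages: first recover $\vs_t$ up to an invertible, component-wise transformation, and then recover the binary masks $D$ from conditional-independence relations under the Markov and faithfulness assumptions. For the first stage I would begin by producing the indeterminacy map. Because the global mixing $g$ in Eq.~(\ref{eq:data_gen}) is invertible, $\vs_t$ is a deterministic function of $[o_t, r_{t+1}]$ via the relevant block of $g^{-1}$; the estimated generator is invertible by construction, so $\hat{\vs}_t$ is a function of the same $[o_t, r_{t+1}]$. Composing these yields an invertible $h$ with $\vs_t = h(\hat{\vs}_t)$, whose Jacobian I denote $\mathbf{J}^h_t$, and I would check that $h$ is a diffeomorphism so that the change-of-variables formula applies.

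Next I would reduce the problem to a linear system in the score vectors. Both the true transition, which factorizes over $k$ as written componentwise in Eq.~(\ref{eq:old_model}), and the estimated transition, conditionally independent across components by condition~(1), admit product conditional densities. Writing the change-of-variables identity $\sum_i \log \hat{p}(\hat{s}_{i,t} \mid \hat{\vs}_{t-1}) = \sum_k \zeta_{k,t} + \log|\det \mathbf{J}^h_t|$, I would differentiate once with respect to $\hat{s}_{i,t}$ and then with respect to a past coordinate $s_{l,t-1}$. The crucial simplifications are that $\mathbf{J}^h_t$ depends only on $\hat{\vs}_t$, so the $\log|\det \mathbf{J}^h_t|$ term is annihilated by $\partial/\partial s_{l,t-1}$, and that for $i \neq j$ the estimated conditional independence forces the left-hand cross term to vanish. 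Collecting the resulting equations over $l = 1, \dots, d$ produces, for each pair $i \neq j$, a linear combination of the vectors $\boldsymbol{\omega}_{k,t}$ and $\stackrel{\circ}{\boldsymbol{\omega}}_{k,t}$ of Eq.~(\ref{eq:vector}) whose coefficients are the products $(\mathbf{J}^h_t)_{ki}(\mathbf{J}^h_t)_{kj}$.

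Condition~(2), the linear independence of $\{\boldsymbol{\omega}_{k,t}, \stackrel{\circ}{\boldsymbol{\omega}}_{k,t}\}$, then forces every such coefficient to vanish, giving $(\mathbf{J}^h_t)_{ki}(\mathbf{J}^h_t)_{kj} = 0$ for all $k$ and all $i \neq j$. Hence each row of $\mathbf{J}^h_t$ has at most one nonzero entry, and since $h$ is invertible the Jacobian must be a generalized permutation matrix; this shows $h$ is component-wise, i.e.\ $\vs_t$ is component-wise identifiable. For the masks I would note that composing with an invertible component-wise map leaves every conditional-independence relation intact, so under the Markov condition and faithfulness the independencies among $\hat{\vs}_t$, $o_t$, $r_{t+1}$ and $a_{t-1}$ coincide with the d-separations of the true graph; reading off the presence or absence of each edge then recovers $D^{\vs\to o}$, $D^{\vs\to r}$, $D^{\vs\to\vs}$ and $D^{a\to\vs}$, establishing identifiability of $D$.

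The main obstacle I anticipate is the second stage above: carefully tracking the chain rule through $h$ when differentiating the change-of-variables identity, confirming that the log-determinant term genuinely drops out under the past-coordinate derivative, and assembling the surviving terms into a clean linear combination of $\boldsymbol{\omega}_{k,t}$ and $\stackrel{\circ}{\boldsymbol{\omega}}_{k,t}$ so that condition~(2) can be invoked. Once this linear system is in place, the passage to a generalized permutation matrix and the subsequent recovery of $D$ are comparatively routine.
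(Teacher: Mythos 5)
Your second stage --- deriving vanishing cross-partials from the conditional independence of the components of $\hat{\vs}_t$ given $\hat{\vs}_{t-1}$, assembling them through the Jacobian $\mathbf{J}^h_t$ into linear combinations of $\boldsymbol{\omega}_{k,t}$ and $\stackrel{\circ}{\boldsymbol{\omega}}_{k,t}$, and invoking condition (2) to force $(\mathbf{J}^h_t)_{ki}(\mathbf{J}^h_t)_{kj}=0$ so that $\mathbf{J}^h_t$ is a generalized permutation matrix --- is exactly the argument the paper relies on (it summarizes these steps from \citet{yao-2022-temporally}), and your recovery of the masks from Markov/faithfulness matches what the paper delegates to \citet{huang-2021-adarl}.

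The genuine gap is in your first stage: you never deal with the noise terms $\epsilon_t$. You claim that invertibility of $g$ makes $\vs_t$ a deterministic function of $[o_t, r_{t+1}]$, and that composing with the inverse of the estimated generator ``yields an invertible $h$ with $\vs_t = h(\hat{\vs}_t)$.'' But $[o_t, r_{t+1}]$ is generated jointly with the random noise: even under the most favorable reading, where $g$ is jointly invertible in $(\vs_t,\epsilon_t)$, the composition $g^{-1}\circ\hat{g}$ maps $(\hat{\vs}_t,\hat{\epsilon}_t)$ to $(\vs_t,\epsilon_t)$, so a priori $\vs_t$ depends on the estimated noise block as well; the existence of a well-defined invertible map $h$ acting on $\hat{\vs}_t$ alone is precisely what has to be proved, not something that follows from composition. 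The paper devotes its entire first step to this issue: interpreting the noise as additive on the output, $p_g(y_t\mid\vs_t)=p_{\epsilon_t}\left(y_t-g(\vs_t)\right)$, it shows via a convolution/Fourier-transform argument (following \citet{khemakhem-2020-variational}, under the assumption that the characteristic function of the noise vanishes only on a set of measure zero) that equality of the noisy marginals of $y_t$ forces equality of the noise-free pushforward distributions. Only after this reduction to the noiseless case is $h = g^{-1}\circ\hat{g}$ legitimately defined and invertible, at which point your stage two applies verbatim. Without this deconvolution step, or some explicit lemma eliminating the dependence of $\vs_t$ on $\hat{\epsilon}_t$, your construction of $h$ does not go through, and everything downstream of it is unsupported.
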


\begin{proof}
The proof proceeds in two steps. First, we demonstrate that the data generation process in Eq. (\ref{eq:data_gen}) is equivalent to the noiseless data distribution. Second, we summarize the proof steps of the identifiablity of the state variables $\vs_t$ under the noiseless distribution, which has already been provided in \citet{yao-2022-temporally}.

\textbf{Step 1: transform into noise-free distributions.}

Let $y_t = \left[o_t, r_{t+1}\right]$. We denote $p(y_t)=\int p_g(y_t | \vs_t)p_\kappa(\vs_t) d\vs_t$ where $g, \kappa$ are the parameters of the probability functions. Suppose $p_{g, \kappa}(y_t) = p_{\hat{g}, \hat{\kappa}}(y_t)$ holds for all $y_t$, where $(g, \kappa)$ and $(\hat{g}, \hat{\kappa})$ are two sets of parameters. We complete the proof primarily by following \citet{khemakhem-2020-variational}.

By applying the law of total probability, we have
\begin{equation}
    \int_{\mathcal{S}} p_\kappa(\vs_t)\cdot p_g(y_t | \vs_t) d\vs_t = \int_{\mathcal{S}} p_{\hat{\kappa}}(\vs_t) \cdot p_{\hat{g}}(y_t | \vs_t) d\vs_t.
\end{equation}

Further define $p_g(y_t | \vs_t) = p_{\epsilon_t}(y_t - g(\vs_t))$, we get
\begin{equation}
    \int_{\mathcal{S}} p_\kappa(\vs_t) \cdot p_{\epsilon_t}(y_t - g(\vs_t)) d\vs_t = \int_{\mathcal{S}} p_{\hat{\kappa}}(\vs_t) \cdot p_{\epsilon_t}(y_t - \hat{g}(\vs_t)) d\vs_t.
\end{equation}

Replacing $\overline{y}_t = g(\vs_t)$ on the left hand side, and similarly on the right hand side, we obtain
\begin{equation}
    \int_{\mathcal{O}} p_{\kappa}(g^{-1}(\overline{y}_t))~\text{vol}~ \mathbf{J}^{g^{-1}}_t(\overline{y}_t) \cdot p_{\epsilon_t}(y_t - \overline{y}_t) d\overline{y}_t = \int_{\mathcal{O}} p_{\hat{\kappa}}(\hat{g}^{-1}(\overline{y}_t))~\text{vol}~ \mathbf{J}^{\hat{g}^{-1}}_t(\overline{y}_t) \cdot p_{\epsilon_t}(y_t - \overline{y}_t) d\overline{y}_t,
    \label{eq:thm1_step3}
\end{equation}
where $\text{vol}~ \mathbf{J}=\sqrt{\det \mathbf{J}^\top \mathbf{J}}$. 

By introducing $\Tilde{p}_{g, \kappa}(\overline{y}_t) = p_{\kappa}(g^{-1}(\overline{y}_t))~\text{vol}~ \mathbf{J}^{g^{-1}}_t(\overline{y}_t) \mathds{1}(\overline{y}_t)$ on both sides, we can rewrite Eq. (\ref{eq:thm1_step3}) as
\begin{equation}
    \int_{\mathbb{R}^{\upsilon}} \Tilde{p}_{g, \kappa}(\overline{y}_t) \cdot p_{\epsilon_t}(y_t - \overline{y}_t) d\overline{y}_t = \int_{\mathbb{R}^{\upsilon}} \Tilde{p}_{\hat{g}, \hat{\kappa}}(\overline{y}_t)  \cdot p_{\epsilon_t}(y_t - \overline{y}_t) d\overline{y}_t,
    \label{eq:thm1_step4}
\end{equation}
where $\upsilon = \dim \mathcal{O}+\dim R$. By the definition of convolution, Eq. (\ref{eq:thm1_step4}) is equivalent to
\begin{equation}
    (\Tilde{p}_{g, \kappa} * p_\epsilon)(y_t) = (\Tilde{p}_{\hat{g}, \hat{\kappa}} * p_\epsilon)(y_t),
\end{equation}
where $*$ denote the convolution operator. Denote $F[.]$ the Fourier transform and $\varphi_\epsilon=F[p_\epsilon]$, we have
\begin{equation}
    F[\Tilde{p}_{g, \kappa}](\Omega)\varphi_\epsilon(\Omega) = F[\Tilde{p}_{\hat{g}, \hat{\kappa}}](\Omega)\varphi_\epsilon(\Omega).
\end{equation}

Assume set $\{\vx \in \mathcal{X}| \varphi_\epsilon(\vx)=0\}$ has measure zero, we can drop $\varphi_\epsilon({\Omega})$ from both sides, which obtains
\begin{equation}
    F[\Tilde{p}_{g, \kappa}](\Omega) = F[\Tilde{p}_{\hat{g}, \hat{\kappa}}](\Omega).
\end{equation}

Therefore, for all $y_t \in \mathcal{O} \times R$, we have
\begin{equation}
    \Tilde{p}_{g, \kappa}(y_t) = \Tilde{p}_{\hat{g}, \hat{\kappa}}(y_t).
\end{equation}
This indicates that the noise-free distributions must coincide for the overall distributions to remain identical after adding noise, effectively reducing the noisy case in Eq. (\ref{eq:data_gen}) into the noiseless case.

\textbf{Step 2: establish identifiability of state variables.}

After transforming the problem into the noise-free case, we proceed by summarizing the key proof steps of the identifiability of the state variables $\vs_t$, following \citet{yao-2022-temporally}, which are:
\begin{itemize}[leftmargin=12pt,topsep=0pt,itemsep=0pt]
    \item First, by making use of the conditional independence of the components of $\hat{\vs}_t$ given $\hat{ \vs}_{t-1}$, it is shown that:
        \begin{equation}
            \frac{\partial^2 \log p(\hat{\mathbf{s}}_t \mid \hat{\mathbf{s}}_{t-1})}{\partial \hat{s}_{i,t} \partial \hat{s}_{j,t}} = 0.
            \label{eq:state_independence}
        \end{equation}
    \item Second, by utilizing the Jacobian matrix $\mathbf{J}^h_t$ to calculate Eq. (\ref{eq:state_independence}), it is derived that
        \begin{equation}
            \frac{\partial^3 \log p\left(\hat{\vs}_t \mid \hat{\vs}_{t-1}\right)}{\partial \hat{s}_{i,t} \partial \hat{s}_{j,t} \partial s_{l, t-1}}=\sum_{k=1}^d\left(\frac{\partial^3 \zeta_{k, t}}{\partial s_{k, t}^2 \partial s_{l, t-1}} \cdot \mathbf{J}^h_{k, i, t} \mathbf{J}^h_{k, j, t}+\frac{\partial^2 \zeta_{k, t}}{\partial s_{k, t} \partial s_{l, t-1}} \cdot \frac{\partial \mathbf{J}^h_{k, i, t}}{\partial \hat{s}_{j, t}}\right) \equiv 0.
            \label{eq:jacobian}
        \end{equation}
    \item Finally, it is established that $\vs_t$ is identifiable, up to an invertible, component-wise nonlinear transformation of a permuted version of $\hat{\vs}_t$, if the linear independence of vector funtions defined in Eq. (\ref{eq:vector}) holds and the the Jacobian matrix $\mathbf{J}^h_t$ satisfies Eq. (\ref{eq:jacobian}).
\end{itemize}

Moreover, the proofs for the identifiablity of the structural matrices $D$ are presented in \citet{huang-2021-adarl}. Based on these steps, we next provide the proofs of Theorem \ref{thm:task2}-\ref{thm:task3}, and Corollary \ref{cly:task4}.
\end{proof}

\subsection{Proof of Theorem \ref{thm:task2}}
Different from existing methods, to capture the changing dynamics in the environment, we have introduced a task-specific change factor, $\vtheta_i$, into the world model as defined in Eq. (\ref{eq:model}). Accordingly, Theorem \ref{thm:task2} presents the identifiability of $\vtheta_i$ and the corresponding structural matrices $D^{\vtheta_i \to \cdot}$ for scenarios involving only distribution shifts in linear cases.

\begin{thm}
    (Identifiability of $\vtheta_i$ in Eq. (\ref{eq:model})). Assume the data generation process in Eq. (\ref{eq:data_dist}), where the state transitions are linear and additive. If the process encounters distribution shifts and $\vs_t$ has been identified according to Theorem \ref{thm:task1}, then $\vtheta_i$ are component-wise identifiable:
    \begin{equation}
        \left\{
        \begin{array}{cll}
            \left[o_t, r_{t+1}\right] &{=} &{g(\vs_t, \theta_i^o, \theta_i^r, \epsilon_t)} \\ 
            {\vs_{t}} &{=} &{\mA \vs_{t-1} + \mB a_{t-1} + \mC \theta_i^{\vs} + \epsilon_t^{\vs},} 
        \end{array}
        \right.
        \label{eq:data_dist}
    \end{equation}
    where
    \begin{equation}
        \left\{
        \begin{array}{lll}
            {o_t} &{=} &{g^o(\vs_t, \theta_i^o, \epsilon^o_t)} \\ 
            {r_{t+1}} &{=} &{g^r(\vs_t, \theta_i^r, \epsilon_{t+1}^r)}.
        \end{array}
        \label{eq:thm2_data}
        \right.
    \end{equation}
    Following \citet{yao-2021-learning}, here we assume that $\mA$ is full rank, and $\mC$ is full column rank. We further assume that $\vs_0 = \hat{\vs}_0$. Moreover, if the Markov condition and faithfulness assumption hold, the structural matrices $D^{\vtheta_i \to \cdot}$ are also identifiable.
\end{thm}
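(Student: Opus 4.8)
The plan is to build directly on Theorem~\ref{thm:task1}, which already supplies component-wise identifiability of the latent states $\vs_t$ together with the invertible component-wise map $\vs_t = h(\hat{\vs}_t)$. The remaining work is to isolate the three pieces of $\vtheta_i = \{\theta_i^o, \theta_i^r, \theta_i^{\vs}\}$, and the natural decomposition is: (i) recover $\theta_i^{\vs}$ from the linear-additive transition model, (ii) recover $\theta_i^o$ and $\theta_i^r$ from the observation and reward models using invertibility of the global map $g$ in Eq.~(\ref{eq:data_dist}), and (iii) recover the masks $D^{\vtheta_i \to \cdot}$ from conditional (in)dependence under the Markov and faithfulness assumptions. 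Throughout I would argue in the usual identifiability style: suppose two parameter sets $(\mA, \mB, \mC, \vtheta_i)$ and $(\hat{\mA}, \hat{\mB}, \hat{\mC}, \hat{\vtheta}_i)$ induce the same distribution of $\{o_t, r_t\}$, and show the change factors must agree up to an invertible component-wise transformation.

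For step (i) the crucial structural fact is that the transition $\vs_t = \mA \vs_{t-1} + \mB a_{t-1} + \mC \theta_i^{\vs} + \epsilon_t^{\vs}$ is linear with additive noise, so the conditional mean $\E[\vs_t \mid \vs_{t-1}, a_{t-1}] = \mA \vs_{t-1} + \mB a_{t-1} + \mC \theta_i^{\vs}$ is an affine function whose task-dependent intercept is exactly $\mC \theta_i^{\vs}$. Writing the true and estimated recursions and substituting $\vs = h(\hat{\vs})$ yields the functional equation $h(\hat{\mA}\hat{\vs}_{t-1} + \hat{\mB} a_{t-1} + \hat{\mC}\hat{\theta}_i^{\vs} + \hat{\epsilon}_t^{\vs}) = \mA h(\hat{\vs}_{t-1}) + \mB a_{t-1} + \mC\theta_i^{\vs} + \epsilon_t^{\vs}$, which must hold as $(\hat{\vs}_{t-1}, a_{t-1})$ and the noise vary; since $h$ is component-wise and $\mA$ is full rank, this forces $h$ to be affine, and the matched initial condition $\vs_0 = \hat{\vs}_0$ then anchors the affine map so the two coordinate systems are aligned. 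Once aligned, the task-specific intercept $\mC \theta_i^{\vs}$ is pinned down by the distribution, and because $\mC$ has full column rank it admits a left inverse, so $\theta_i^{\vs}$ is recovered up to the residual component-wise ambiguity.

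For step (ii) I would treat $\theta_i^o$ and $\theta_i^r$ as domain-constant latent inputs to the invertible map $g$. With $\vs_t$ already identified, any variation in the conditional law of $o_t$ (respectively $r_{t+1}$) given $\vs_t$ that persists across the samples of task $\mathcal{M}_i$ must be attributed to $\theta_i^o$ (respectively $\theta_i^r$); invertibility of $g$ lets me read these factors off, and a component-wise argument analogous to step (i) completes this case. For step (iii), with $\vtheta_i$ now identified, each candidate edge in $D^{\vtheta_i \to o}$, $D^{\vtheta_i \to r}$, and $D^{\vtheta_i \to \vs}$ corresponds to a statistical dependence of the relevant child on the corresponding component of $\vtheta_i$ conditional on its other parents; under faithfulness such a dependence appears in the distribution if and only if the edge is present, so the masks are identifiable.

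I expect the main obstacle to be the first half of step (i): rigorously collapsing the nonlinear component-wise indeterminacy inherited from Theorem~\ref{thm:task1} down to an affine (scaling and permutation) alignment of the two state coordinate systems. This is precisely where linearity and additivity of the transition, the full-rank assumption on $\mA$, and the matched initial condition $\vs_0 = \hat{\vs}_0$ must all be used together. The cleanest route is likely an induction over $t$ establishing that $\hat{\vs}_t$ equals $\vs_t$ up to a fixed permutation and scaling, after which the linear-algebraic extraction of $\mC \theta_i^{\vs}$ and its inversion via the full-column-rank $\mC$ are straightforward; without this collapse, the affine manipulations underlying the whole recovery would not be justified.
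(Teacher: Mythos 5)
Your decomposition (recover $\theta_i^{\vs}$ from the linear transition, $\theta_i^o,\theta_i^r$ from the invertible mixing, masks from Markov/faithfulness) matches the paper's, and your steps (ii) and (iii) are essentially what the paper does: for $\theta_i^o$ and $\theta_i^r$ it considers $h' = g^{-1}\circ\hat{g}$, notes its Jacobian is block-structured with $\partial\vs_t/\partial\hat{\theta}_i^o = 0$ and $\partial\theta_i^o/\partial\hat{\vs}_t = 0$, and concludes from full rank of $\partial\vs_t/\partial\hat{\vs}_t$ that $\partial\theta_i^o/\partial\hat{\theta}_i^o$ is full rank, while the mask identifiability is delegated to the result of \citet{huang-2021-adarl}. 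The genuine gap is in your step (i), which is the heart of the theorem. Your pivotal claim --- that the functional equation $h(\hat{\mA}\hat{\vs}_{t-1}+\hat{\mB}a_{t-1}+\hat{\mC}\hat{\theta}_i^{\vs}+\hat{\epsilon}_t^{\vs}) = \mA h(\hat{\vs}_{t-1})+\mB a_{t-1}+\mC\theta_i^{\vs}+\epsilon_t^{\vs}$, together with $h$ being component-wise and $\mA$ full rank, ``forces $h$ to be affine'' --- is asserted rather than proven, and it is false as stated. Already in one dimension, $h(\hat{s}+\hat{b}) = h(\hat{s})+b$ admits the non-affine solutions $h(s) = (b/\hat{b})\,s + p(s)$ with $p$ any $\hat{b}$-periodic function, and $h(2\hat{s}) = 2h(\hat{s})$ admits $h(s)=s\,q(\log_2|s|)$ with $q$ periodic; ruling such solutions out would require exploiting the distributional structure of the i.i.d.\ innovations, which your pointwise equation in fact mishandles ($\epsilon_t^{\vs}$ is not a freely varying quantity but is determined by $(\hat{\vs}_{t-1},a_{t-1},\hat{\epsilon}_t^{\vs})$ once $\vs_t \equiv h(\hat{\vs}_t)$ is imposed). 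You yourself flag this collapse as the main obstacle and leave it open, so the proposal does not yet establish identifiability of $\theta_i^{\vs}$.

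The idea you are missing is that no affineness of $h$ is needed at all. The paper unrolls both linear recursions from $t=0$, uses the anchored initial condition $\vs_0=\hat{\vs}_0$ to eliminate the initial state, and then differentiates the resulting identity with respect to $\hat{\theta}_i^{\vs}$. Every nuisance term (actions, noises, and whatever nonlinearity $h$ carries) either does not depend on $\hat{\theta}_i^{\vs}$ or enters only through the chain rule $\partial\vs_t/\partial\hat{\theta}_i^{\vs} = \mathbf{J}^h_t\,\partial\hat{\vs}_t/\partial\hat{\theta}_i^{\vs}$, and since $\partial\hat{\vs}_t/\partial\hat{\theta}_i^{\vs}=\hat{\mM}$ one lands on the clean matrix identity $\mM\,\frac{\partial\theta_i^{\vs}}{\partial\hat{\theta}_i^{\vs}} = \mathbf{J}^h_t\,\hat{\mM}$, where $\mM = \left(\sum_{k=0}^{t-1}\mA^k\right)\mC$ and $\hat{\mM} = \left(\sum_{k=0}^{t-1}\hat{\mA}^k\right)\hat{\mC}$. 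Full rank of $\mathbf{J}^h_t$ (inherited from Theorem~\ref{thm:task1}), full column rank of $\mM$ and $\hat{\mM}$ (this is exactly where the full rank of $\mA$ and full column rank of $\mC$ enter, via Proposition~\ref{prop:matrix}), and the rank inequality for matrix products then force $\frac{\partial\theta_i^{\vs}}{\partial\hat{\theta}_i^{\vs}}$ to be full rank, which is the paper's criterion for component-wise identifiability. So the linear-algebraic endgame you envisioned (extract the task-specific term, invert via full column rank) is the right target, but it must be reached by differentiating the unrolled recursion rather than by first aligning the two state coordinate systems; the latter is both unnecessary and, by your own admission, unestablished.
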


\begin{proof}
    The proofs for the identifiablity of $D^{\vtheta_i \to \cdot}$ are given in \citet{huang-2021-adarl}. Here, we provide the proof for the identifiablity of $\vtheta_i$, which is done in the following four steps:
    \begin{itemize}[leftmargin=12pt,topsep=0pt,itemsep=0pt]
        \item In step 1, we prove that $\theta^o_i$ can be identified up to component-wise transformation when only the observation function exhibits distribution shifts.
        \item In step 2, we demonstrate that $\theta^r_i$ is component-wise identifiable when only the reward function experiences distribution shifts.
        \item In step 3, we show that $\theta^{\vs}_i$ can be identified component-wisely when only the transition function undergoes distribution shifts.
        \item In step 4, we establish that in the general case, where the observation, reward, and transition functions may undergo distribution shifts simultaneously, $\vtheta_i = \{\theta_i^o, \theta_i^r, \theta_i^{\vs}\}$ is identifiable.
    \end{itemize}

    \textbf{Step 1: prove the identifiability of $\theta^o_i$.}

    According to Eq. (\ref{eq:thm2_data}), we have
    \begin{equation}
        y_t = g(\vs_t, \theta_i^o, \epsilon_t).
    \end{equation}

    Denote $\vx_t = (\vs_t, \theta_i^o)$. There exists
    \begin{equation}
        \vx_t = h'(\hat{\vx}_t),
    \end{equation}
    where $h'=g^{-1} \circ \hat{g}$, and $\hat{\vx}_t$ is the estimator of $\vx_t$. Since both $g$ and $\hat{g}$ are invertible, $h'$ is invertible. Therefore, we have
    \begin{equation}
        \mathbf{J}^{h'}_t = 
        \begin{bmatrix}
            \frac{\partial \vs_t}{\partial \hat{\vs}_t} & \frac{\partial \vs_t}{\partial \hat{\theta}_i^o} \\
            \frac{\partial \theta_i^o}{\partial \hat{\vs}_t} &\frac{\partial \theta_i^o}{\partial \hat{\theta}_i^o}
        \end{bmatrix},
    \end{equation}

    where $\mathbf{J}^{h'}_t$ is full rank. Note that $\frac{\partial \vs_t}{\partial \hat{\theta}_i^o} = 0$ and $\frac{\partial \theta_i^o}{\partial \hat{\vs}_t} = 0$. Further recall that we assume the identifiability of $\vs_t$, which means that $\mathbf{J}^h_t = \frac{\partial \vs_t}{\partial \hat{\vs}_t}$ is full rank. We can derive that $\frac{\partial \theta_i^o}{\partial \hat{\theta}_i^o}$ must be full rank. That is, $\theta_i^o$ is component-wise identifiable.

    \textbf{Step 2: prove the identifiability of $\theta^r_i$.}
    
    If only the reward function $g^r$ exhibits distribution shifts, we have
    \begin{equation}
        y_t = g(\vs_t, \theta_i^r, \epsilon_t).
    \end{equation}
    
    It is straightforward to see that $\theta_i^r$ is blockwise identifiable using the same technique in Step 1.
    
    \textbf{Step 3: prove the identifiability of $\theta^{\vs}_i$.}
    
    Recall that we have
    \begin{equation}
        \vs_{t} = \mA \vs_{t-1} + \mB a_{t-1} + \mC \theta_i^{\vs} + \epsilon_t^{\vs}.
    \end{equation}

    By leveraging the recursive property of the state transition process, we can derive that
    \begin{equation}
        \vs_t = \mA^t \vs_0 + \left( \sum_{k=0}^{t-1}\mA^k \mB\right) a_{t-1-k} + \left( \sum_{k=0}^{t-1} \mA^k \right) \mC \theta_i^{\vs} + \left( \sum_{k=0}^{t-1} \mA^k \right) \epsilon_{t-k}.
        \label{eq:transition}
    \end{equation}

    Similarly for $\hat{\vs}_t$, we have
    \begin{equation}
        \hat{\vs}_t = \hat{\mA}^t \hat{\vs}_0 + \left( \sum_{k=0}^{t-1}\hat{\mA}^k \hat{\mB}\right) a_{t-1-k} + \left( \sum_{k=0}^{t-1} \hat{\mA}^k \right) \hat{\mC} \theta_i^{\vs} + \left( \sum_{k=0}^{t-1} \hat{\mA}^k \right) \epsilon_{t-k}.
        \label{eq:transition2}
    \end{equation}

    Note that $\vs_0 = \hat{\vs}_0$. Therefore, combining Eq. (\ref{eq:transition}) and Eq. (\ref{eq:transition2}) gives
    \begin{equation}
        \left( \sum_{k=0}^{t-1} \mA^k \right) \mC \theta_i^{\vs} = \vs_t - \mA^t \hat{\mA}^{-t} \left[ \hat{\vs}_t - \left( \sum_{k=0}^{t-1} \hat{\mA}^k \right) \hat{\mC} \theta_i^{\vs} \right] + \Theta,
    \end{equation}
    where $\Theta$ is a constant term. Taking the derivative w.r.t $\hat{\theta}_i^{\vs}$ on both sides, we obtain
    \begin{equation}
        \mM \frac{\partial \theta_i^{\vs}}{\partial \hat{\theta}_i^{\vs}} = \frac{\partial \vs_t}{\partial \hat{\vs}_t}\frac{\partial \hat{\vs}_t}{\partial \hat{\theta}_i^{\vs}} - \mA^t \hat{\mA}^{-t} \left[ \frac{\partial \hat{\vs}_t}{\partial \hat{\theta}_i^{\vs}} - \hat{\mM} \right],
    \end{equation}
    where $\mM = \left( \sum_{k=0}^{t-1} \mA^k \right) \mC$ and $\hat{\mM} = \left( \sum_{k=0}^{t-1} \hat{\mA}^k \right) \hat{\mC}$. Note that we further have $\frac{\partial \hat{\vs}_t}{\partial \hat{\theta}_i^{\vs}} = \hat{\mM}$ according to Eq. (\ref{eq:transition2}). That is,
    \begin{equation}
        \mM \frac{\partial \theta_i^{\vs}}{\partial \hat{\theta}_i^{\vs}} = \frac{\partial \vs_t}{\partial \hat{\vs}_t}  \hat{\mM}.
        \label{eq:disentangle}
    \end{equation}

    Recall that $\frac{\partial \vs_t}{\partial \hat{\vs}_t} = \mathbf{J}^h_t$ is full rank. Moreover, the full column rank of $\mM$ and $\hat{\mM}$ is guaranteed by the full rank of $\mA$ and the full column rank of $\mC$ (see Proposition \ref{prop:matrix}). Therefore, we can derive
    \begin{equation}
        \text{rank}(\mM \frac{\partial \theta_i^{\vs}}{\partial \hat{\theta}_i^{\vs}}) = \text{rank}(\frac{\partial \vs_t}{\partial \hat{\vs}_t}  \hat{\mM}) = \dim \theta_i^{\vs}.
        \label{eq:thm3step}
    \end{equation}

    Due to the rank inequality property of matrix products, we have
    \begin{equation}
        \text{rank}(\mM \frac{\partial \theta_i^{\vs}}{\partial \hat{\theta}_i^{\vs}}) \leq \min\left(\mM, \text{rank}(\frac{\partial \theta_i^{\vs}}{\partial \hat{\theta}_i^{\vs}})\right) = \min\left(\dim \theta_i^{\vs}, \text{rank}(\frac{\partial \theta_i^{\vs}}{\partial \hat{\theta}_i^{\vs}})\right) \leq \dim \theta_i^{\vs}.
        \label{eq:thm3step2}
    \end{equation}

    Eq. (\ref{eq:thm3step}) and Eq. (\ref{eq:thm3step2}) show that
    \begin{equation}
        \dim \theta_i^{\vs} \leq \min\left(\dim \theta_i^{\vs}, \text{rank}(\frac{\partial \theta_i^{\vs}}{\partial \hat{\theta}_i^{\vs}})\right) \leq \dim \theta_i^{\vs}.
    \end{equation}

    To make the above equation hold true, it must have $\text{rank}(\frac{\partial \theta_i^{\vs}}{\partial \hat{\theta}_i^{\vs}}) = \dim \theta_i^{\vs}$. That is, $\frac{\partial \theta_i^{\vs}}{\partial \hat{\theta}_i^{\vs}}$ must be full rank, then $\theta_i^{\vs}$ must be component-wise identifiable.
    
    \textbf{Step 4: prove the identifiability of $\vtheta_i$ in the general case.}
    
    This step can be easily demonstrated by directly combining the proofs above.
\end{proof}

\begin{prop}\label{prop:matrix}
    Suppose $\mA$ is a matrix with full rank, and $\mC$ is a matrix with full column rank. Define $\mM = \left( \sum_{k=0}^{t-1} \mA^k \right) \mC$. Then $\mM$ is full column rank.
\end{prop}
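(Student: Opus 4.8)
The plan is to factor out the structure already present in the definition and reduce everything to a single invertibility question. Writing $\mS \triangleq \sum_{k=0}^{t-1} \mA^k$, so that $\mM = \mS\,\mC$, I would first record the elementary fact that left-multiplying a full-column-rank matrix by an \emph{invertible} matrix preserves full column rank: if $\mS$ is invertible and $\mC$ is injective as a linear map (which is what full column rank means), then $\mM = \mS\mC$ is a composition of injective maps, hence injective, so $\ker(\mM) = \{\vzero\}$ and $\mM$ has full column rank. Equivalently, $\text{rank}(\mS\mC) = \text{rank}(\mC)$ equals the number of columns of $\mC$, which is also the number of columns of $\mM$. Thus the entire proposition reduces to showing that $\mS$ is invertible.

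To analyze $\mS$ I would express it as a polynomial in $\mA$, namely $\mS = p(\mA)$ with $p(z) = \sum_{k=0}^{t-1} z^k$. By the spectral mapping theorem the eigenvalues of $\mS$ are exactly $\{\,p(\lambda) : \lambda \in \text{spec}(\mA)\,\}$, so $\mS$ is invertible if and only if $p(\lambda) \neq 0$ for every eigenvalue $\lambda$ of $\mA$. Since $p(z) = (z^{t}-1)/(z-1)$ for $z \neq 1$ and $p(1) = t$, the zeros of $p$ are precisely the $t$-th roots of unity other than $1$. Hence $\mS$ is invertible as soon as no eigenvalue of $\mA$ is a nontrivial $t$-th root of unity, and the full-column-rank conclusion for $\mM$ follows from the first paragraph.

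The main obstacle is that the bare hypothesis ``$\mA$ is full rank'' does \emph{not} by itself rule out such eigenvalues: for instance $\mA = -\mI$ with $t=2$ is invertible yet gives $\mS = \mI + (-\mI) = \vzero$, so $\mM = \vzero$ fails to be full column rank. An additional spectral assumption on $\mA$ is therefore unavoidable, and I would make it explicit. In the present setting $\mA$ is the transition matrix of a stable linear latent dynamics, for which the natural condition is $\rho(\mA) < 1$; under this assumption every eigenvalue has modulus strictly below $1$ and so cannot be a root of unity, whence $p(\lambda) \neq 0$ for all $\lambda$ and $\mS$ is invertible. I expect the crux of a fully rigorous argument to be exactly pinning down this eigenvalue/spectral-radius condition, since it is the gap between the clean statement as written and a correct proof, with the rank manipulations of the first two paragraphs being routine once invertibility of $\mS$ is secured.
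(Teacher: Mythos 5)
Your argument follows the same route as the paper's own proof: both reduce the claim to invertibility of $\mS=\sum_{k=0}^{t-1}\mA^{k}$ (the paper calls it $\mN$), and both test that invertibility through the eigenvalues of $\mA$, using the fact that the eigenvalues of a polynomial in $\mA$ are the polynomial evaluated at the eigenvalues of $\mA$. The difference is that you carry the analysis to its correct conclusion while the paper does not. The zeros of $p(z)=\sum_{k=0}^{t-1}z^{k}$ are exactly the nontrivial $t$-th roots of unity, so full rank of $\mA$ (i.e., $0\notin\mathrm{spec}(\mA)$) is not sufficient; your counterexample $\mA=-\mI$, $t=2$, for which $\mS=\vzero$ and hence $\mM=\vzero$, shows the proposition is false as stated. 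The paper's proof papers over this by asserting that ``there exists at least one $t$'' for which the eigenvalue sums are nonzero --- an existential claim (trivially true, e.g.\ with $t=1$, where $\mS=\mI$) that is strictly weaker than the proposition itself, which is invoked in the proof of Theorem 2 at an arbitrary time index $t$. Your proposed repair --- an explicit spectral hypothesis such as $\rho(\mA)<1$, natural for stable latent dynamics, or minimally the assumption that no eigenvalue of $\mA$ is a $t$-th root of unity other than $1$ --- is exactly what is needed to make both the proposition and its downstream use in Theorem 2 sound. In short: same approach as the paper, but the gap you identified is genuine, and it lies in the paper's statement and proof rather than in your argument.
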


\begin{proof}
    To establish that $\mM$ is full column rank, it suffices to show that $\mN = \sum_{k=0}^{t-1} \mA^k$ is of full rank. Assume the eigenvalues of $\mA$ are denoted by $\Lambda$. Given that $\mA$ is of full rank, the eigenvalues of $\mA^k$ are $\Lambda^k$. Let $\Lambda^{\mN}$ represent the eigenvalues of $\mN$, then we have
    \begin{equation}
        \Lambda^{\mN} = \sum_{k=0}^{t-1} \Lambda^k.
        \label{eq:prop}
    \end{equation}
    It is evident that there exists at least one $t$ such that for any non-zero $\Lambda$, Eq. (\ref{eq:prop}) is non-zero, thereby confirming that $\mN$ has no zero eigenvalues. This verification ensures that $\mN$ is full rank. Hence, $\mM$ maintains full column rank.
\end{proof}

\subsection{Proof of Theorem \ref{thm:task3}}
In Theorem \ref{thm:task3}, we foucus on the identifiablity of the newly added variables $\vs^{\text{add}}_t$ and their corresponding structural matrices, when the state space is expanded by incorporating additional dimensions.

\begin{thm}
    (Identifiability of Expanded State Space). Assume the data generation process in Eq. (\ref{eq:data_gen}). Consider the expansion of the state space $\mathcal{S}$ by incorporating additional dimensions. Suppose $\vs_t$ has already been identified according to Theorem \ref{thm:task1}, then the component-wise identifiability of the newly added variables $\vs^{\text{add}}_t$ and the additional structural matrices, i.e., $D^{\vs^{\text{add}} \to \cdot}$ and $D^{\cdot \to \vs^{\text{add}}}$, can be established if $\vs^{\text{add}}_t$ (1) represents a differentiable function of $[o_t, r_{t+1}]$, i.e., $\vs^{\text{add}}_t = f(o_t, r_{t+1})$, and (2) fulfills conditions (1) and (2) specified in Theorem \ref{thm:task1}.
\end{thm}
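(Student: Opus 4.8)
The plan is to reduce the claim to a direct application of Theorem~\ref{thm:task1} to the \emph{expanded} latent state $\vs'_t = (\vs_t, \vs^{\text{add}}_t)$ of dimension $d + d'$. Since Theorem~\ref{thm:task1} already certifies component-wise identifiability of any latent vector whose generative process matches Eq.~(\ref{eq:data_gen}) with an invertible global map and whose components obey conditions (1)--(2), it suffices to verify that $\vs'_t$ inherits all three ingredients. I would first observe that the original block $\vs_t$ is recoverable from $[o_t, r_{t+1}]$, because Theorem~\ref{thm:task1} supplies an invertible $h$ with $\vs_t = h(\hat{\vs}_t)$, while the added block is recoverable by hypothesis (1), namely $\vs^{\text{add}}_t = f(o_t, r_{t+1})$ with $f$ differentiable. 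Stacking these two recovery maps yields a differentiable map $[o_t, r_{t+1}] \mapsto (\vs_t, \vs^{\text{add}}_t)$, which I would argue is the inverse of the expanded global map $g'$, thereby establishing the invertibility of $g'$ demanded by Theorem~\ref{thm:task1}.

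Next I would check conditions (1) and (2) of Theorem~\ref{thm:task1} for the full vector $\vs'_t$. Condition (1) (pairwise conditional independence of the components of $\hat{\vs}'_t$ given $\hat{\vs}'_{t-1}$) holds among the original coordinates by the hypothesis that $\vs_t$ was already identified, and among the added coordinates by hypothesis (2) of the present theorem; the factorized transition structure of Eq.~(\ref{eq:model}), in which each $s_{k,t}$ is drawn conditionally independently given the previous full state, then supplies the cross-block pairs. Condition (2) (linear independence of the vector functions $\boldsymbol{\omega}_{k,t}$, $\stackrel{\circ}{\boldsymbol{\omega}}_{k,t}$ of Eq.~(\ref{eq:vector})) likewise holds block-wise by the same two hypotheses. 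With invertibility and conditions (1)--(2) in hand, Theorem~\ref{thm:task1} applies verbatim to $\vs'_t$ and yields its component-wise identifiability; in particular the newly added coordinates $\vs^{\text{add}}_t$ are component-wise identifiable. Finally, granting the Markov condition and faithfulness, the identifiability of the additional structural matrices $D^{\vs^{\text{add}} \to \cdot}$ and $D^{\cdot \to \vs^{\text{add}}}$ follows by the same edge-recovery argument used for $D$ in Theorem~\ref{thm:task1} (the conditional-independence step of \citet{huang-2021-adarl}), now applied to the edges incident to the recovered variables $\vs^{\text{add}}_t$.

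The step I expect to be the main obstacle is the invertibility of the expanded global map $g'$. The delicate point is that enlarging the latent state from $d$ to $d+d'$ coordinates is consistent with an invertible observation map only if the added variables carry genuinely new information not already encoded in $\vs_t$; concretely, the stacked recovery map $[o_t,r_{t+1}] \mapsto (\vs_t, \vs^{\text{add}}_t)$ must be injective, i.e.\ its Jacobian must have full column rank $d+d'$. Hypothesis (1) guarantees only that $\vs^{\text{add}}_t$ is \emph{some} function of the observations, so I would still need to exclude the degenerate case in which $f(o_t,r_{t+1})$ is functionally dependent on the portion of the observation already explained by $\vs_t$. This is where the linear-independence condition (2) does the real work, since it forces the Jacobian rows associated with the new coordinates to be independent of those of the old ones. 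Making this full-rank argument precise --- rather than simply assuming the expanded $g'$ is invertible --- is the crux, after which the remainder is a bookkeeping reduction to Theorem~\ref{thm:task1}.
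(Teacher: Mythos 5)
Your proposal is correct and takes essentially the same route as the paper's proof: the paper likewise re-runs the machinery of Theorem~\ref{thm:task1} on the newly added variables (their conditional independence yields the analogues of Eqs.~(\ref{eq:state_independence}) and (\ref{eq:jacobian}), the linear-independence condition then gives component-wise identifiability, and the Markov and faithfulness conditions already in force cover the new structural matrices). The one point you flag as the crux --- establishing invertibility of the expanded global map $g'$ from condition (2) --- is in fact immediate and needs no rank argument: hypothesis (1), $\vs^{\text{add}}_t = f(o_t, r_{t+1})$, stacked with the recoverability of $\vs_t$ from $[o_t, r_{t+1}]$ already guaranteed by Theorem~\ref{thm:task1}, exhibits a left inverse of $g'$, which is exactly the invertibility that Theorem~\ref{thm:task1} assumes.
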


\begin{proof}
    For the newly added variables, since they also satisfy the conditional independence condition, we can derive the same properties as described in Eq. (\ref{eq:state_independence}) and Eq. (\ref{eq:jacobian}) using the same technique in the proof steps of Theorem \ref{thm:task1}. Additionally, since the vector functions corresponding to $\vs^{\text{add}}_t$ also satisfies linear independence condition, it is straightforward that $\vs^{\text{add}}_t$ can also be identified component-wisely. As for the additional structural matrices introduced, the Markov condition and faithfulness assumptions required for their identifiability have already been demanded in the identifiability properties of the existing structural matrices, thus no additional proof is needed.
\end{proof}

\begin{corollary}
    (Identifiability under Multiple Shifts). Assume the data generation process in Eq. (\ref{eq:data_dist}) involves both distribution shifts and state space shifts that comply with Theorem \ref{thm:task2} and Theorem \ref{thm:task3}, respectively. In this case, both the domain-specific factor $\vtheta_i$ and the newly added state variable $\vs^{\text{add}}_t$ are component-wise identifiable.
\end{corollary}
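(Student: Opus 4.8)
The plan is to show that the identifiability arguments of Theorem \ref{thm:task2} and Theorem \ref{thm:task3} act on structurally disjoint parts of the model and therefore compose. I would work with the enlarged latent state $\vs'_t = (\vs_t, \vs^{\text{add}}_t)$ and identify the two objects in sequence --- first $\vs^{\text{add}}_t$, then $\vtheta_i$ --- an order chosen so that the block-diagonal Jacobian argument used for $\vtheta_i$ already has the full identified state available to it, avoiding any circular dependence.

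First I would establish component-wise identifiability of $\vs^{\text{add}}_t$. By hypothesis $\vs^{\text{add}}_t = f(o_t, r_{t+1})$ is a differentiable function of the observation and reward and satisfies conditions (1)--(2) of Theorem \ref{thm:task1}. The key remark is that the distribution shift enters the transition only through the additive term $\mC \theta_i^{\vs}$, which is constant within a task and hence vanishes under the second- and third-order derivatives appearing in Eq. (\ref{eq:state_independence}) and Eq. (\ref{eq:jacobian}); likewise $\theta_i^o, \theta_i^r$ enter only the observation map. Consequently the conditional-independence and linear-independence conditions for the new coordinates are untouched by the presence of $\vtheta_i$, and the argument of Theorem \ref{thm:task3} applies verbatim once $\vs_t$ is identified, yielding identifiability of $\vs^{\text{add}}_t$ together with $D^{\vs^{\text{add}} \to \cdot}$ and $D^{\cdot \to \vs^{\text{add}}}$.

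Next I would identify $\vtheta_i$ by replaying the four steps of Theorem \ref{thm:task2} with $\vs'_t$ in place of $\vs_t$. Steps 1 and 2 identify $\theta_i^o$ and $\theta_i^r$ from the block-diagonal form of the Jacobian $\mathbf{J}^{h'}_t$: since $\vs'_t$ is now fully identified, the state block $\partial \vs'_t / \partial \hat{\vs}'_t$ is full rank and the cross blocks with the change factors vanish, forcing $\partial \theta_i^o / \partial \hat{\theta}_i^o$ and $\partial \theta_i^r / \partial \hat{\theta}_i^r$ to be full rank. Step 3 identifies $\theta_i^{\vs}$ through the recursive unrolling in Eq. (\ref{eq:transition}) and the rank identity in Eq. (\ref{eq:disentangle}); this carries over provided the enlarged transition still meets the hypotheses of Proposition \ref{prop:matrix}, namely $\mA$ full rank and $\mC$ full column rank on the expanded coordinates. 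Step 4 then combines the three, and identifiability of $D^{\vtheta_i \to \cdot}$ follows, as before, from the Markov and faithfulness assumptions.

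The main obstacle I anticipate is the bookkeeping of the structural hypotheses under expansion: one must check that enlarging the state to $\vs'_t$ preserves both the block-diagonality that decouples the change factors from the state and the full-rank conditions on $\mA$ and $\mC$ that Step 3 and Proposition \ref{prop:matrix} require. If the expansion were allowed to couple $\theta_i^{\vs}$ to the new coordinates through a rank-deficient sub-block, the rank argument in Eq. (\ref{eq:thm3step}) would break; the cleanest route is therefore to inherit these rank conditions as part of ``complying with Theorem \ref{thm:task2}'' and to read the block structure off the assumed functional form $\vs^{\text{add}}_t = f(o_t, r_{t+1})$.
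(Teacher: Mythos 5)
Your proposal is correct and takes essentially the same route as the paper, whose own proof is a one-line appeal to combining the conclusions of Theorems \ref{thm:task2} and \ref{thm:task3}. Your elaboration---identifying $\vs^{\text{add}}_t$ first so the enlarged state is available for the Jacobian block argument, and checking that the within-task constancy of $\vtheta_i$ and the inherited rank conditions on $\mA$ and $\mC$ let the two arguments compose without interference---simply fills in the details the paper leaves implicit.
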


\begin{proof}
    This corollary can be directly derived by leveraging the conclusions from Theorems \ref{thm:task1}-\ref{thm:task3}.
\end{proof}

\subsection{Extension to Nonlinear Cases: Challenges and Empirical Validation}\label{sec:discussion}
The main challenge in extending the identifiability of $\theta_i^{\vs}$ to nonlinear scenarios lies in the fact that $g^{\vs}$, in this context, represents a general nonparametric transition dynamic. This makes it difficult to disentangle $\theta_i^{\vs}$ from $(\vs_{t-1}, \theta_i^{\vs})$, as we do in the proofs of Theorem \ref{thm:task2}. Although recent works have made significant progress in establishing the identifiability of causal processes in nonparametric settings \citep{yao-2021-learning,yao-2022-temporally,kong-2023-identification}, they typically rely on the assumption of invertibility. However, as noted in \citet{liu-2023-learning}, while assuming the invertibility of the mixing function $g$ is reasonable, we cannot make the same assumption for $g^{\vs}$, as this often does not hold in practice. But this does not imply that $\theta_i^{\vs}$ is unidentifiable. On the contrary, the empirical results in Fig. \ref{fig:ape_estimated_theta} demonstrate that even in nonlinear settings, the learned $\hat{\theta}_i^{\vs}$ remains a monotonic function of the actual change factors $\theta_i^{\vs}$, corroborated by findings from \citet{huang-2021-adarl}. This motivates us to extend Theorem \ref{thm:task2} to broader nonlinear scenarios in our future research, a task that is challenging yet promising.

\begin{figure}[ht]
    \centering
    \includegraphics[width=.6\linewidth]{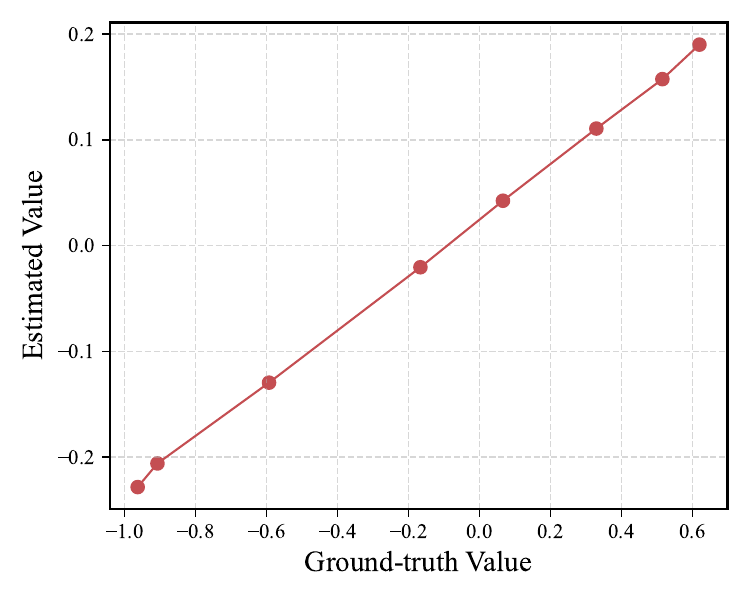}
    \caption{\small{\textbf{The estimated $\hat{\theta}_i^{\vs}$ is a monotonic function of the ground-truth values in our simulations.}}}
    \label{fig:ape_estimated_theta}
\end{figure}

\section{Self-Adaptive Expansion Strategy}\label{ape:expand}
We design three different approaches for state space expansion: \textit{Random}, \textit{Deterministic}, and \textit{Self-Adaptive}. In \textit{Random}, the number of causal variables expanded is randomly chosen. For \textit{Deterministic}, we follow the approach used in DEN \citep{yoon-2017-lifelong}, first adding a predefined number of variables to the causal graph and then applying group sparsity regularization to the network parameters corresponding to the newly added variables. Table \ref{tab:expand} provides the final expansion results of various methods across these tasks, as well as the scope of expansion. Next, we introduce the \textit{Self-Adaptive} method.

Different from prior methods, \textit{Self-Adaptive} integrates state space expansion into the reinforcement learning framework. To achieve this, the first thing that needs to be done is to transform the expansion concept into a decision-making process. Since our goal is to determine how many causal variables should be incorporated into the causal graph, the action $u_t$ can be intuitively represented as the number of variables to add. Regarding the state variable $v_t$, it is designed to reflect the current state of the system. Given that the model's expansion is inseparable from its original structure and adaptability to the target task $\mathcal{M}_i$, we formulate the state variable as a reflection of both the original network size and its predictive capability on $\mathcal{M}_i$, denoted as $v_t=(x_t, \Delta_{\tau})$. To be specific, $x_t=(x^o_t, x^r_t, x^{\vs}_t)$, where $x^o_t$, $x^r_t$, and $x^{\vs}_t$ represent the combination of the number of nodes for each layer in the models defined in Eq. (\ref{eq:model}), respectively. If the transition model is an $m$-layer network, then $x^{\vs}_t$ is an $m$-dimensional vector, with the $l$-th element representing the number of nodes in the $l$-th layer. Moreover, $\Delta_\tau=\tau - \tau^\star$ represents the difference between the model's predictive performance $\tau$ and the threshold $\tau^\star$.

Whenever the controller takes an action $u_t$, we correspondingly extend the model by augmenting it with additional components, and train the newly added parts from scratch with a few amount of data. For instance, if $u_t=d'$, it implies that $d'$ causal state variables will be incorporated. Then for the observation model, we only need to focus on learning the mapping from $\vs^{\text{add}}_t$ to $o_t$, together with the structural constraints $D^{\vs^{\text{add}}_t \to o}$. A similar principle applies to the reward and transition model. Finally, we re-estimate the performance of the expanded model, denoted as $\tau'_t$, and derive the reward as:
\begin{equation}
    r_t = (\tau - \tau'_t) - \lambda_r u_t,
\end{equation}
where $\tau - \tau'_t$ reflects the change in the model's representational capacity before and after expansion, the term $-\lambda_r u_t$ acts as a regularization penalty that imposes a cost on model expansion, and $\lambda_r$ is the corresponding scaling factor, which is set to 
$0.01$ in our experiments.

Building upon the above foundation, it becomes feasible to develop and train a policy aimed at dynamically enhancing the causal model in adaptation to current task $\mathcal{M}_i$ through strategic expansion.

\begin{table}
    \centering
    \begin{tabular}{lcccc}
    \hline
     Experiments & \textit{Random} & \textit{Deterministic} & \textit{Self-Adaptive} & Expansion Scope \\
    \hline
     Simulated & 4.0 & 5.0 & 4.2 & (0, 8]\\
     CartPole & 6.2 & 6.0 & 4.0 & (0, 8]\\
     CoinRun & 9.4 & 8.0 & 6.8 & (0, 10]\\
     Atari & 7.8 & 8.0 & 6.4 & (0, 10]\\
    \hline
    \end{tabular}
    \caption{\small{Detailed expansion results of different methods in our experiments.}}
    \label{tab:expand}
\end{table}

\section{Distribution Shifts vs. Space Expansions in a MDP scenario}
We further present a simple MDP scenario to illustrate the two types of environmental changes that CSR addresses: \textbf{distribution shifts} and \textbf{space expansions}. Specifically, Fig. \ref{fig:ape_illu_source} provides a graphical representation of the generative environment model for the source task, where $\vs_{1,t}$ denotes the latent causal variable, and $\vtheta = \{\vtheta^o, \vtheta^r\}$ represents task-specific change factors.

For \textbf{distribution shift} scenarios (Fig. \ref{fig:ape_illu_dist}), the target task shares the same causal variable as the source task but differs in the value of $\vtheta$. For example, in CartPole, the gravity ($\vtheta^r$) might shift from 9.8 to 5. Notably, this implies that the causal diagram remains unchanged, an assumption commonly adopted in prior works \citep{huang-2021-adarl,huang-2022-action,gaya-2022-building}.

In contrast, \textbf{space expansion} involves the emergence of new variables (e.g., $\vs_{2,t}$ in Fig. \ref{fig:ape_illu_space}), which inevitably leads to changes in the causal diagram. Consequently, world models must expand their state and action spaces to accommodate these new variables. This necessity motivates the development of CSR. Algorithm \ref{alg:framework} presents the pseudocode for CSR, where the model estimation and policy learning processes are implemented using the Dreamer framework. Notably, CSR is not restricted to Dreamer and can can also be implemented with a variety of policy-learning algorithms, such as Q-learning \citep{mnih-2015-human} and DDPG \citep{lillicrap-2015-continuous}.

\begin{figure}
    \centering
    \begin{minipage}{0.3\linewidth}
        \subfigure[Source task]{
            \resizebox{\linewidth}{!}{
                \begin{tikzpicture}[> = latex, 
    auto,
    observed/.style={circle, draw=black, fill=black!15, thick, inner sep=0pt, minimum size=9mm},
    unobserved/.style={circle, draw=black, thick, inner sep=0pt, minimum size=9mm},
    theta/.style={rectangle, draw=black, thick, inner sep=0pt, minimum width=5mm, minimum height=6mm},
] 

        \node[observed] (o1) { $o_{t-1}$};
        \node[observed] (o2) [right=2cm of o1] { $o_t$};
        \node[observed] (o3) [right=2cm of o2] { $o_{t+1}$};
        \node[unobserved] (s11) [below=0.4cm of o1] { $s_{1,t-1}$};
        \node[unobserved] (s12) [right=2cm of s11] { $s_{1,t}$};
        \node[unobserved] (s13) [right=2cm of s12] { $s_{1,t+1}$};
        
        \node[observed] (a1) [below right=0.3cm and 0.4cm of s11] { $a_{t-1}$};
        \node[observed] (a2) [right=2cm of a1] { $a_t$};
        \node[observed] (r1) [below=0.6cm of s12] { $r_t$};
        \node[observed] (r2) [right=2cm of r1] { $r_{t+1}$};

        \node[theta] (to1) [below left=0.05cm and 0.3cm of o1] { $\vtheta^o$ };
        \node[theta] (to2) [below left=0.05cm and 0.3cm of o2] { $\vtheta^o$ };
        \node[theta] (to3) [below left=0.05cm and 0.3cm of o3] { $\vtheta^o$ };
        \node[theta] (tr1) [below left=0.4cm and 0.01cm of r1] { $\vtheta^r$ };
        \node[theta] (tr2) [below left=0.4cm and 0.01cm of r2] { $\vtheta^r$ };
        
        \node[observed] (R) [below=1cm of a2] { $R_t$};
                
        \path[->, very thick, orange] (s11) edge (o1);
        \path[->, very thick, orange] (s12) edge (o2);
        \path[->, very thick, orange] (s13) edge (o3);
        
        \path[->, very thick] (s11) edge (s12);
        \path[->, very thick] (s12) edge (s13);
        
        \path[->, very thick, blue] (s11) edge [bend left=15] (r1);
        \path[->, very thick, blue] (s12) edge [bend left=15] (r2);
        
        \path[->, very thick, red] (a1) edge [bend left=15] (s12);
        \path[->, very thick, red] (a1) edge (r1);
        \path[->, very thick, red] (a2) edge [bend left=15] (s13);
        \path[->, very thick, red] (a2) edge (r2);

        \path[->, very thick, purple] (to1) edge (o1);
        \path[->, very thick, purple] (to2) edge (o2);
        \path[->, very thick, purple] (to3) edge (o3);
        \path[->, very thick, purple] (tr1) edge (r1);
        \path[->, very thick, purple] (tr2) edge (r2);
        
        \path[->, very thick, green!60!black] (r1) edge (R);
        \path[->, very thick, green!60!black] (r2) edge (R);        
    
    \end{tikzpicture}
            }
            \label{fig:ape_illu_source}
        }
    \end{minipage}
    \hspace{0.02\linewidth}
    \begin{minipage}{0.3\linewidth}
        \subfigure[Distribution shifts]{
            \resizebox{\linewidth}{!}{
                \begin{tikzpicture}[> = latex, 
    auto,
    observed/.style={circle, draw=black, fill=black!15, thick, inner sep=0pt, minimum size=9mm},
    unobserved/.style={circle, draw=black, thick, inner sep=0pt, minimum size=9mm},
    theta/.style={rectangle, draw=black, thick, inner sep=0pt, minimum width=5mm, minimum height=6mm},
] 

        \node[observed] (o1) { $o_{t-1}$};
        \node[observed] (o2) [right=2cm of o1] { $o_t$};
        \node[observed] (o3) [right=2cm of o2] { $o_{t+1}$};
        \node[unobserved] (s11) [below=0.4cm of o1] { $s_{1,t-1}$};
        \node[unobserved] (s12) [right=2cm of s11] { $s_{1,t}$};
        \node[unobserved] (s13) [right=2cm of s12] { $s_{1,t+1}$};
        
        \node[observed] (a1) [below right=0.3cm and 0.4cm of s11] { $a_{t-1}$};
        \node[observed] (a2) [right=2cm of a1] { $a_t$};
        \node[observed] (r1) [below=0.6cm of s12] { $r_t$};
        \node[observed] (r2) [right=2cm of r1] { $r_{t+1}$};

        \node[theta] (to1) [below left=0.05cm and 0.3cm of o1] { $\vtheta^o$ };
        \node[theta] (to2) [below left=0.05cm and 0.3cm of o2] { $\vtheta^o$ };
        \node[theta] (to3) [below left=0.05cm and 0.3cm of o3] { $\vtheta^o$ };
        \node[theta, fill=red!30] (tr1) [below left=0.4cm and 0.01cm of r1] { $\vtheta^{r'}$ };
        \node[theta, fill=red!30] (tr2) [below left=0.4cm and 0.01cm of r2] { $\vtheta^{r'}$ };

        \node[observed] (R) [below=1cm of a2] { $R_t$};
                
        \path[->, very thick, orange] (s11) edge (o1);
        \path[->, very thick, orange] (s12) edge (o2);
        \path[->, very thick, orange] (s13) edge (o3);
        
        \path[->, very thick] (s11) edge (s12);
        \path[->, very thick] (s12) edge (s13);
        
        \path[->, very thick, blue] (s11) edge [bend left=15] (r1);
        \path[->, very thick, blue] (s12) edge [bend left=15] (r2);
        
        \path[->, very thick, red] (a1) edge [bend left=15] (s12);
        \path[->, very thick, red] (a1) edge (r1);
        \path[->, very thick, red] (a2) edge [bend left=15] (s13);
        \path[->, very thick, red] (a2) edge (r2);

        \path[->, very thick, purple] (to1) edge (o1);
        \path[->, very thick, purple] (to2) edge (o2);
        \path[->, very thick, purple] (to3) edge (o3);
        \path[->, very thick, purple] (tr1) edge (r1);
        \path[->, very thick, purple] (tr2) edge (r2);
        
        \path[->, very thick, green!60!black] (r1) edge (R);
        \path[->, very thick, green!60!black] (r2) edge (R);        
    
    \end{tikzpicture}
            }
            \label{fig:ape_illu_dist}
        }
    \end{minipage}
    \hspace{0.02\linewidth}
    \begin{minipage}{0.3\linewidth}
        \subfigure[Space expansions]{
            \resizebox{\linewidth}{!}{
                \begin{tikzpicture}[> = latex, 
    auto,
    observed/.style={circle, draw=black, fill=black!15, thick, inner sep=0pt, minimum size=9mm},
    unobserved/.style={circle, draw=black, thick, inner sep=0pt, minimum size=9mm},
    theta/.style={rectangle, draw=black, thick, inner sep=0pt, minimum width=5mm, minimum height=6mm},
] 

        \node[observed] (o1) { $o_{t-1}$};
        \node[observed] (o2) [right=2cm of o1] { $o_t$};
        \node[observed] (o3) [right=2cm of o2] { $o_{t+1}$};
        \node[unobserved] (s11) [below=0.4cm of o1] { $s_{1,t-1}$};
        \node[unobserved] (s12) [right=2cm of s11] { $s_{1,t}$};
        \node[unobserved] (s13) [right=2cm of s12] { $s_{1,t+1}$};
        \node[unobserved, fill=red!30] (s21) [below=0.3cm of s11] { $s_{2,t-1}$};
        \node[unobserved, fill=red!30] (s22) [right=2cm of s21] { $s_{2,t}$};
        \node[unobserved, fill=red!30] (s23) [right=2cm of s22] { $s_{2,t+1}$};

        \node[observed] (a1) [below right=0.3cm and 0.4cm of s21] { $a_{t-1}$};
        \node[observed] (a2) [right=2cm of a1] { $a_t$};
        \node[observed] (r1) [below=0.6cm of s22] { $r_t$};
        \node[observed] (r2) [right=2cm of r1] { $r_{t+1}$};

        \node[theta] (to1) [below left=0.05cm and 0.3cm of o1] { $\vtheta^o$ };
        \node[theta] (to2) [below left=0.05cm and 0.3cm of o2] { $\vtheta^o$ };
        \node[theta] (to3) [below left=0.05cm and 0.3cm of o3] { $\vtheta^o$ };
        \node[theta] (tr1) [below left=0.4cm and 0.01cm of r1] { $\vtheta^r$ };
        \node[theta] (tr2) [below left=0.4cm and 0.01cm of r2] { $\vtheta^r$ };
        
        \node[observed] (R) [below=0.6cm of a2] { $R_t$};
        
        \path[->, very thick, orange] (s11) edge (o1);
        \path[->, very thick, orange] (s12) edge (o2);
        \path[->, very thick, orange] (s13) edge (o3);
        
        \path[->, very thick, orange] (s21) edge [bend right=40] (o1);
        \path[->, very thick, orange] (s22) edge [bend right=40] (o2);
        \path[->, very thick, orange] (s23) edge [bend right=40] (o3);
        
        \path[->, very thick] (s11) edge (s12);
        \path[->, very thick] (s12) edge (s13);
        \path[->, very thick] (s21) edge (s22);
        \path[->, very thick] (s22) edge (s23);      
        \path[->, very thick] (s21) edge (s12);
        \path[->, very thick] (s22) edge (s13);

        \path[->, very thick, blue] (s11) edge [bend left=15] (r1);
        \path[->, very thick, blue] (s12) edge [bend left=15] (r2);
        
        \path[->, very thick, red] (a1) edge [bend left=15] (s12);
        \path[->, very thick, red] (a1) edge [bend left=15] (s22);
        \path[->, very thick, red] (a1) edge (r1);
        \path[->, very thick, red] (a2) edge [bend left=15] (s13);
        \path[->, very thick, red] (a2) edge [bend left=15] (s23);
        \path[->, very thick, red] (a2) edge (r2);

        \path[->, very thick, purple] (to1) edge (o1);
        \path[->, very thick, purple] (to2) edge (o2);
        \path[->, very thick, purple] (to3) edge (o3);
        \path[->, very thick, purple] (tr1) edge (r1);
        \path[->, very thick, purple] (tr2) edge (r2);
        
        \path[->, very thick, green!60!black] (r1) edge (R);
        \path[->, very thick, green!60!black] (r2) edge (R);
    
    \end{tikzpicture}
            }
            \label{fig:ape_illu_space}
        }
    \end{minipage}
    \caption{\small{\textbf{A graphical illustration of the generative environment model and the two types of changes addressed by CSR.} (a) Source task; (b) Distribution shift scenario where the causal diagram remains unchanged but the value of $\vtheta^r$ differs; (c) Space expansion scenario involving the emergence of new variable $\vs_{2,t}$. Grey nodes denote observed variables, white nodes represent unobserved variables, and red nodes highlight the changing components in the target task compared to the source task.}}
\end{figure}

\begin{algorithm}
    \caption{Towards Generalizable RL through CSR}
    \label{alg:framework}
    \SetNlSty{}{}{:}
    \SetAlgoVlined
    \DontPrintSemicolon
    \SetNlSty{}{}{}

    \makeatletter
    \newcommand{\normalsizecomment}[1]{\textnormal{\fontsize{\f@size}{\baselineskip}\selectfont#1}}
    \SetCommentSty{normalsizecomment}
    \makeatother

    \KwIn{Maximum distribution shifts detection step $T_c$.}
    Initialize World Model $W$ with parameters $\phi,\beta,\alpha$ randomly.\;
    Initialize $D$ as an all-ones matrix.\;
    Record multiple rollouts from source task $\mathcal{M}_1$ and estimate the model in Eq. (\ref{eq:model}).\;
    Obtain the optimal policy $\pi^\star$ in $\mathcal{M}_1$ and calculate threshold $\tau^\star$ using $W$.\;

    \For{target tasks $\mathcal{M}_i (i=2,3,\ldots)$}{
        Collect multiple rollouts $\mathcal{B}$ from $\mathcal{M}_i$.\;
        \While{generalization}{
            \tcp*[h]{Model re-estimation}\;
            \For{training steps $c = 1, \ldots,T_c$}{
                Draw data sequences $\{\langle o_t, a_t, r_t \rangle \}_{t\in\mathcal{T}_i}$ from $\mathcal{B}$.\;
                Compute model states $\vs_t \sim q_{\alpha}(\vs_t \mid \vs_{t-1}, \vtheta_i, a_{t-1}, o_t)$.\;
                Update $\vtheta_i$ using $\mathcal{J}$, with all other parameters fixed.\;
            }
            Calculate $\mathcal{L}_{\text{pred}}$ using $W$.\;
            
            \tcp*[h]{Distribution Shifts Detection and Characterization}\;
            \eIf{$\mathcal{L}_{\text{pred}} < \tau^\star$}{
                \Return Latest model $W$ and policy $\pi^\star$ for task $\mathcal{M}_i$.\;
            }{
                \tcp*[h]{State/Action Space Expansions}\;
                Search to introduce new causal variables into the graph.\;
                \While{not converged}{
                    \For{training steps $c = 1, \ldots,C$}{
                        \tcp*[h]{Model estimation (Causal Graph Pruning is concurrently implemented)}\;
                        Draw data sequences from $\mathcal{B}$ and compute model states using $q_\alpha$.\;
                        Update $\phi,\beta,\alpha, \vtheta_i, D$ using $\mathcal{J}$.\;
                        
                        \tcp*[h]{Policy Learning}\;
                        Imagine trajectories from each $\vs_t$.\;
                        Update policy $\pi^\star$ from the imagined trajectories via REINFORCE gradients.\;
                    }
                    \tcp*[h]{Environment interaction}\;
                    \For{time step $t=1,\ldots,T$}{
                        Select action $a_t$ with probability $\epsilon$; otherwise calculate $a_t$ using $\pi^\star$.\;
                        Execute action $a_t$ and receive reward $r_{t+1}$ and observation $o_{t+1}$.\;
                    }
                    Store transition $\{\langle o_t, a_t, r_t \rangle \}_{t=1}^{T}$ into replay buffer $\mathcal{B}$.\;
                }
                \Return Latest model $W$ and policy $\pi^\star$ for task $\mathcal{M}_i$.\;
            }
        }        
    }
\end{algorithm}

\section{Complete Experimental Details}\label{ape:exp}
Below, we provide detailed implementation specifics for the experiments, including model architectures and training details, the selection of hyperparameters, a thorough description of the environments, and additional experimental results.

\subsection{Model architectures and training details}
\textbf{Model components.} Following Dreamer \citep{hafner-2020-mastering,hafner-2023-mastering}, we implement the world model as a Recurrent State-Space Model (RSSM, \citep{cobbe-2019-quantifying}), the encoder and decoder in the representation model and observation model as convolutional neural networks \citep{lecun-1989-backpropagation}, and all other functions as multi-layer perceptrons with ELU activations \citep{clevert-2015-fast}.

\textbf{The implementation of $D$.} We adopt the Gumbel-Softmax \citep{ng-2022-masked} and Sigmoid methods to approximate the binary masks $D$ in our experiments, which is a commonly used approach in causal representation learning.

\textbf{Training details.} During the generalization process, we use epsilon-greedy to balance the exploration-exploitation trade-off, and take straight-through gradients through the sampled representations for model estimation. Since the actions are always discrete, we adopt the REINFORCE gradients \citep{williams-1992-simple} with Adam optimizer \citep{kingma-2014-adam} for policy learning.

\textbf{Steps for distribution shifts detection.} Empirically, we set the maximum training steps for distribution shift detection as: 1k in Simulation, 2k in CartPole, 100k in Atari, and 250k in CoinRun.

\textbf{Training cost.} All experiments are conducted using an Nvidia A100 GPU. Training from scratch on the simulated and CartPole environments take less than 4 hours, training on Atari required approximately one day, and training on CoinRun takes about 4 days. 

\subsection{Hyperparameters}
\begin{table}[ht]
    \centering
    \begin{tabular}{lll}
    \hline
    Simulated Environment & Architecture & Hyper Parameters \\ \hline
    Change factor $\theta^{\vs}$  & -  & Uniform, [-1, 1] \\ \hline
    Random noise $\epsilon_t^{\vs}$ & - & Gaussian, $\mathcal{N}(0, 0.2I)$ \\ \hline
    \multicolumn{1}{l}{\multirow{3}{*}{Reward function $g^r$}} & Dense & $128$, he uniform, relu \\
    \multicolumn{1}{l}{}                        & Dense & $64$,~~ he uniform, relu \\
    \multicolumn{1}{l}{}                        & Dense & $1$, \quad glorot uniform \\ \hline
    Transition function $g^{\vs}$ & Dense & $4$, \quad glorot uniform, tanh \\ \hline
    Observation function $g^o$    & Dense & $128$, glorot uniform \\ \hline
    \end{tabular}
    \caption{\small{Architecture and hyperparameters for the simulated environment.}}
    \label{tab:archi_sim}
\end{table}
\begin{table}[ht]
    \centering
    \begin{tabular}{lcc}
    \hline
    Hyper Parameters & Values in CartPole & Values in CoinRun and Atari \\ \hline    
    Action repeat & 1 & 4 \\ \hline
    Batch size & 20 & 16\\ \hline
    Imagination horizon & 8 & 15 \\ \hline
    Sequence length & 30 & 64 \\ \hline
    Size of $\theta$  & 2 & 20 \\ \hline
    Size of $h_t$ & 30 & 512 \\ \hline
    Size of $z_t$ & 4 & 32 \\ \hline    
    Size of hidden nodes & 100 & 512 \\ \hline
    Size of hidden layers & 2 & 2 \\ \hline
    Regularization terms $\lambda_{\text{KL}}, \lambda_{\text{reg}}$ & 0.02 & 0.1 \\ \hline
    \end{tabular}
    \caption{\small{Hyperparameters of CSR for CartPole, CoinRun and Atari games.}}
    \label{tab:archi_cartpole}
\end{table}
\begin{table}[ht]
    \centering
    \begin{tabular}{lcc}
    \hline
    Game & Modes & Difficulties \\ \hline    
    Alien & [0, 1, 2, 3] & [0, 1, 2, 3] \\ \hline
    Bank Heist & [0, 4, 8, 12, 16, 20, 24, 28] & [0, 1, 2, 3]\\ \hline
    Crazy Climber & [0, 1, 2, 3] & [0, 1] \\ \hline
    Gopher & [0, 2] & [0, 1] \\ \hline
    Pong & [0, 1] & [0, 1] \\ \hline
    \end{tabular}
    \caption{\small{Available modes and difficulties in each game of our Atari experiments.}}
    \label{tab:mode_in_atari}
\end{table}

\subsection{Detailed descriptions of the environments}\label{sec:ape_env_des}
In this section, we provide detailed descriptions of the construction of these environments and present additional experimental results. For simulated experiments, we generate synthetic datasets that satisfy the two scenarios with different types of environmental changes. For CartPole, we consider distribution shifts in the task domains with different gravity or cart mass, and space variations by adding cart friction as the new state variable and additional force values for action expansion. For Atari games, we design the experiments by generating tasks with different game mode and difficulty levels. Such mode and difficulty switches lead to different consequences that changes the latent game dynamics or introduces new actions into the environment \citep{machado-2018-revisiting,farebrother-2018-generalization}. For CoinRun, we train agents from easy levels and generalize them to difficulty levels where there could be new enemies that have never occurred before. 

\subsubsection{Simulated Environment}\label{ape:sim}
We construct the simulated environment based on the following POMDP framework:
\begin{equation}
    \begin{aligned}
        \vs_{1} &\sim \mathcal{N}(0, I_0), \\
        o_t &= g^o(\vs_{t-1}), \\
        s_t &= g^{\vs}(\theta^{\vs}, \vs_{t-1}, a_{t-1}) + \epsilon_t^{\vs}, \quad \epsilon_t^{\vs} \sim \mathcal{N}(0, I_{\epsilon})\\
        r_t &= g^r(\vs_{t-1}),
    \end{aligned}    
\end{equation}
where $\vs_{1}$ and $\epsilon_t^{\vs}$ are sampled from Gaussian distributions, and functions $g^o$, $g^{\vs}$, and $g^r$ are implemented using MLPs. To simulate scenarios of distribution shifts, we generate random values for $\theta^{\vs}$ in different tasks. To model changes in the state space $\mathcal{S}$, we randomly augment it with $n$ dimensions, where $n$ is uniformly sampled from the range [3, 7]. Moreover, to introduce structural constraints into the data generation process, we initialize the network parameters for $g^o$, $g^{\vs}$, and $g^r$, by randomly dropping them out with a probability of $0.5$. The network weights then remain constant throughout the learning process. For each task, agents are allowed to collect data over 100 episodes, each consisting of $256$ time steps. Table \ref{tab:archi_sim} provides the corresponding network architecture and hyperparameters.

\subsubsection{CartPole Environment}
Based on the conclusions in \citet{florian-2007-correct}, we modify the CartPole game to introduce changes in the distribution and state space. Specifically, for Task 1 and Task 2, the transition processes adhere to the following formulas:
\begin{equation}
    \begin{gathered}
    \ddot{\psi}=\frac{g \sin \psi+\cos \psi\left(\frac{-F-m_p l \dot{\psi}^2\sin \psi}{m_c+m_p}\right)-\frac{\mu_p \dot{\psi}}{m_p l}}{l\left(\frac{4}{3}-\frac{m_p \cos ^2 \psi}{m_c+m_p}\right)} \\
    \ddot{x}=\frac{F+m_p l\left(\dot{\psi}^2 \sin \psi-\ddot{\psi} \cos \psi\right)}{m_c+m_p},
    \end{gathered}
    \label{eq:cartpole_task0}
\end{equation}  
where the parameters used are the same as those defined in Section 2 of \citet{florian-2007-correct}, except that $\psi$ is used in place of $\theta$. By altering the values of $m_c$ and $g$, we can simulate distribution shifts. For Task 3, we introduce the friction between tha cart and the track into the game, denoted as $\mu_c$, thus altering Eq. (\ref{eq:cartpole_task0}) to Eq. (21) and Eq. (22) in \citet{florian-2007-correct}, which is:
\begin{equation}
    \begin{gathered}
    \ddot{\psi}=\frac{g \sin \psi+\cos \psi\left\{\frac{-F-m_p l \dot{\psi}^2\left[\sin \psi+\mu_c \operatorname{sgn}\left(N_c \dot{x}\right) \cos \psi\right]}{m_c+m_p}+\mu_c g \operatorname{sgn}\left(N_c \dot{x}\right)\right\}-\frac{\mu_p \dot{\psi}}{m_p l}}{l\left\{\frac{4}{3}-\frac{m_p \cos \psi}{m_c+m_p}\left[\cos \psi-\mu_c \operatorname{sgn}\left(N_c \dot{x}\right)\right]\right\}} \\
    \ddot{x}=\frac{F+m_p l\left(\dot{\psi}^2 \sin \psi-\ddot{\psi} \cos \psi\right)-\mu_c N_c \operatorname{sgn}\left(N_c \dot{x}\right)}{m_c+m_p}.
    \end{gathered}
\end{equation}
Note that $\mu_c$ varies cyclically every $5$ steps among \{3e-4, 5e-4, 7e-4\}, so that the agent must continually monitor it throughout the process to achieve higher and stable rewards, This helps us assess whether the agent has detected the newly introduced variable. Additionally, we also visualize these changes in the image inputs; Fig. \ref{fig:cartpole_fric_illu} presents examples under different friction coefficients. In the typical CartPole setup, the action values represent the direction of the force $F$. Specifically, $0$ denotes a leftward force, while $1$ indicates a rightward force, with a default magnitude of $F_{\text{mag}} = 10$. In Task 4, we have expanded the possible values of $F$ to include $\{0.5 \times F_{\text{mag}}, F_{\text{mag}}, 1.5 \times F_{\text{mag}}\}$, thereby extending the action dimension to $6$. Our implementation is built upon Dreamer \citep{hafner-2020-mastering}, Table \ref{tab:archi_cartpole} lists the hyperparameters that are specifically set in our experiments. Fig. \ref{fig:exp_cartpole} illustrates the corresponding training results. Moreover, Fig. \ref{fig:cartpole_recon_illu} shows a comparison of the reconstruction effects of different world models across the three tasks. Note that the transition model is an RSSM in our implementation. Consequently, we divide the state  $\vs_t$ into a deterministic state $\vh_t$ and a stochastic state $\vz_t$. With this setup, the identified structural matrices in our experiment is shown in Fig. \ref{fig:ape_exp_structure}.

\begin{figure}
    \centering
    \includegraphics[width=0.8\linewidth]{./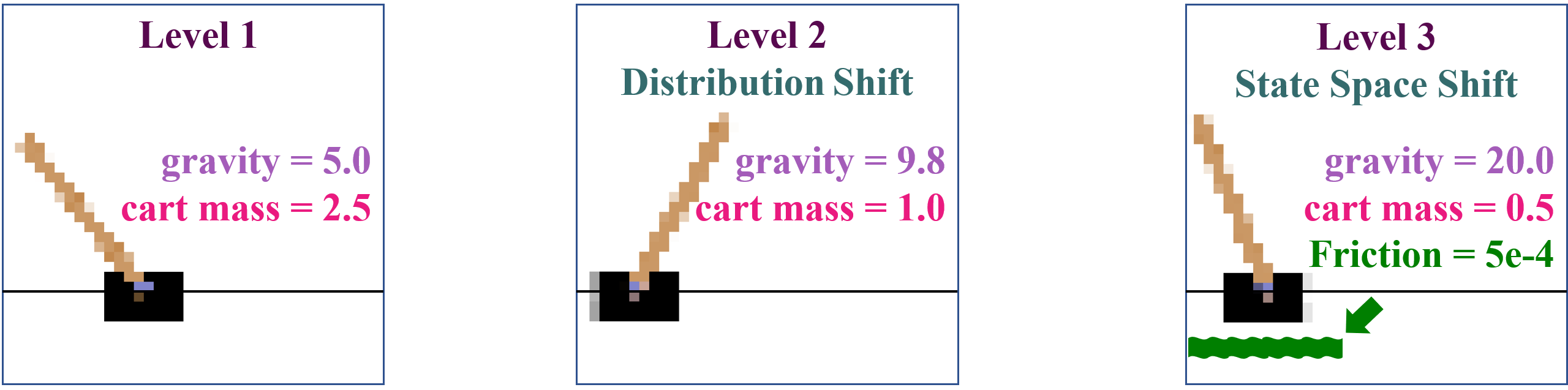}
    \caption{\small{An illustration of the CartPole environment.}}
    \label{fig:cartpole_illu}
\end{figure}
\begin{figure}
    \centering
    \includegraphics[width=0.75\linewidth]{./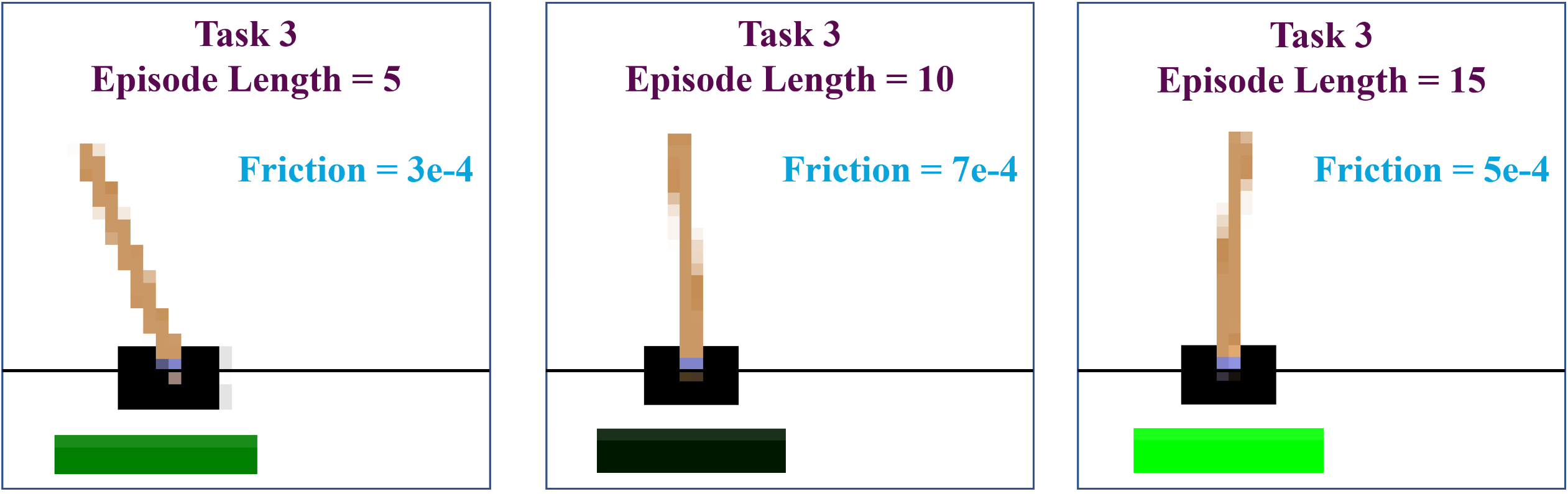}
    \caption{\small{An illustration of the CartPole game under different friction coefficients.}}
    \label{fig:cartpole_fric_illu}
\end{figure}

\begin{figure}
    \centering
    \includegraphics[width=\linewidth]{./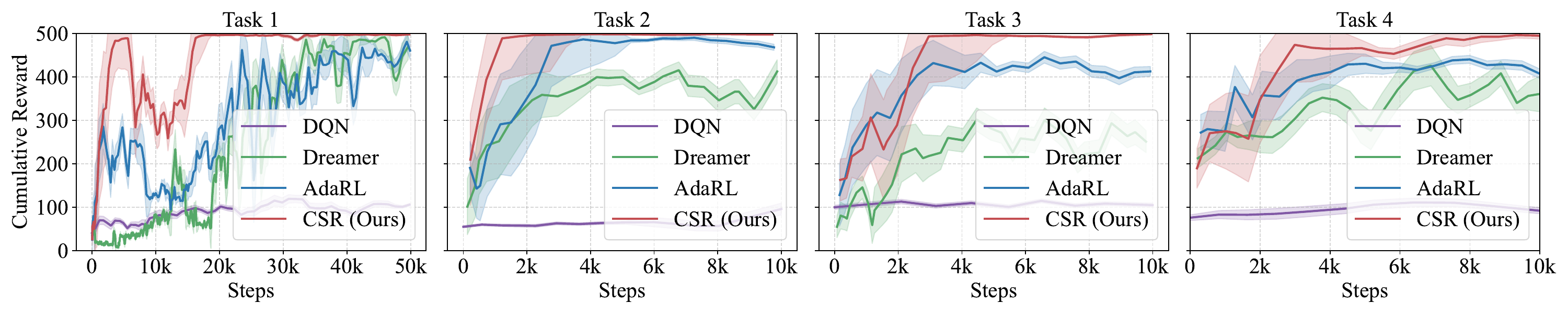}
    \caption{\small{Training results of our CartPole experiments.}}
    \label{fig:exp_cartpole}
\end{figure}
\begin{figure}
    \centering
    \includegraphics[width=\linewidth]{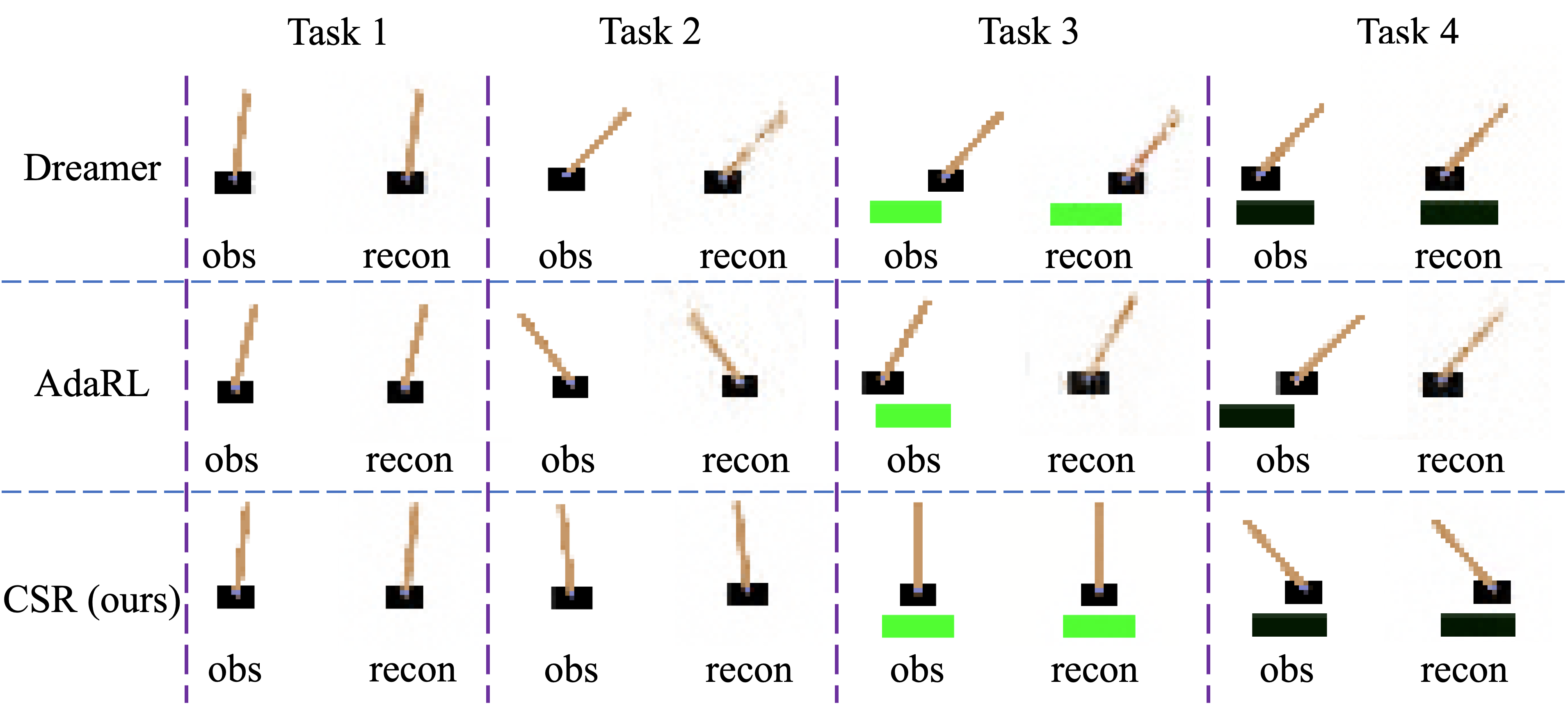}
    \caption{\small{The reconstructed observations of different world models in CartPole.}}
    \label{fig:cartpole_recon_illu}
\end{figure}

\begin{figure}
    \centering
    \subfigure[]{
        \includegraphics[width=\linewidth]{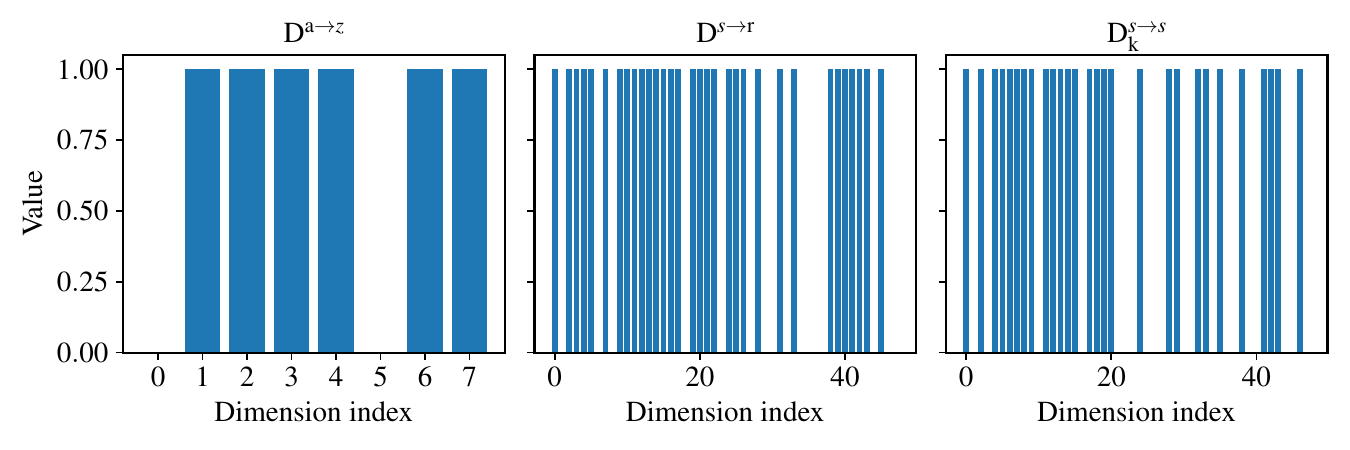}\label{fig:ape_exp_structure}
    }
    \subfigure[]{
        \includegraphics[width=\linewidth]{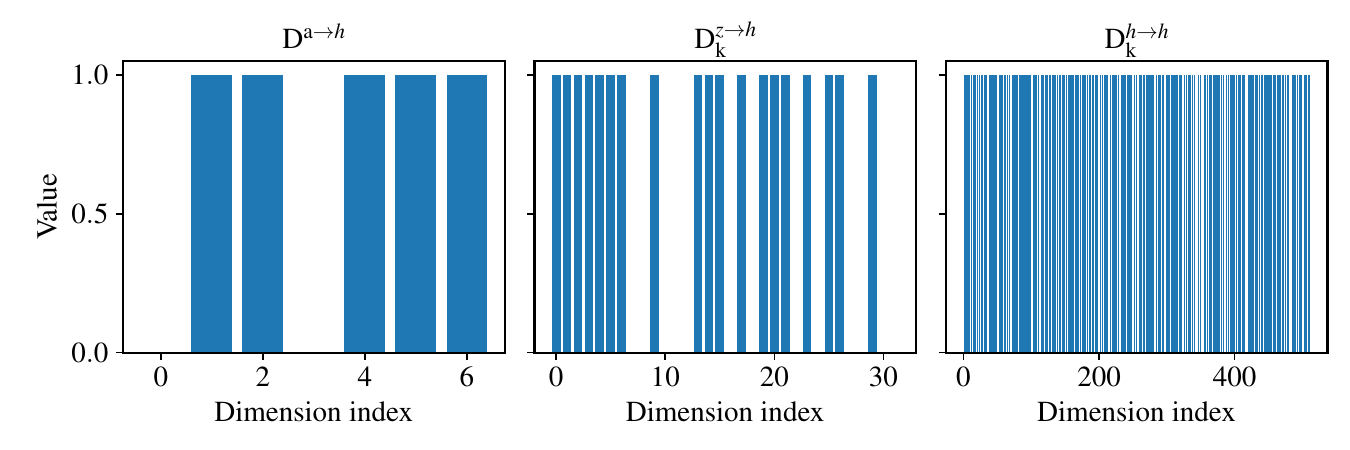}\label{fig:ape_coinrun_structure}
    }
    \subfigure[]{
        \includegraphics[width=\linewidth]{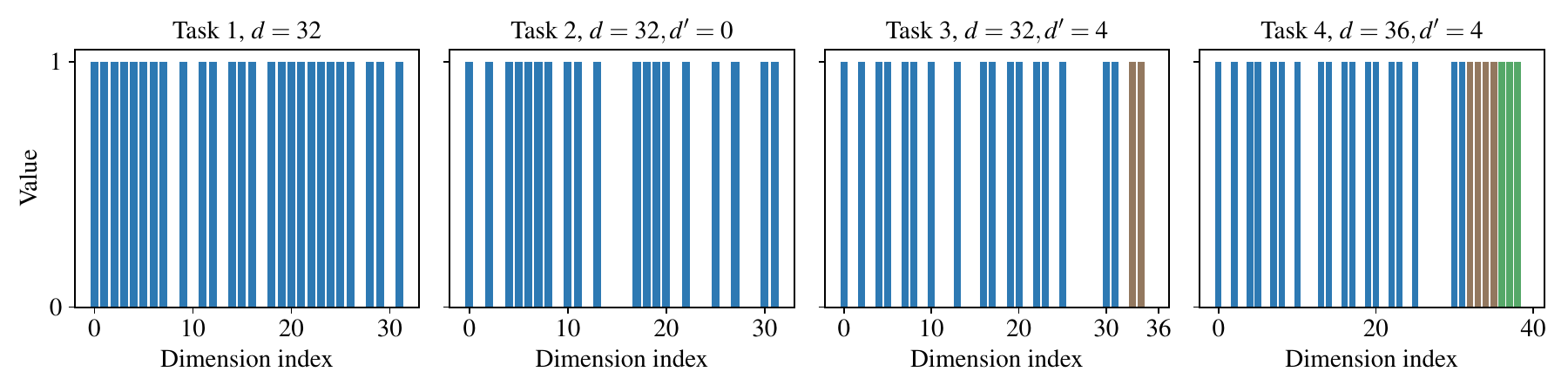}\label{fig:ape_atari_structure}
    }
    \caption{\small{\textbf{Estimated causal structural matrices in the experiments:} (a) CartPole Task 1; (b) High difficulty CoinRun games; (c) An illustration of the state space in Pong and how $D_k^{z\to h}$ evolves across tasks. Here, $d$ represents the size of the state space, with $d' = 0$ indicating distribution shifts, and non-zero $d'$ signifying state space expansion.}}
\end{figure}

\subsubsection{CoinRun Environment}\label{sec:ape_coinrun}
CoinRun serves as an apt benchmark for studying generalization, owing to its simplicity and sufficient level diversity. Each level features a difficulty coefficient ranging from 1 to 3. Following \cite{cobbe-2019-quantifying}, we utilize a set of 500 levels as source tasks and generalize the agents to target tasks with higher difficulty levels outside these 500 levels. We maintain all environmental parameters consistent with those reported in \cite{cobbe-2019-quantifying}. For the world models of CoinRun and Atari games, we employ the same hyperparameters, which are listed in Table \ref{tab:archi_cartpole}. Fig. \ref{fig:coinrun_recon} visualizes the reconstructions generated by various methods when generalizing to high-difficulty CoinRun games, where the first row displays the ground truth observations, the second row illustrates the model-generated reconstructions, and the third row highlights the differences between the ground truth and the reconstructions. We find that our proposed CSR method effectively captures newly emerged enemies, which the baseline methods fail to do. Moreover, Fig. \ref{fig:ape_coinrun_structure} presents the estimated structural matrices.
\begin{figure}
    \centering
    \includegraphics[width=.9\linewidth]{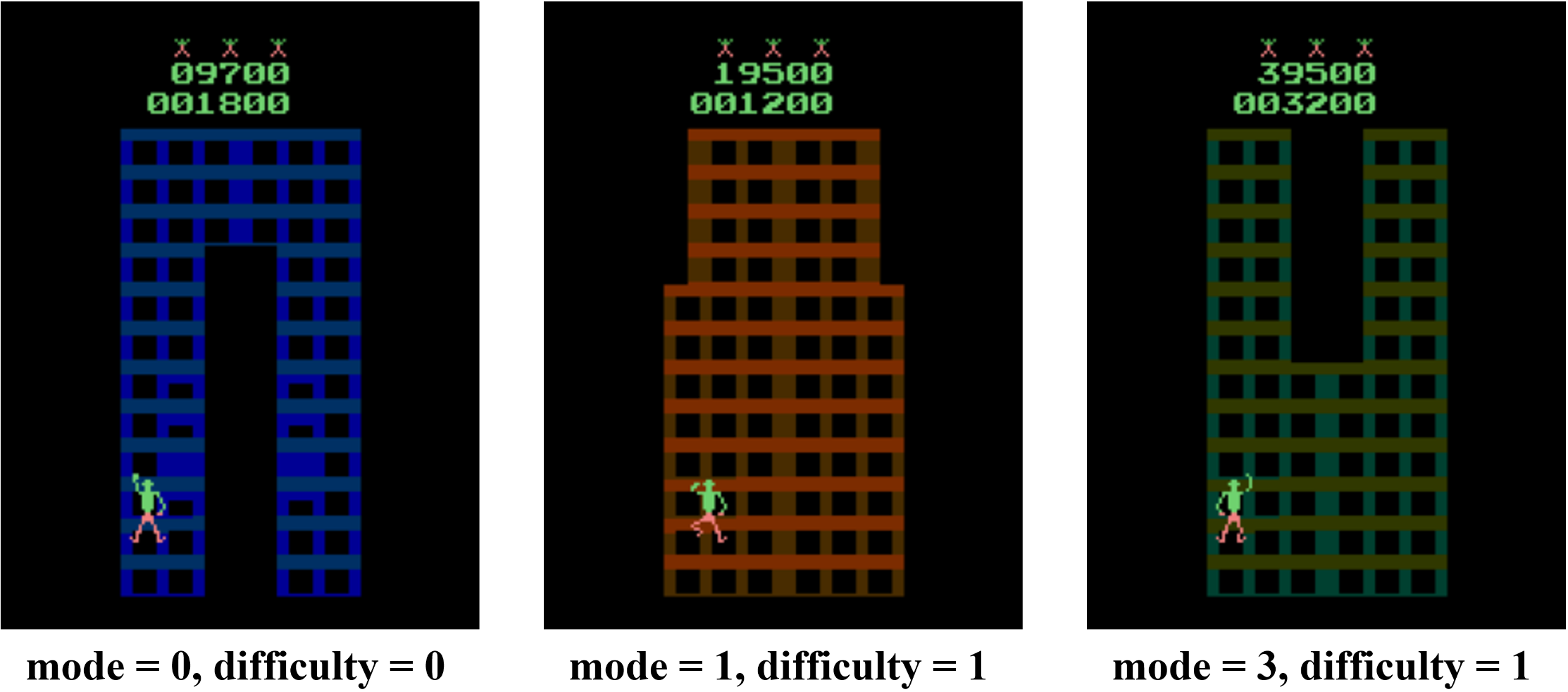}
    \caption{\small{Different modes of the game Crazy Climber.}}
    \label{fig:atari_illu}
\end{figure}
\begin{figure}
    \begin{minipage}{0.5\linewidth}
        \centering
        \subfigure[]{
            \includegraphics[width=.28\linewidth]{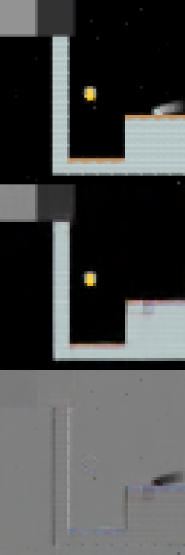}
        }
        \subfigure[]{
            \includegraphics[width=.28\linewidth]{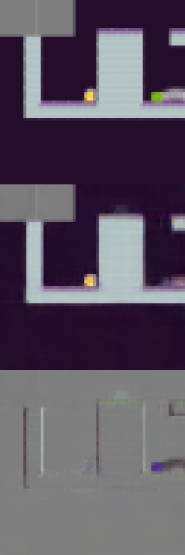}
        }
        \subfigure[]{
            \includegraphics[width=.28\linewidth]{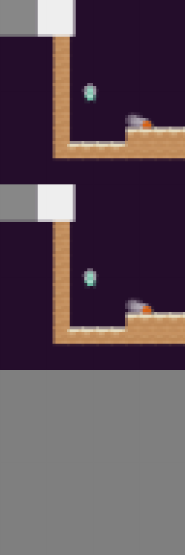}
        }
        \caption{\small{Visualization of reconstructions when generalizing to high-difficulty CoinRun games using various methods: (a) Dreamer; (b) AdaRL; (c) CSR (ours).}}
        \label{fig:coinrun_recon}
    \end{minipage}
    \hspace{5mm}
    \begin{minipage}{0.4\linewidth}
        \centering
        \subfigure[]{
            \includegraphics[width=.35\linewidth]{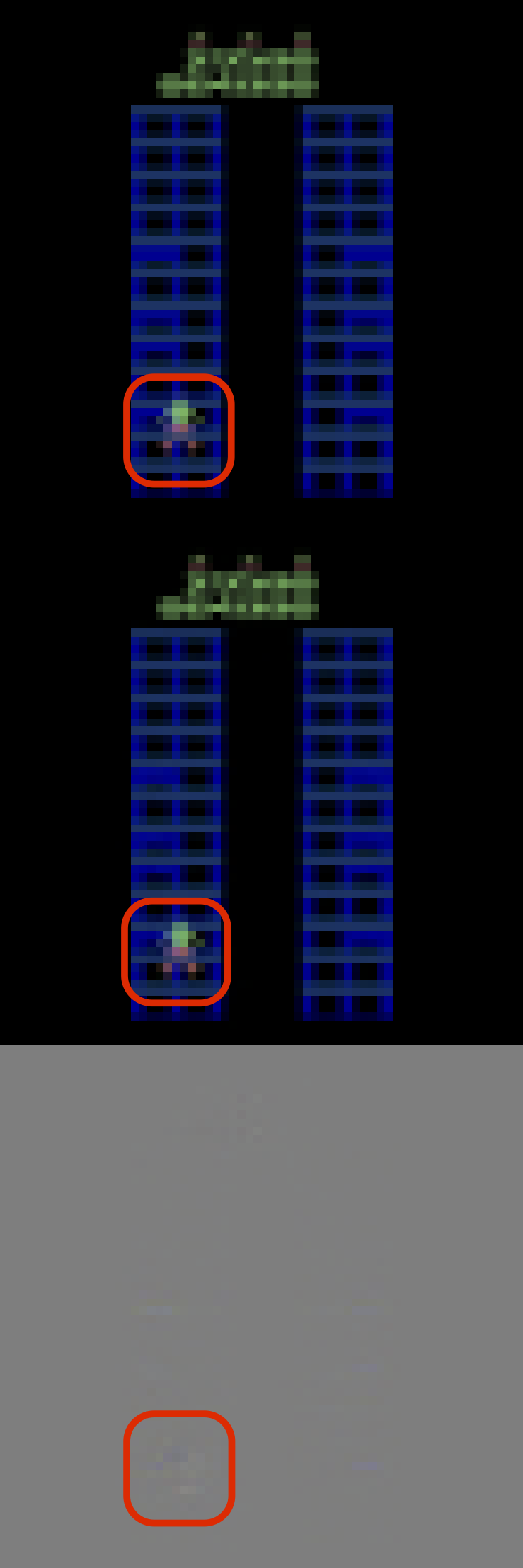}
        }
        \subfigure[]{
            \includegraphics[width=.35\linewidth]{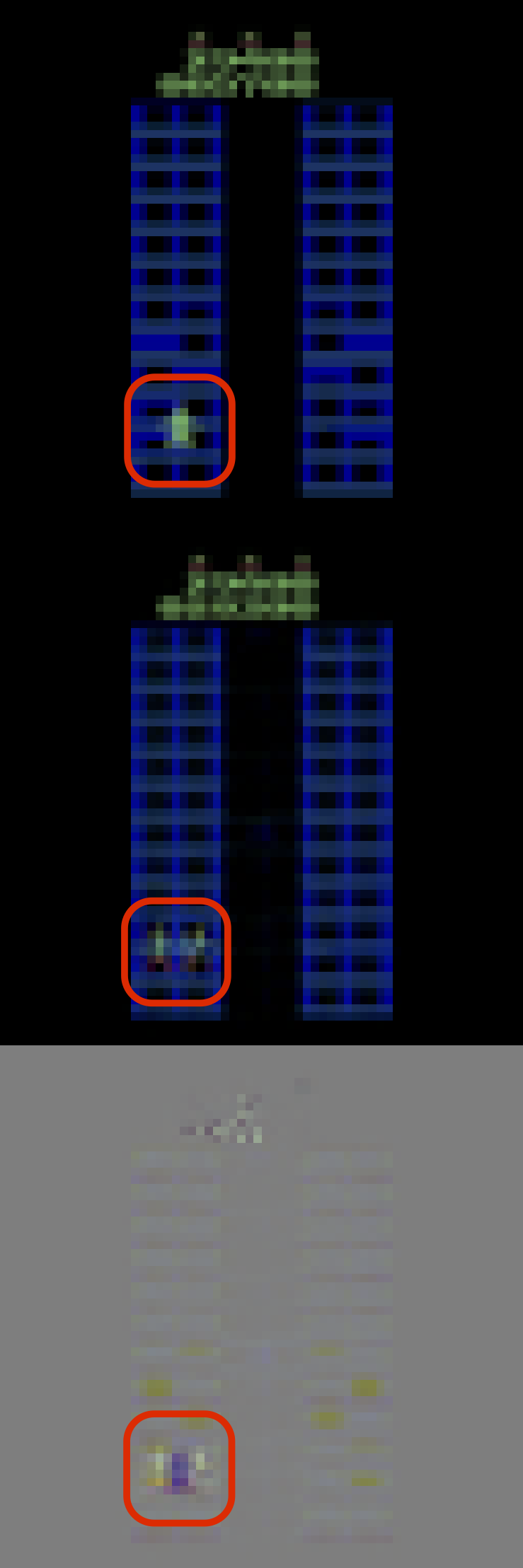}
        }
        \caption{\small{Visualized reconstructions in Atari target games using CSR with: (a) Full state representations; (b) Original state representations before expansion.}}
        \label{fig:ape_atari_expand_study}
    \end{minipage}
\end{figure}

\subsubsection{Atari Environment}\label{sec:ape_atari}
Atari serves as a classic benchmark in reinforcement learning, with most studies using it to evaluate the performance of proposed methods on fixed tasks. However, as mentioned in \citet{machado-2018-revisiting}, many Atari tasks are quite similar, also allowing for the assessment of a reinforcement learning method's generalization capabilities. Specifically, within the same game, we can adjust its modes and difficulty levels to alter the game dynamics. Although the goals of the game remain unchanged, increasing the complexity of modes and difficulty necessitates consideration of more variables, thus posing challenges for knowledge generalization. 

According to Table 10 in \citet{machado-2018-revisiting}, we select five games that feature different modes and levels of difficulty, and set the task sequence as four in our experiments. The corresponding modes and difficulties available in these five games are given in Table \ref{tab:mode_in_atari}. Fig. \ref{fig:atari_illu} gives an example in Crazy Climber. For Task 1, the agent is trained from scratch. For Tasks 2 to 4, different methods are employed to maximize the generalization of acquired knowledge to new tasks. Fig. \ref{fig:ape_exp_atari_alien} to Fig. \ref{fig:ape_exp_atari_pong} illustrates the training returns in these five games with different methods, Fig. \ref{fig:ape_exp_atari} are the average generalization performances, and Fig. \ref{fig:atari_recon} are the corresponding reconstructions. Besides, we also illustrate how the structural matrices evolve during model adaptation in Fig. \ref{fig:ape_atari_structure}.

Note that changes in the latent state space in Atari games are not as straightforward as previous tasks, because variations in mode and difficulty typically influence latent state transitions rather than introducing new entities which are directly observable in the game environment. Hence, to further explore what the newly added variables represent, we first deactivate them and deduce their representations within the model, and then generate the reconstructions. Fig. \ref{fig:ape_atari_expand_study} displays the reconstructed observations using the full state representations and after removing the newly added variables. By comparing the observations before and after removal, it is evident that although the model can still reconstruct most of the buildings, it loses the precise information about the climber (the colorful person in the lower left corner of the image). Such disappearance in the reconstructions demonstrates the success of introducing the newly added variables in capturing the changing aspects in the latent state transitions. This further illustrates that CSR is also capable of handling general generalization tasks, even when the target domains do not exhibit significant space variations.

\subsection{Additional experimental results}
To investigate the correlation between a task and the mechanism selected during adaptation, we further visualize the evolution of $L_{\text{pred}}$ and the ratio between the two modes of CSR (distribution shift detection and space expansion) across all experimental environments in Fig. \ref{fig:ape_relation}. Note that for Atari games, averaging across environments does not yield meaningful insights. Therefore, we use Pong as a representative example, and set (mode, difficulty) sequentially across four tasks as (0,0), (0,1), (1,1), and (1,0). As seen in Task 2 of the simulated environment and CartPole, when only distribution shifts occur, $L_{\text{pred}}$ typically remains low. In contrast, the emergence of new variables, for example in Task 3 of Pong, consistently causes a substantial increase in prediction error—often by an order of magnitude or more. Besides, we also observe that the agent always opts to expand its causal world model to effectively address the added complexity and variability arising from the transition from simple to high-difficulty tasks in CoinRun. These observations collectively demonstrate that $L_{\text{pred}}$ serves as a reliable indicator of adaptation in our experiments.

\begin{figure}
    \centering
    \subfigure[]{
        \includegraphics[width=\linewidth]{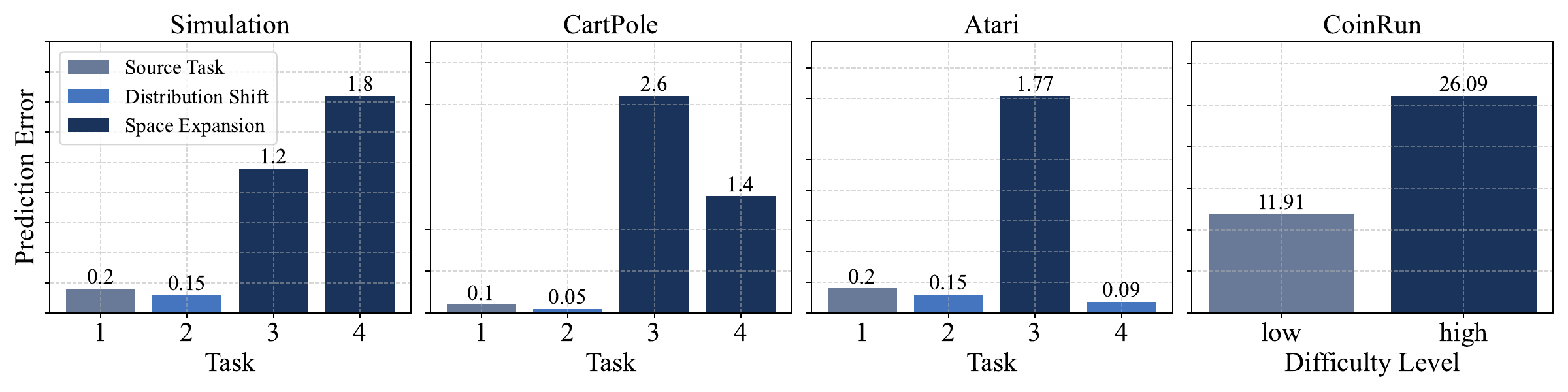}
    }
    \subfigure[]{
        \includegraphics[width=\linewidth]{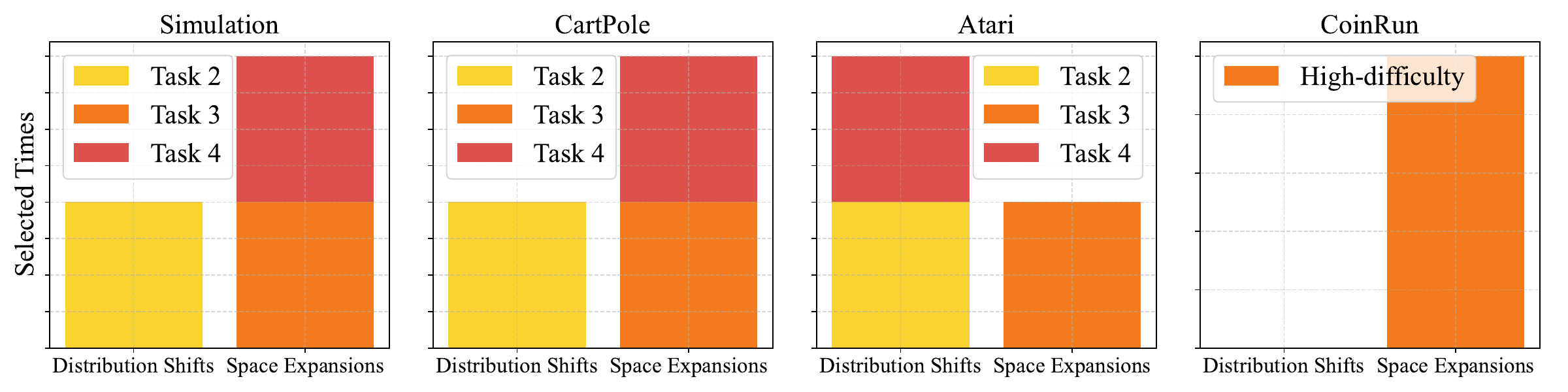}
    }
    \caption{\small{Visualization of (a) Evolution of prediction error $L_{\text{pred}}$ at the distribution shift detection step of CSR and (b) The ratio between the two modes of CSR across experimental environments.}}
    \label{fig:ape_relation}
\end{figure}

\section{A discussion on future directions}
While CSR presents a promising direction for extending RL to broader scenarios and has achieved meaningful progress, the pursuit of generalizable and interpretable RL still faces enduring challenges. In this section, we outline several potential research directions to inspire further innovative studies toward general artificial intelligence.

\textbf{Dynamic Graphs.} CSR focuses on domain generalization but has yet to address the challenges posed by nonstationary changes both over time and across tasks \citep{feng-2022-factored}. It remains an important direction for future research to develop methods that can automatically detect and model such changes to improve adaptability and robustness.

\textbf{Misalignment Problems.} The goal misalignment problem, also known as shortcut behavior, refers to a situation where an agent's performance appears aligned with the target goal but is actually driven by a side goal \citep{di-2022-goal,delfosse-2024-interpretable}. This often occurs because the target goal and the side goal share a common causal variable, defined as \textit{Forks} in causal learning \citep{spirtes-2001-causation,pearl-2018-book}. Consequently, learning a causal world model defined in Eq. (\ref{eq:model}) that captures causal relationships rather than correlations could help mitigate this misalignment issue.

\textbf{Beyond Sequential Settings.} While our current focus is on task adaptation in sequential settings, CSR presents promising applications in continual reinforcement learning (CRL, \citep{khetarpal-2022-towards}), where the agent needs to utilize a replay buffer containing samples from both the current and previous tasks to identify the most similar task for policy transfer, or to address domain-agnostic settings by integrating domain shift detection techniques.

\textbf{Generalization across different games.} Another interesting direction for future research is the investigation of RL methods' ability to generalize across very different games, such as Space Invaders and Demon Attack. These games feature distinct visuals but share similar gameplay and rules. While humans can easily transfer knowledge between these tasks, this remains a challenging feat for artificial intelligence.

\begin{figure}
    \centering
    \subfigure[Dreamer]{
        \includegraphics[width=.15\linewidth]{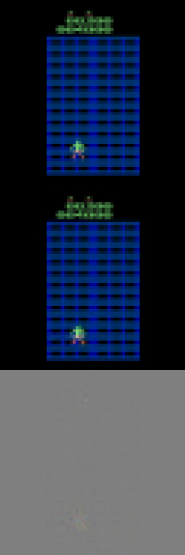}
        \hspace{2mm}
        \includegraphics[width=.15\linewidth]{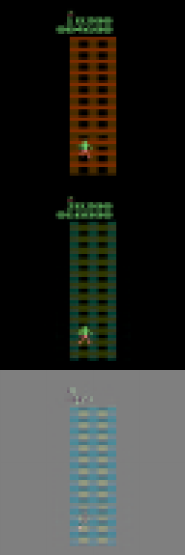}
        \hspace{2mm}
        \includegraphics[width=.15\linewidth]{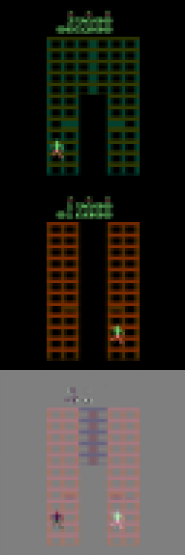}
        \hspace{2mm}
        \includegraphics[width=.15\linewidth]{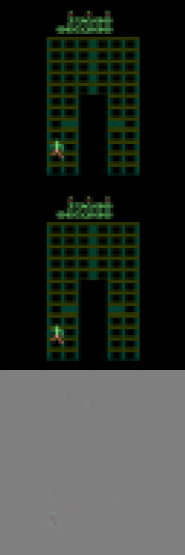}
    }
    \subfigure[AdaRL]{
        \includegraphics[width=.15\linewidth]{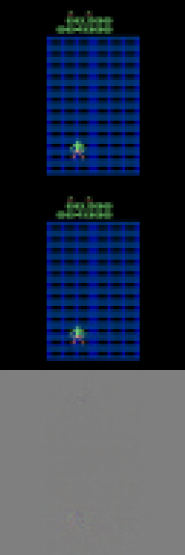}
        \hspace{2mm}
        \includegraphics[width=.15\linewidth]{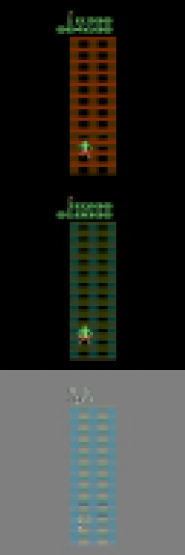}
        \hspace{2mm}
        \includegraphics[width=.15\linewidth]{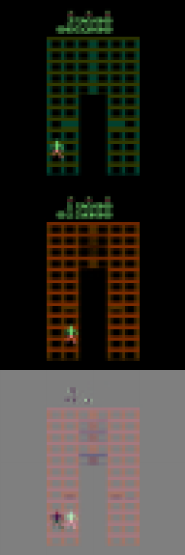}
        \hspace{2mm}
        \includegraphics[width=.15\linewidth]{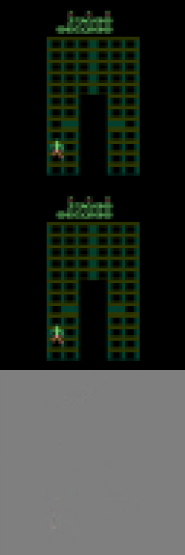}
    }
    \subfigure[CSR (ours)]{
        \includegraphics[width=.15\linewidth]{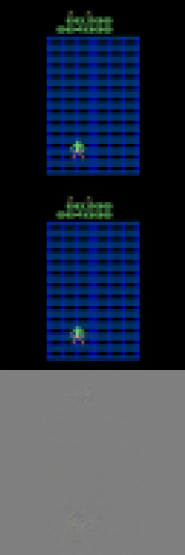}
        \hspace{2mm}
        \includegraphics[width=.15\linewidth]{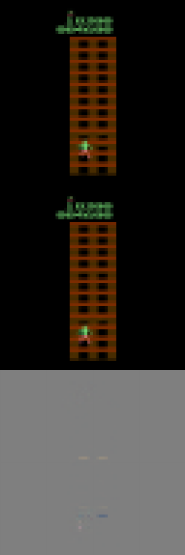}
        \hspace{2mm}
        \includegraphics[width=.15\linewidth]{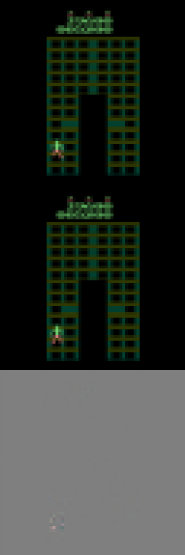}
        \hspace{2mm}
        \includegraphics[width=.15\linewidth]{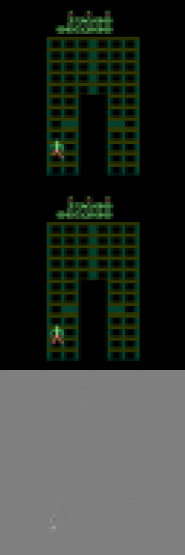}
    }
    \caption{\small{Visualization of reconstructions of various methods in the Atari games.}}
    \label{fig:atari_recon}
\end{figure}
\begin{figure}
    \centering
    \includegraphics[width=\linewidth]{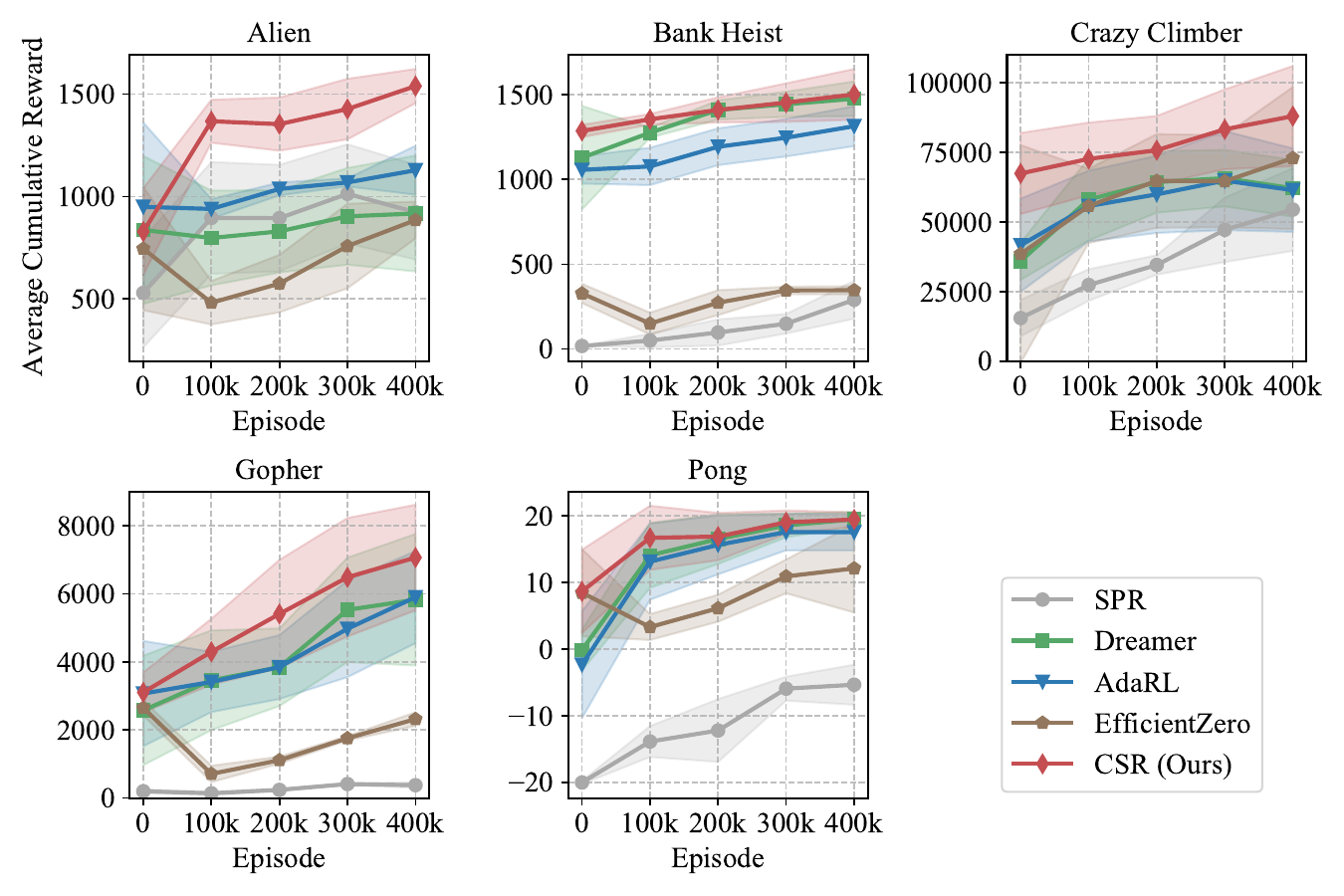}
    \caption{\small{Average generalization performance of different methods in Atari games.}}
    \label{fig:ape_exp_atari}
\end{figure}
\begin{figure}
    \centering
    \includegraphics[width=\linewidth]{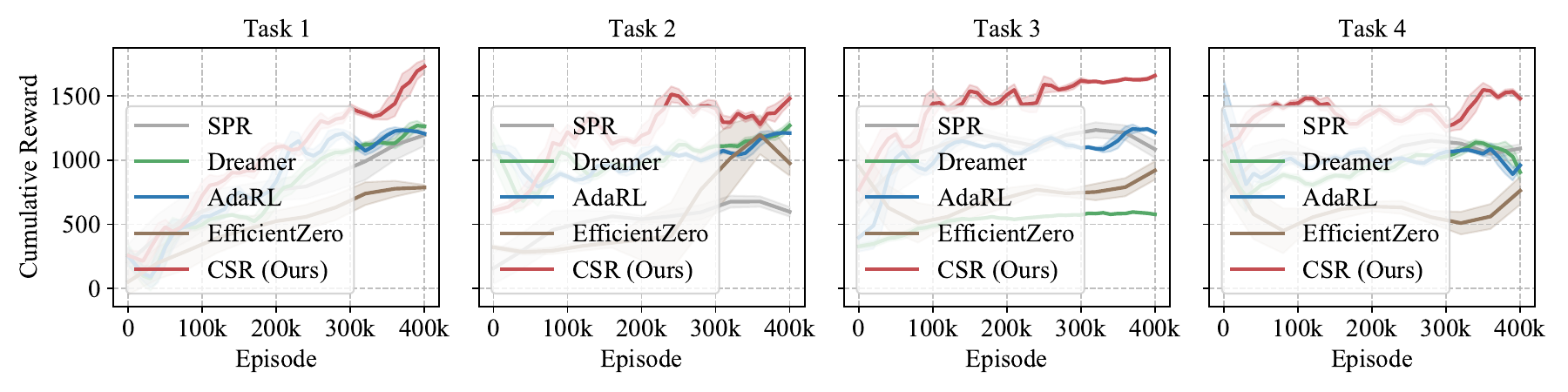}
    \caption{\small{Training results of various methods in game Alien.}}
    \label{fig:ape_exp_atari_alien}
\end{figure}
\begin{figure}
    \centering
    \includegraphics[width=\linewidth]{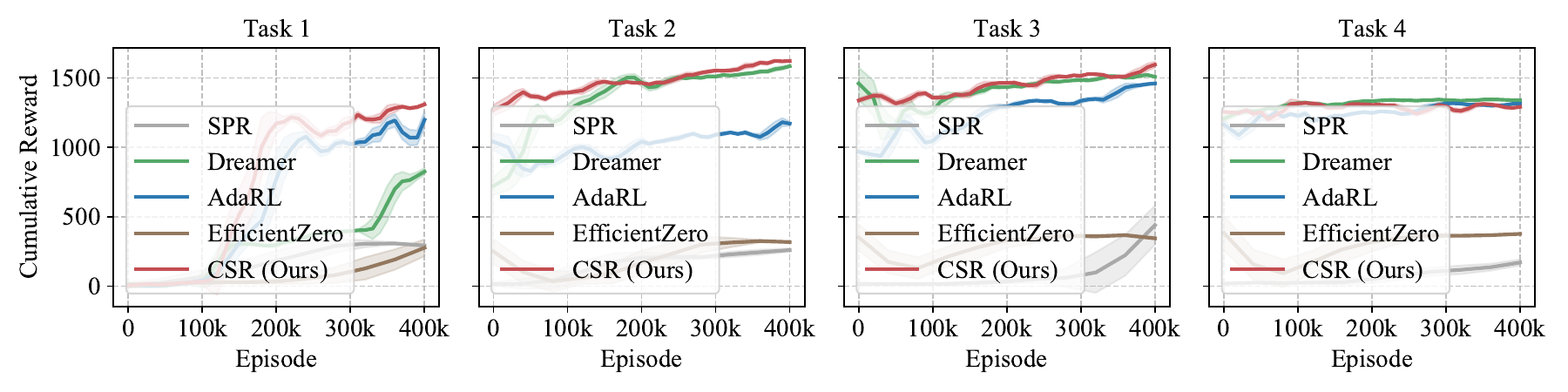}
    \caption{\small{Training results of various methods in game Bank Heist.}}
    \label{fig:ape_exp_atari_bank}
\end{figure}
\begin{figure}
    \centering
    \includegraphics[width=\linewidth]{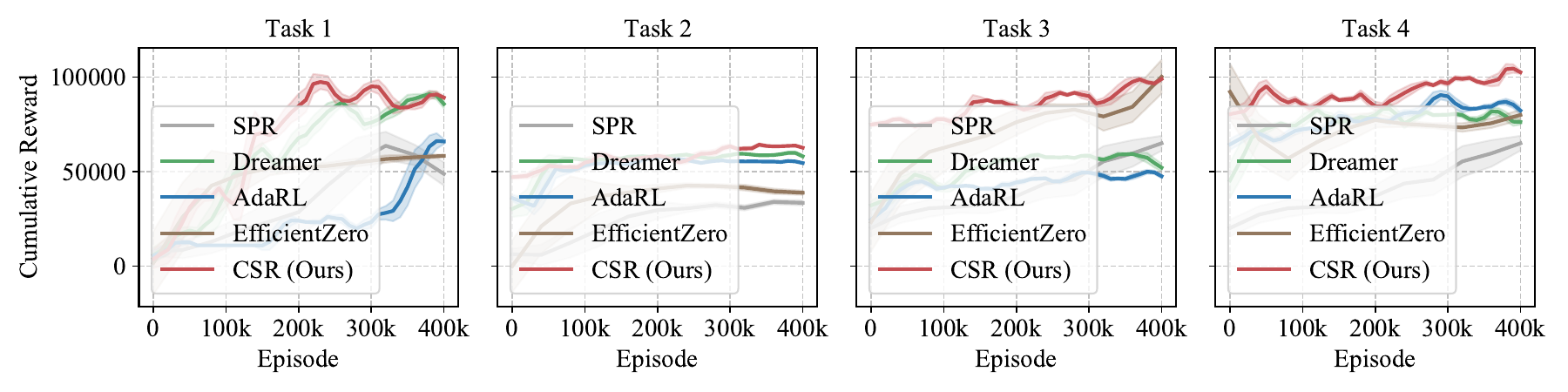}
    \caption{\small{Training results of various methods in game Crazy Climber.}}
    \label{fig:ape_exp_atari_czy}
\end{figure}
\begin{figure}
    \centering
    \includegraphics[width=\linewidth]{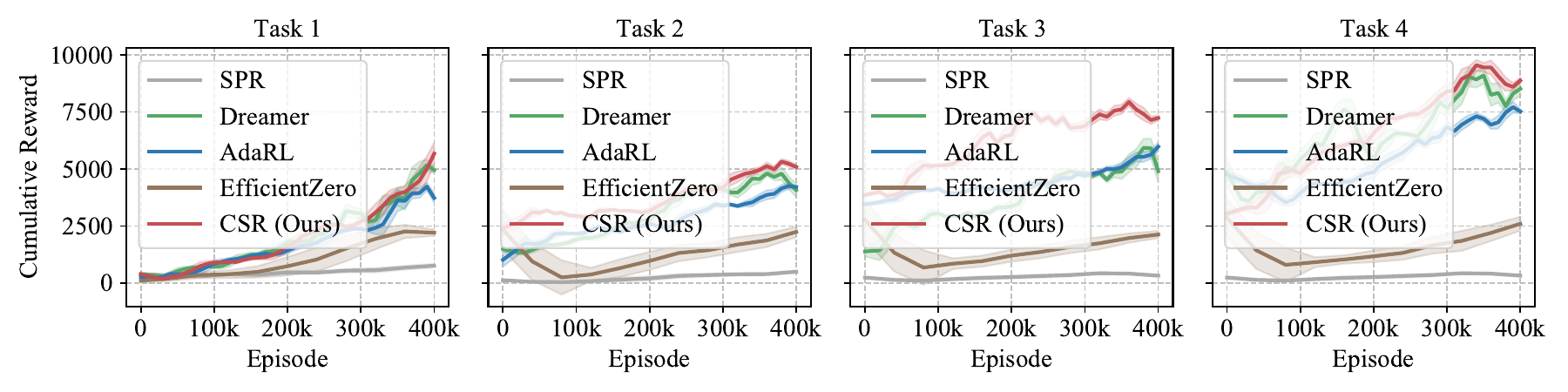}
    \caption{\small{Training results of various methods in game Gopher.}}
    \label{fig:ape_exp_atari_gopher}
\end{figure}
\begin{figure}
    \centering
    \includegraphics[width=\linewidth]{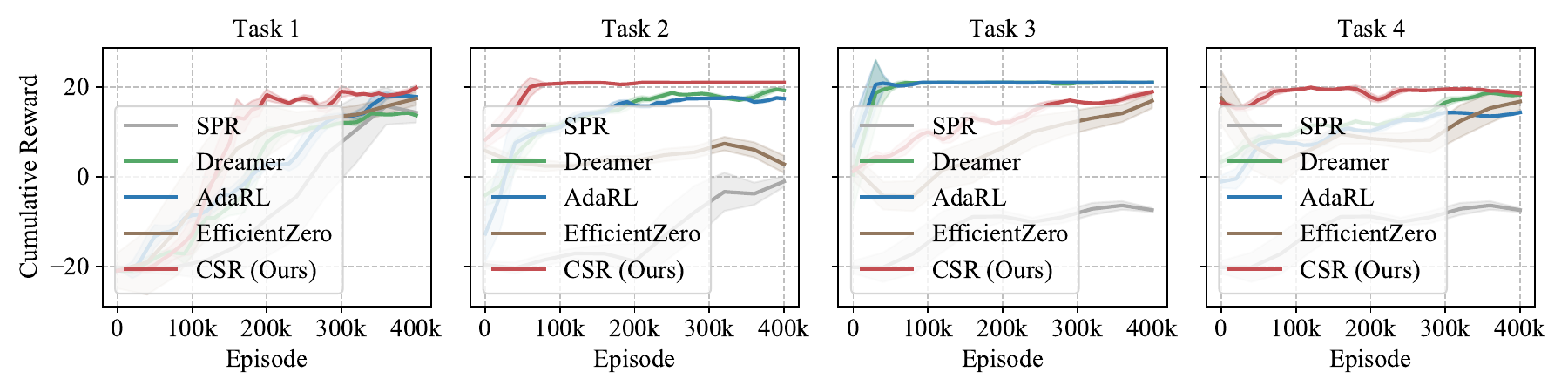}
    \caption{\small{Training results of various methods in game Pong.}}
    \label{fig:ape_exp_atari_pong}
\end{figure}

\end{document}